\documentclass{article}

\usepackage{microtype}
\usepackage{graphicx}
\usepackage{subcaption}
\usepackage{booktabs} % for professional tables
\usepackage{svg}
\usepackage{xifthen}

\usepackage{hyperref}

\usepackage[preprint]{icml2026}
\usepackage{amsmath}
\usepackage{amssymb}
\usepackage{mathtools}
\usepackage{amsthm}

\usepackage[capitalize,noabbrev]{cleveref}

\theoremstyle{plain}
\newtheorem{theorem}{Theorem}[section]
\newtheorem{proposition}[theorem]{Proposition}
\newtheorem{lemma}[theorem]{Lemma}
\newtheorem{corollary}[theorem]{Corollary}
\theoremstyle{definition}
\newtheorem{definition}[theorem]{Definition}

\theoremstyle{remark}
\newtheorem{remark}[theorem]{Remark}

\newcommand{\blue}[1]{\textcolor{blue}{#1}}

\newcommand{\inner}[2]{\langle #1, #2 \rangle}
\newcommand{\loss}{\mathcal{L}}
\renewcommand{\l}{\ell}

\icmltitlerunning{Why Deep Jacobian Spectra Separate}

\begin{document}

\twocolumn[
  \icmltitle{Why Deep Jacobian Spectra Separate: Depth-Induced Scaling and Singular-Vector Alignment}

  \icmlsetsymbol{equal}{*}

  \begin{icmlauthorlist}
    \icmlauthor{Nathana\"{e}l Haas}{cril,lmpa}
    \icmlauthor{Fran\c{c}ois Gatine}{imj}
    \icmlauthor{Augustin M Cosse}{lmpa}
    \icmlauthor{Zied Bouraoui}{cril}
    % \icmlauthor{Firstname5 Lastname5}{yyy}
    % \icmlauthor{Firstname6 Lastname6}{sch,yyy,comp}
    % \icmlauthor{Firstname7 Lastname7}{comp}
    % %\icmlauthor{}{sch}
    % \icmlauthor{Firstname8 Lastname8}{sch}
    % \icmlauthor{Firstname8 Lastname8}{yyy,comp}
    % %\icmlauthor{}{sch}
    % %\icmlauthor{}{sch}
  \end{icmlauthorlist}

  \icmlaffiliation{cril}{CRIL UMR 8188, Universit\'e d'Artois, CNRS, France}
  \icmlaffiliation{lmpa}{LMPA, Universit\'e du Littoral C\^{o}te d’Opale}
  \icmlaffiliation{imj}{IMJ-PRG, Sorbonne Universit\'e }

  \icmlcorrespondingauthor{Nathana\"{e}l Haas}{haas@cril.fr}
  \icmlcorrespondingauthor{Fran\c{c}ois Gatine}{francois.gatine@imj-prg.fr}

  % You may provide any keywords that you find helpful for describing your
  % paper; these are used to populate the "keywords" metadata in the PDF but
  % will not be shown in the document
  \icmlkeywords{Machine Learning, ICML, Deep neural networks, Jacobian matrix, Lyapunov exponents, Spectral decay, Implicit bias}

  \vskip 0.3in
]

% this must go after the closing bracket ] following \twocolumn[ ...

% This command actually creates the footnote in the first column listing the
% affiliations and the copyright notice. The command takes one argument, which
% is text to display at the start of the footnote. The \icmlEqualContribution
% command is standard text for equal contributionPreliminary. Remove it (just {}) if you
% do not need this facility.

% Use ONE of the following lines. DO NOT remove the command.
% If you have no special notice, KEEP empty braces:
\printAffiliationsAndNotice{}  % no special notice (required even if empty)
% Or, if applicable, use the standard equal contribution text:
% \printAffiliationsAndNotice{\icmlEqualContribution}

%{\color{magenta}My recomendation: Hide all the complicated functions that appear in the statements as compact expressions (and provide the details in the appendix). Only highlight what is necessary. E.g. in the expression of the $\gamma_i$ you can write $\ln(\sqrt{2}\sigma ) + F(p,n)$ ``where $F(p,n)$ is a function that depends on ... but not on ..."}

\begin{abstract}
Understanding why gradient-based training in deep networks exhibits strong implicit bias remains challenging, in part because tractable singular-value dynamics are typically available only for balanced deep linear models. We propose an alternative route based on two theoretically grounded and empirically testable signatures of deep Jacobians: depth-induced exponential scaling of ordered singular values and strong spectral separation. Adopting a fixed-gates view of piecewise-linear networks, where Jacobians reduce to products of masked linear maps within a single activation region, we prove the existence of Lyapunov exponents governing the top singular values at initialization, give closed-form expressions in a tractable masked model, and quantify finite-depth corrections. We further show that sufficiently strong separation forces singular-vector alignment in matrix products, yielding an approximately shared singular basis for intermediate Jacobians. Together, these results motivate an approximation regime in which singular-value dynamics become effectively decoupled, mirroring classical balanced deep-linear analyses without requiring balancing. Experiments in fixed-gates settings validate the predicted scaling, alignment, and resulting dynamics, supporting a mechanistic account of emergent low-rank Jacobian structure as a driver of implicit bias.
\end{abstract}

\section{Introduction}
Deep networks often develop an effectively low-rank Jacobian geometry: a small number of singular directions dominate input–output sensitivity while the remainder stay weak. This behavior is typically accompanied by spectral separation, in the sense that ratios between consecutive ordered singular values increase, and it can sharpen during training; such anisotropy has been associated with generalization in realistic architectures \cite{oymak2019generalization}. Figure \ref{fig:intro} illustrates this phenomenon in a controlled fixed-gates linear network trained on a rank-10 synthetic task, where leading Jacobian singular values separate and evolve in a structured manner. These trajectories raise a mechanistic question: which depth-driven properties of Jacobian products create and maintain strong separation, and when do they enable tractable predictions for singular-value evolution?

\begin{figure}
    \centering
    \includegraphics[width=0.7\linewidth]{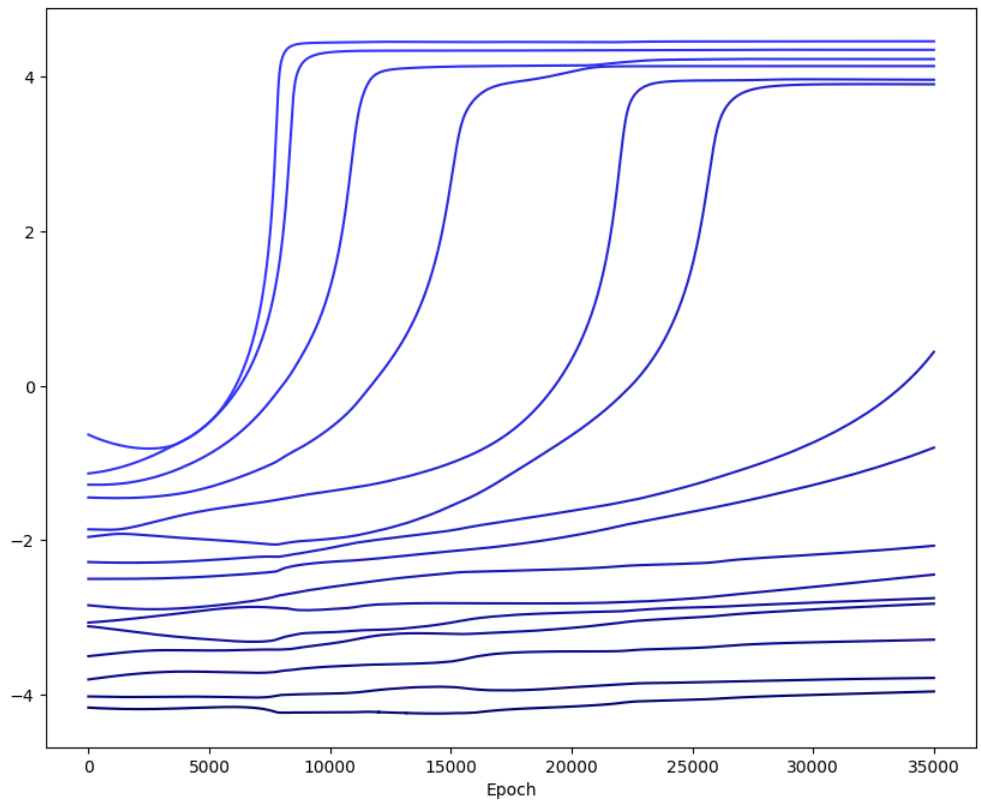}
    \caption{Evolution of the top-15 Jacobian log-singular values during training of a fixed-gates linear network (depth 10, width 64) on a synthetic rank-10 regression task. Inputs are Gaussian and targets are generated by a fixed random linear map of rank 10. Colors index order: brighter curves correspond to larger singular values (from $s_1$ to $s_{15}$).}
    \label{fig:intro}
\end{figure}

A classical setting where singular-value evolution is analytically tractable is deep linear networks and matrix factorization. Exact learning dynamics and depth-dependent amplification effects are well understood in the linear case \cite{saxe2014exact}, and gradient descent in factorized models exhibits implicit low-complexity bias \cite{gunasekar2017implicitregularizationmatrixfactorization}. Our starting point is the singular-value dynamics derived by \citet{arora2019implicitregularizationdeepmatrix}, which characterizes the evolution of ordered singular values of the end-to-end linear map under gradient descent in the balanced regime. This result gives a clean spectral-level view of learning dynamics and serves as a baseline for mode-wise trajectories such as those in Fig. \ref{fig:intro}.
The balancing hypothesis is restrictive: it prevents direct consideration of standard random initialization and is tightly coupled to end-to-end linearity. It is therefore unclear how to use it as a mechanism for Jacobian spectra in more general architectures. In parallel, Jacobian spectra have been extensively studied through signal propagation and dynamical isometry at random initialization \cite{schoenholz2017deep, pennington2017resurrecting, pennington2018emergence}, and through empirical sensitivity measures during training \cite{novak2018sensitivity}. These lines of work illuminate trainability regimes and bulk or extremal spectral behavior, but they do not directly yield predictions for the evolution of ordered singular values at finite depth in a way that explains strong separation, nor do they provide a structural replacement for balancing that would enable deep-linear-like spectral dynamics.

%Our goal is to recover deep-linear-like singular-value dynamics without assuming balancing, under assumptions that are compatible with random initialization and better aligned with the Jacobian structure induced by gating. The central claim of this paper is a mechanism: depth-driven scaling produces spectral separation, and spectral separation stabilizes singular directions through alignment. Once dominant directions stabilize, singular-value evolution becomes approximately decoupled, yielding deep-linear-like dynamics without balancing.

Our goal is to recover deep-linear-like singular-value dynamics without assuming balancing, under conditions compatible with random initialization and with the gated-product structure of Jacobians. We argue that a simple depth-driven mechanism can substitute for balancing: (i) depth-induced scaling creates separation among ordered singular values, (ii) sufficiently strong separation promotes alignment of dominant singular vectors across products, and (iii) once leading directions stabilize, singular-value evolution becomes approximately decoupled. This yields deep-linear-like dynamics without balancing.
%
%We make this mechanism concrete in gated products motivated by Jacobians of piecewise-linear networks. For a fixed activation pattern, the Jacobian is exactly a product of weight matrices interleaved with diagonal gates, which is the structure we isolate. We introduce fixed-gates and masked linear networks as controlled models that capture this gated-product form. 
%
We make this mechanism concrete using gated matrix products motivated by Jacobians of piecewise-linear networks. Conditioning on a fixed activation pattern, the Jacobian reduces exactly to a product of weight matrices interleaved with diagonal gates; we isolate this structure via fixed-gates and masked linear networks. Concretely, we (i) establish a depth-scaling characterization for masked products at random initialization, including explicit Lyapunov exponents in a tractable masked model and finite-depth corrections, yielding a principled source of spectral separation; (ii) show that strong spectral separation is sufficient to force alignment of dominant singular vectors in matrix products, implying an approximately shared singular basis across intermediate Jacobians; and (iii) combine these ingredients to motivate an approximation regime where singular-value evolution becomes effectively decoupled, mirroring the functional form of balanced deep-linear dynamics up to a global scaling. Our analysis justifies treating depth scaling and spectral separation as plausible structural conditions, and they guide the approximations that yield a deep-linear-like singular-value dynamic for fixed-gates networks without assuming balancing.

% =========================================================
% ===================== related work ======================
% =========================================================

\section{Related Works}

\textbf{Jacobian spectra, signal propagation, and training-time geometry}

The spectral properties of the input--output Jacobian have been extensively studied, especially at random initialization. Signal propagation analyses characterize regimes of stability and chaos through Jacobian norms and singular value distributions in the infinite-width limit, leading to the notions of edge of chaos and dynamical isometry \cite{schoenholz2017deep, pennington2017resurrecting, pennington2018emergence}. These works primarily describe bulk statistics or extremal behavior via mean-field approximations, and they do not track ordered singular values individually or their finite-depth separation. Complementary empirical studies relate Jacobian sensitivity to generalization during training \cite{novak2018sensitivity}, and \cite{oymak2019generalization} reports that after training, MLPs and CNNs often exhibit a spectrum with a few large singular values, linking this spectral separation to generalization.
%
%In contrast, we study ordered singular values at finite depth in gated product models and formalize a depth-driven route to spectral separation that does not rely on infinite-width limits or isometry assumptions. Our depth-scaling result is proved at random initialization in a masked setting and used to guide approximations beyond that regime.
In contrast, we study ordered singular values at finite depth in gated product models and formalize sufficient conditions for spectral separation that do not rely on infinite-width limits or isometry assumptions.

\textbf{Products of random matrices and Lyapunov exponents}
The asymptotic behavior of singular values of products of random matrices is classically described by the multiplicative ergodic theorem \cite{FurstenbergKesten1960,Oseledets1968}, which guarantees the existence of Lyapunov exponents under suitable conditions, typically including invertibility. Precise characterizations of Lyapunov spectra for Gaussian products are known in several settings \cite{Newman1986, CrisantiPaladinVulpiani1993}. However, these results generally do not apply directly to products involving rank-deficient or structured factors.
Neural Jacobians often take the form of products of matrices interleaved with diagonal gating operators, and gating can make these factors singular, placing them outside standard invertible-product theory. We extend Lyapunov-type reasoning to this gated setting by introducing conditioned $(r,p)$-gates, allowing us to characterize depth-induced exponential scaling and the resulting spectral separation for masked linear networks at realistic depths.

\textbf{Implicit bias in deep linear and factorized models}
For deep linear networks, the implicit bias of gradient descent is well understood. Exact singular-value dynamics were derived by \citet{saxe2014exact}, revealing depth-dependent amplification and low-rank bias. Subsequent work generalized these insights to matrix and tensor factorizations, showing that gradient descent implicitly favors low-complexity solutions that can be related to nuclear norms or analogous measures \cite{gunasekar2017implicitregularizationmatrixfactorization, arora2019implicitregularizationdeepmatrix}. These analyses rely crucially on linearity and on assumptions such as balancing that enforce alignment across layers.
We recover a deep-linear-like mode-wise dynamics in the presence of fixed gating, but without assuming balanced initialization. Related recent evidence suggests that low-complexity effects induced by factorization may also matter in modern architectures, for example through connections between nuclear-norm-like biases and out-of-context generalization behavior in transformer settings \cite{huang2025generalizationhallucinationunderstandingoutofcontext}.

\textbf{Gated and nonlinear architectures}
Recent works study implicit bias in models with gating or structured nonlinearities. \citet{lippl2022implicit} analyze generalized gated linear networks and characterize asymptotic solutions selected by gradient descent under shared-parameter constraints. \citet{jacot2022implicit} introduces a function-space notion of rank and shows that infinitely deep nonlinear networks exhibit a bias toward low-rank functions. While these works provide asymptotic or equilibrium characterizations, they do not describe finite-depth, ordered spectral dynamics of Jacobians induced by fixed activation patterns.
Our focus is instead on finite-depth behavior and on a mechanism in which spectral separation alone induces alignment of singular vectors in products, independently of balancing or infinite-depth limits.

\textbf{Alignment and spectral gaps}
Alignment phenomena in deep linear networks have been studied by \citet{ji2019gradient, ji2020directional}, who show that gradient descent aligns singular directions across layers under suitable conditions. Classical matrix perturbation theory provides tools for subspace stability under spectral gaps \cite{Davis1970, StewartSun1990}. We complement these perspectives by showing that strong spectral separation itself forces singular-vector alignment in matrix products, yielding quantitative convergence rates and providing the structural ingredient that underlies our approximate decoupled singular-value dynamics.

\textbf{Low-rank bias beyond weights}
Several works study rank minimization of network weights or explicit regularization of Jacobians \cite{timor2022implicitregularizationrankminimization, galanti2024sgdweightdecaysecretly, scarvelis2024nuclearnormregularizationdeep}. These approaches promote low rank through architectural or algorithmic choices. Our contribution is orthogonal: low-rank structure and spectral separation can emerge implicitly in the input--output Jacobian due to depth-induced effects in gated products, even without explicit regularization.

% =========================================================
% ===================== Overview and notations ======================
% =========================================================

\section{Overview and Notations}
Unless specified otherwise, all matrices are square $n \times n$ real matrices. If $(M_i)$ is a sequence of matrices and $a \le b$, let $M_{a:b} \coloneq M_b \dots M_a$. If $a>b$, we set $M_{a:b} \coloneq I$ the identity matrix.
In any singular value decomposition (SVD) $M = USV^{\top}$, the singular values $s_1(M),\dots,s_n(M)$ (i.e. the coefficients of $S$) are ordered from highest to lowest.
Our contribution is built upon different blocks, with dependencies shown below.

\textbf{Setup and baseline.}
Section~\ref{sec:motivation} introduces fixed-gates and masked linear networks as models for Jacobian products, and recalls the balanced deep-linear baseline that motivates our target dynamics (Proposition~\ref{prop:gated_linear_network_dot_sk}).

\textbf{Depth scaling at initialization.}
Section~\ref{sec:rapid-decay-init} studies masked products at random initialization and proves depth scaling, including finite-depth corrections (Theorem~\ref{th:singular value asymptotics}). This block is self-contained and does not assume balancing.

\textbf{Alignment from spectral separation.}
Section~\ref{sec:alignment}  isolates a matrix-product phenomenon: under spectral separation, dominant singular directions align across products (Theorem~\ref{th:alignment}). This result is asymptotic in depth and is used as a structural ingredient in the synthesis.

\textbf{Approximate singular-value dynamics.}
Section~\ref{sec:applications} combines the two structural ingredients, depth scaling and spectral separation-induced alignment, as assumptions to obtain a deep-linear-like singular-value evolution for fixed-gates networks up to a multiplicative factor (Proposition~\ref{prop:singular_propto_fixed_gates}; see also Remark~\ref{rem:singular propto}). Theorems~\ref{th:singular value asymptotics} and \ref{th:alignment} are not chained into a fully rigorous training-time statement; rather, they motivate the approximation regime used in this section.

\textbf{Experiments.} Section \ref{sec:experiments} collects plots of experiments illustrating the validity of our results and of the subsequent assumptions they motivate.

Appendices follow the same order: proofs of the baseline dynamics (Appendix~\ref{app:gated_linear_network_dot_sk}), background on exterior powers (Appendix~\ref{app:Recollections on exterior powers}), proofs of the main theorems and propositions (Appendices~\ref{app:proof of spectrum at init}--\ref{app:proofs of sing val dynamics}), definition of diagonal correlation used in section \ref{sec:experiments} (Appendix \ref{app:diagonal correlation}) and additional experiments (Appendix \ref{app:additional_xps}).

% =========================================================
% ===================== Motivation ======================
% =========================================================

\section{Fixed-Gates Products as Jacobian Models}
\label{sec:motivation}

We introduce fixed-gates linear networks as controlled models for Jacobian products along fixed activation patterns in piecewise-linear networks. We then recall a balanced singular-value dynamics result for masked networks that motivates our target approximation.

\begin{definition}[Fixed-Gates Linear Network]
\label{def:fgln}
Let $L \ge 1$. Let $W_\ell \in \mathbb{R}^{n_\ell \times n_{\ell-1}}$ for $\ell=1,\dots,L$ be trainable weight matrices, and let
$D_\ell \in \mathbb{R}^{n_\ell \times n_\ell}$ for $\ell=1,\dots,L-1$ be fixed diagonal matrices (the \emph{gates}).
A \emph{Fixed-Gates Linear Network (FGLN)} of depth $L$ is the linear map
\begin{equation*}
J \coloneq W_L D_{L-1} W_{L-1}\cdots D_1 W_1 .
\end{equation*}
For convenience, we set $D_0 \coloneq I_{n_0}$ and $D_L \coloneq I_{n_L}$ when needed.
In the special case where each $D_\ell$ has diagonal entries in $\{0,1\}$, we call it a \emph{Masked Linear Network}. In the special case where each $W_{\l}$ is square of size $n$, we call it a \emph{square FGLN of width $n$}.
\end{definition}

\begin{remark}
\label{rem:rectangular}
We allow rectangular layers. When we refer to singular values of partial products, they are taken with the usual SVD convention for rectangular matrices.
\end{remark}

\begin{definition}[Multi-mode FGLN]
\label{def:mm-fgln}
Let $\mathcal{M}$ a set of modes. For each mode $m \in \mathcal{M}$, let
\[
J^{(m)} \coloneq W_L D^{(m)}_{L-1} W_{L-1}\cdots D^{(m)}_1 W_1
\]
be an FGLN that shares the same weights $(W_\ell)_{\ell=1}^L$ but may have mode-dependent fixed gates $(D^{(m)}_\ell)_{\ell=1}^{L-1}$.
Let $\sigma$ be a mode function mapping each input $x$ to an index $\sigma(x)\in \mathcal{M}$.
A \emph{Multi-mode FGLN} is the pair $\big(\sigma, (J^{(m)})_{m\in \mathcal{M}}\big)$, and it represents the piecewise-linear map $x \mapsto J^{(\sigma(x))}x$.
\end{definition}

\begin{remark}
An MLP without biases is a Multi-Modes Fixed-Gates Linear Network with a trainable mode matching function $\sigma$.
\end{remark}

% \begin{proof}
% For any input $x$, given an MLP without biases $f$ and its Jacobian $J(x)$, we have the property that $f(x) = J(x)x = J^{(\sigma(x))}x$ for some mode function $\sigma$.
% \end{proof}

% \red{does this hold : on any region of input space where the activation pattern is constant, the gates $D_\ell(x)$ are constant, and the network is linear:
% there exists a matrix $J^{(m)}$ such that for all $x$ in that region, $f(x)=J^{(m)}x$ and $J(x)=J^{(m)}$ almost everywhere.
% In particular, for ReLU we have $D_\ell(x)$ with diagonal entries in $\{0,1\}$, so the resulting Jacobian is a masked (fixed-gates) product.?? if yes refine the proof is ambiguous }

% \blue{In fact for an MLP with arbitrary activation functions the gates pattern move continuously with the input $x$ and the partition appear only in piece-wise linear activation functions like Relu. So in general the number of modes is infinite. Nevertheless we can still consider each mode individually in the possibly infinite set $\mathcal{M}$. Each mode corresponding to a jacobian of the MLP. So yes above is true but there is possibly only one input point where the linearization of the network for a particular mode holds}

In the case of a Masked Linear Network we can extend the analysis done on deep linear networks dynamics from \cite{arora2019implicitregularizationdeepmatrix}. The proof is found in Appendix \ref{app:gated_linear_network_dot_sk}.

\begin{proposition}[Balanced singular-value dynamics for masked linear networks]
\label{prop:gated_linear_network_dot_sk}
Consider a Masked Linear Network $J(t)=W_L(t) D_{L-1}\cdots D_1 W_1(t)$ trained by gradient flow on the weights $(W_\ell(t))_{\ell=1}^L$ with loss $\loss(J)$.
Let $D_0\coloneq I$ and $D_L\coloneq I$, and define $M_\ell(t)\coloneq D_\ell W_\ell(t) D_{\ell-1}$ for $\ell=1,\dots,L$.
Assume the \emph{balancing hypothesis at initialization}:
\begin{equation*}
M_{\ell+1}^\top(0) M_{\ell+1}(0) \;=\; M_\ell(0) M_\ell^\top(0), \qquad \ell=1,\dots,L-1 .
\end{equation*}
Under gradient flow, this balancing condition is preserved along training.
Let $J(t)=\sum_k s_k(t) u_k(t) v_k(t)^\top$ be an SVD of $J(t)$.
Then the singular values satisfy, for each $k$,
\begin{equation*}
\dot{s}_k(t) = -L\, s_k(t)^{2-2/L}\, \left\langle \nabla_J \loss(J(t)),\, u_k(t) v_k(t)^\top \right\rangle,
\label{eq:svd_ode}
\end{equation*}
where $\langle A,B\rangle=\mathrm{Tr}(A^\top B)$ is the Frobenius inner product.
\end{proposition}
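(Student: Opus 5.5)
The plan is to reduce the masked network to an ordinary deep linear network in the matrices $M_\ell$, and then follow the argument of \citet{arora2019implicitregularizationdeepmatrix}. Since each $D_\ell$ is a diagonal $0/1$ matrix, it is an orthogonal projection: $D_\ell^2=D_\ell=D_\ell^\top$. With $D_0=D_L=I$ this gives
\[
J=W_L D_{L-1}W_{L-1}\cdots D_1W_1=M_L M_{L-1}\cdots M_1 ,
\]
because the extra projections in each $M_\ell=D_\ell W_\ell D_{\ell-1}$ collapse against their neighbours. The gradient is $\nabla_{W_\ell}\loss = D_\ell\, (M_{L:\ell+1})^\top \nabla_J\loss\, (M_{\ell-1:1})^\top D_{\ell-1}$. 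Hence the $M_\ell$ evolve as
\[
\dot M_\ell = D_\ell \dot W_\ell D_{\ell-1} = -D_\ell\, (M_{L:\ell+1})^\top \nabla_J\loss\, (M_{\ell-1:1})^\top D_{\ell-1}.
\]
The outer projections can be absorbed, since $(M_{L:\ell+1})^\top = D_\ell (M_{L:\ell+1})^\top$ and $(M_{\ell-1:1})^\top = (M_{\ell-1:1})^\top D_{\ell-1}$. So $\dot M_\ell = -(M_{L:\ell+1})^\top \nabla_J\loss (M_{\ell-1:1})^\top$, which is exactly the gradient-flow equation of an unmasked deep linear network with factors $M_\ell$.

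For conservation of balancing, compute
\[
\frac{d}{dt}\big(M_{\ell+1}^\top M_{\ell+1}\big) = \dot M_{\ell+1}^\top M_{\ell+1} + M_{\ell+1}^\top\dot M_{\ell+1}
\]
and the analogous derivative of $M_\ell M_\ell^\top$. Both reduce to $-\big(A^\top + A\big)$ with $A = (M_{L:\ell+1})^\top\nabla_J\loss\,(M_{\ell:1})^\top$, the standard Arora et al. identity. Therefore $M_{\ell+1}^\top M_{\ell+1}-M_\ell M_\ell^\top$ is constant in time, and it is zero by the hypothesis at initialization.

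Balancedness then lets us quote Arora et al.'s end-to-end dynamics. Balancedness forces the $M_\ell$ to share singular structure, so that $(JJ^\top)^{j/L}$ is well defined. One obtains
\[
\dot J = -\sum_{j=1}^{L} (JJ^\top)^{\frac{L-j}{L}}\,\nabla_J\loss\,(J^\top J)^{\frac{j-1}{L}} .
\]
Next, differentiate the SVD. For $s_k>0$ and simple singular values, $\dot s_k = u_k^\top \dot J v_k$ because $u_k^\top\dot u_k=0$ and $\dot v_k^\top v_k=0$. Using $u_k^\top (JJ^\top)^{a}=s_k^{2a}u_k^\top$ and $(J^\top J)^b v_k = s_k^{2b}v_k$, each summand contributes $s_k^{2(L-1)/L}\langle\nabla_J\loss,u_kv_k^\top\rangle$. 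Summing the $L$ identical terms gives the claimed formula.

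The main obstacle is the balanced-to-end-to-end step. It uses the polar and SVD alignment argument of Arora et al., which must be checked to survive when the $M_\ell$ are rank-deficient because of the masks. I expect it to work, because that argument only uses the balancing identities and never invertibility. I also expect the zero singular values to need the analytic-SVD technicality from their paper: analyticity of $J(t)$ for analytic losses, with the formula holding for $s_k=0$ trivially or by continuity. Both points I would simply cite.
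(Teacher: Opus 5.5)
Your proposal is correct and follows the same route as the paper. You collapse $J=M_L\cdots M_1$ using idempotence of the $0/1$ masks, use the gradient formula to show $\frac{d}{dt}\bigl(M_{\ell+1}^\top M_{\ell+1}-M_\ell M_\ell^\top\bigr)=0$ by the four-term cancellation, and then hand over to the balanced deep-linear argument of Arora et al. Your explicit observation that the $M_\ell$ obey exactly the unmasked deep-linear gradient flow $\dot M_\ell=-(M_L\cdots M_{\ell+1})^\top\nabla_J\mathcal{L}\,(M_{\ell-1}\cdots M_1)^\top$ is what justifies the paper's terse ``the rest is identical'' step, and the rank deficiency you worried about is harmless, since that argument works through eigendecompositions of $M_{\ell+1}^\top M_{\ell+1}=M_\ell M_\ell^\top$, where zero singular values form just another eigenvalue block.
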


Proposition \ref{prop:gated_linear_network_dot_sk} relies crucially on linearity and on a balancing invariant that is preserved by gradient flow. In the rest of the paper we take a different route by isolating two structural properties of deep gated products that can be stated and tested directly at the level of Jacobians.

\begin{definition}[Depth scaling]
\label{def:depth-scaling}
Let $M_{1:\ell}\coloneq M_\ell \cdots M_1$ denote partial products.
We say that a family of products satisfies \emph{$\varepsilon$-depth scaling} at rank $r$ if there exist constants $(\gamma_k,\delta_k)_{k=1}^r$ such that for all $\ell\in\{1,\dots,L\}$ and all $k\le r$,
\begin{equation*}
\left|\log s_k(M_{1:\ell}) - (\ell \gamma_k + \delta_k)\right| \le \varepsilon .
\end{equation*}
\end{definition}

\begin{definition}[Spectral separation (working definition)]
    \label{def:spectral-separation}
    Let $\varepsilon >0$. We say that a matrix $M$ exhibits \emph{$\varepsilon$-spectral separation} at rank $r$ if for all $k \le r$
    \begin{equation*}
        \frac{s_{k+1}(M)}{s_k(M)} < \varepsilon
    \end{equation*}
    with the convention $s_{n+1}(M)=0$.
\end{definition}

In the next sections we study how depth scaling arises at random initialization in Fixed-Gates Linear Networks and thus MLP Jacobians, how spectral separation induces alignment of singular directions in products, and how these ingredients lead to an approximate deep-linear-like singular-value evolution for Fixed Gates Linear Networks.

% =========================================================
% ===================== Depth Scaling ======================
% =========================================================

\section{Spectral separation and depth scaling at Initialization}
\label{sec:rapid-decay-init}

In this section, $n \ge 1$ denotes a fixed integer. We consider $n \times n$ real matrices unless specified otherwise.

Consider the gated product $J_L = W_L D_{L-1} W_{L-1}\cdots D_1 W_1$ at initialization. Denote $s_{i,L}$ its $i$-th singular value.
Experimentally, $\frac{1}{L}\log s_{i,L}$ converges slowly to some value $\gamma_i$, called the \emph{Lyapunov exponent}.

When all gates $D_{\l}$ are set as the identity matrix the existence of the exponents $\gamma_i$ is a known fact from random matrix theory, see for instance \cite{bougerol2012products}. More generally if $J_L$ is a product of random invertible i.i.d. matrices satisfying mild assumptions, one is able to show the existence of Lyapunov exponents. In general, the product $D_\l W_\l$ may not be invertible, preventing us from using this fact as a black box.

% {\color{magenta}It seems the key result here is~\cite{ledrappier2006quelques} (see Proposition 1.1.).}. 
% \brown{[Cannot find the statement in this reference]}

We prove the existence of Lyapunov exponents $\gamma_i$ for $J_L$ at initialization, and we compute them explicitly in a Bernoulli-gated Gaussian model. To account for finite-depth effects, we also derive a first-order correction in expectation:
\[
\mathbb E\!\left[\frac{1}{L}\log s_{i,L}\right] = \gamma_i + \frac{d_{i-1}-d_i}{L} + o\!\left(\frac{1}{L}\right),
\]
where the constants $d_i$ are universal in the sense that they depend only on $i$ (through a Haar-orthogonal minor) and not on $(p,\sigma)$.
This correction yields a more accurate approximation for the top of the spectrum at moderate depth.

% {\color{magenta}You cite the Theorem before stating it. I recommend to add a/the statement (at least an informal statement) if you make a reference to this theorem. If you don't want to add an informal statement here, maybe move the paragragp below Theorem 5.3.?}

\subsection{Terminology and statement}

\begin{definition}
    Let $0 < p < 1$, and $\sigma > 0$.
    \begin{itemize}
        \item We say a random $n \times n$ matrix $W$ is \emph{$\sigma$-Ginibre} if it has i.i.d.entries distributed as $\mathcal{N}(0,\sigma^2)$.
        \item We say a random $n \times n$ matrix $D$ is a \emph{$p$-gate} if it is diagonal with i.i.d. Bernoulli$(p)$ diagonal entries.
    \end{itemize}
\end{definition}

Consider the product $J_L = M_L\cdots M_1$ where each layer is $M_\l=D_\l W_\l$,
with $D_\l$ a $p$-gate and $W_\l$ a $\sigma$-Ginibre matrix.

When analyzing $J_L$ as $L\to\infty$ a technical degeneracy arises: almost surely, some gate becomes the zero matrix at a sufficiently large depth, forcing $J_L$ to be eventually zero. This phenomenon is irrelevant at the moderate depths and widths used in practice, where gates have rank concentrated near $np$. To obtain a mathematically meaningful infinite-depth limit while staying faithful to the practical regime, we condition gates to have rank at least $r$, leading to the $(r,p)$-gate model below.

\medskip

\begin{definition}\label{def:(r,p)-gates and layers}
        Let $1 \le r \le n$ be an integer, $0 < p < 1$, and $\sigma > 0$.
    \begin{itemize}
        \item We say a random $n \times n$ matrix $D$ is an \emph{$(r,p)$-gate} if it is distributed as a $p$-gate conditioned to having rank at least $r$.
        \item We say a random $n \times n$ matrix $M$ is an \emph{$(r,p,\sigma)$-layer} if it is of the form $DW$, with $D$ an $(r,p)$-gate, $W$ a $\sigma$-Ginibre matrix, $D$ and $W$ independent.
    \end{itemize}
\end{definition}
 Each diagonal entry of an $(r,p)$-gate $D$ follows a Bernoulli$(p)$ distribution, but these entries are \emph{not} independent. However in practice $D$ almost behaves as a $p$-gate. Indeed, to sample an $(r,p)$-gate one simply has to sample successive $p$-gates and keep the first one with rank $\ge r$. For $r \ll np$, this holds immediately with very high probability.

 We state our main theorem.

\begin{theorem}\label{th:singular value asymptotics}
    Let $1 \le r \le n$ be an integer, $0 < p < 1$, and $\sigma > 0$. Consider a sequence of i.i.d. $(r,p,\sigma)$-layers $(M_i)$, and write $M_i = D_iW_i$. For any positive integer $L$, define the random matrix

    $$J_L = M_L \cdots M_1 = (D_L W_L)\cdots (D_1 W_1)$$

    Denote $s_{1,L} \ge \dots \ge s_{n,L}$ the singular values of $J_L$. Then there exists explicit real numbers called \emph{Lyapunov exponents}
    $$\gamma_1 > \dots > \gamma_r$$
    which depend on $r,p$ and $\sigma$ such that almost surely
    $$\frac{1}{L} \log(s_{i,L}) \xrightarrow{L \to \infty}
    \begin{cases}
        \gamma_i & \text{if } 1 \le i \le r \\
        -\infty & \text{if } i > r.
    \end{cases}$$
    Moreover, for $1 \le i \le r$, the expectation satisfies
    $$\mathbb E \left[ \frac{1}{L} \log(s_{i,L}) \right] = \gamma_i + \frac{d_{i-1}-d_i}{L} + o\left( \frac{1}{L}\right)$$
    with
    $$d_i = -\mathbb{E}[\log |\det \Omega^{i,i}|]$$
    where $\Omega^{i,i}$ is the $i \times i$ upper left block of a Haar-distributed orthogonal matrix (and $d_0 = 0$).
\end{theorem}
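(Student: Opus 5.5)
The plan is to work with exterior powers (Appendix~\ref{app:Recollections on exterior powers}). Since $s_{1,L}\cdots s_{i,L}=\|\Lambda^i J_L\|$, it suffices to understand the growth rate of $\log\|\Lambda^i J_L\|$ for each $i\le r$ and then take differences. Gaussian rotational invariance makes this a sum of i.i.d. terms. Fix a unit decomposable $i$-vector $e_1\wedge\dots\wedge e_i$, or more generally a random $i$-frame $X_0$ (an $n\times i$ matrix with orthonormal columns). Write $J_L X_0 = Q_L R_L$ by QR, so that $\log\|\Lambda^i J_L (X_0)\| = \sum_{\ell=1}^L \log|\det(\text{$i\times i$ factor at step }\ell)|$, obtained by Gram--Schmidt step by step. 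Concretely, $M_\ell Q_{\ell-1} = D_\ell W_\ell Q_{\ell-1}$. Because $W_\ell$ is Ginibre and independent of $Q_{\ell-1}$, the matrix $W_\ell Q_{\ell-1}$ is an $n\times i$ Gaussian matrix independent of the past. Hence the increments $\log\det\big((D_\ell G_\ell)^\top D_\ell G_\ell\big)^{1/2}$, with $G_\ell$ an $n\times i$ $\sigma$-Gaussian and $D_\ell$ an $(r,p)$-gate, are i.i.d. Their expectation is finite precisely because $\operatorname{rank} D_\ell\ge r\ge i$; this is where the conditioning is used. Writing $m=\operatorname{rank}D_\ell$, the term $D_\ell G_\ell$ is an $m\times i$ Gaussian, and $\mathbb E\log\det$ of a Wishart gives the explicit value
\[
\Gamma_i=i\log\sigma+\tfrac12\,\mathbb E_m\Big[\textstyle\sum_{j=0}^{i-1}\big(\log 2+\psi(\tfrac{m-j}{2})\big)\Big].
\]
The strong law then gives $\frac1L\log\|\Lambda^iJ_L X_0\|\to\Gamma_i$ almost surely, and we set $\gamma_i=\Gamma_i-\Gamma_{i-1}$. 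Strict monotonicity $\gamma_i>\gamma_{i+1}$ follows because $\psi$ is increasing and $m-i<m-i+1$. For $i>r$, note that $\operatorname{rank}J_L\ge$ is not guaranteed; instead I would use that $\Gamma_{r+1}$ includes terms with $\log 0$ whenever $m=r$, which has positive probability. Then $\|\Lambda^{r+1}J_L\|=0$ eventually almost surely (or its rate is $-\infty$), giving $s_{i,L}$ with $\frac1L\log\to-\infty$.

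The key gap is passing from $\|\Lambda^i J_L X_0\|$ for a fixed frame to the operator norm $\|\Lambda^iJ_L\|$. I would take $X_0$ Haar-random and independent of everything. Then
\[
\|\Lambda^i J_L X_0\| = \|\Lambda^i J_L\|\cdot|\langle \text{top } i\text{-vector of } (\Lambda^i J_L)^\top,\ X_0\rangle|\cdot(1+\text{error}).
\]
The error is controlled by the spectral gap $s_{i+1}/s_i\to0$ exponentially (a bootstrap on $\gamma_i>\gamma_{i+1}$). The overlap term equals $|\det(V_i^\top X_0)|$ with $V_i$ the top right singular frame. By rotational invariance of the law of $J_L$ (right-multiplying $W_1$ by an orthogonal matrix preserves the law), this overlap is distributed exactly as $|\det\Omega^{i,i}|$ and is independent of the singular values. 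Its logarithm is a.s. finite, so it is negligible after dividing by $L$, which gives the almost-sure statement. Taking expectations instead, $\mathbb E\log\|\Lambda^iJ_LX_0\| = L\Gamma_i$ exactly, so
\[
\mathbb E\log(s_{1,L}\cdots s_{i,L}) = L\Gamma_i + d_i + o(1),
\]
where the $o(1)$ comes from the gap error. Subtracting the $(i-1)$ identity yields the stated expansion with $d_{i-1}-d_i$.

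The main obstacle is making the $o(1)$ rigorous. We need $\mathbb E[\log(1+\text{error})]\to0$, with the error depending on the ratio $(s_{i+1,L}/s_{i,L})^2$ weighted by overlaps of lower minors, which can be small. My first attempt is an exact formula: expand $\det(\text{$i$-minor of } S U^\top X_0)$ by Cauchy--Binet as $\sum_{|I|=i}\prod_{k\in I}s_k\,\det(\cdot)$, so the ratio to the leading term is a sum of terms $\prod s_{I}/\prod s_{[i]}$ times ratios of Haar minors. The singular-value ratios decay exponentially with high probability, by large deviations for the i.i.d. sums already established. The minor ratios have heavy but integrable-log tails, and dominated convergence should close the argument, splitting on the event where the gap is not yet large, whose probability is exponentially small.
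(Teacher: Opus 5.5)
Your route differs from the paper's and, once two loose steps are tightened, is more self-contained. The paper gets existence of the exponents from an extension of the Bougerol--Lacroix theorem to non-invertible factors, which it does not write out, together with the formula $\gamma_1+\dots+\gamma_i=\mathbb E\log\|Y_1e_1\wedge\dots\wedge Y_1e_i\|$. For the $1/L$ term it shows that $u_L=\mathbb E\log\|\wedge^i J_L\|$ satisfies $u_{p+q}+d(p,q)=u_p+u_q$ with $d(p,q)\to d_i$. This uses a lemma on $\|\wedge^t(AB)\|/(\|\wedge^tA\|\,\|\wedge^tB\|)$ for independent separated factors with Haar relative orientation, followed by a Fekete-type lemma. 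You obtain both parts from the exact identity $\mathbb E\log\|\Lambda^iJ_LX_0\|=L\Gamma_i$ (i.i.d.\ Gram--Schmidt increments) plus one comparison with an independent Haar frame. So you need neither the multiplicative ergodic theorem for singular products nor the subadditivity lemma. The mechanism is the same: separation collapses the norm onto a Haar minor.

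\textbf{First step to tighten (almost-sure part).} Write $\xi_L=-\log|\det(V_{i,L}^\top X_0)|\ge 0$. No bootstrap on the gap is needed, because $\|\Lambda^iJ_L\|e^{-\xi_L}\le\|\Lambda^iJ_LX_0\|\le\|\Lambda^iJ_L\|$ holds exactly. However, ``$\xi_L$ is a.s.\ finite'' does not give $\xi_L/L\to0$ a.s., since $\xi_L$ changes with $L$. What works is that the $\xi_L$ are identically distributed with mean $d_i<\infty$; for finiteness, note that $\det(\Omega^{i,i})^2$ has the law of $\det(G_1)^2/\det(G^\top G)$ for an $n\times i$ Gaussian $G$ with top block $G_1$. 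Hence $\sum_L\mathbb P(\xi_L>\varepsilon L)\le d_i/\varepsilon$, and Borel--Cantelli applies.

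\textbf{Second step to tighten (the $o(1)$ term).} Set $\mathrm{err}_L=\log\|\Lambda^iJ_LX_0\|-\log\|\Lambda^iJ_L\|+\xi_L$. The sandwich gives $0\le\mathrm{err}_L\le\xi_L$, so the family is uniformly integrable. Moreover $\mathrm{err}_L\le\frac12\log\bigl(1+(s_{i+1,L}/s_{i,L})^2e^{2\xi_L}\bigr)$, which tends to $0$ in probability: the ratio tends to $0$ a.s.\ by the almost-sure part (for $i=r<n$, $s_{r+1,L}=0$ eventually), and $\xi_L$ is tight. Vitali's theorem then gives $\mathbb E\,\mathrm{err}_L\to0$, hence $\mathbb E\log\|\Lambda^iJ_L\|=L\Gamma_i+d_i+o(1)$. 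No Cauchy--Binet splitting or large deviations are required; this is the same domination the paper uses in its norm-ratio lemma.

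\textbf{Sign of the correction.} Your last sentence is wrong. Subtracting the $(i-1)$ identity from yours gives $\mathbb E[\frac1L\log s_{i,L}]=\gamma_i+\frac{d_i-d_{i-1}}{L}+o(\frac1L)$, which is the opposite sign to the displayed statement. Your computation is the correct one. For $i=1$, $\mathbb E\log s_{1,L}\ge\mathbb E\log\|J_Le_1\|=L\gamma_1$, so the correction cannot be $-d_1/L$ with $d_1>0$ (for $n\ge2$). The paper's own proof also arrives at $\frac1Lu_L=\gamma_1+\dots+\gamma_i+\frac{d_i}{L}+o(\frac1L)$, so the statement carries a sign typo. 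You should report the corrected sign rather than claim agreement with the display.
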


A proof of Theorem \ref{th:singular value asymptotics} is provided in Appendix \ref{app:proof of spectrum at init}. Our argument is general; although the Theorem is stated for Masked Linear Networks, the proof applies to many other choices of initialization for FGLN (Theorem \ref{th:exp spectrum of iterated product}).

\begin{remark}[Closed form for $(r,p,\sigma)$-layers with Ginibre weights]
\label{rem:closed-form-gamma}
Under the assumptions of Theorem~\ref{th:singular value asymptotics}, for each $1\le i\le r$ the Lyapunov exponent is
\begin{equation*}
\gamma_i
= \ln(\sqrt{2}\sigma)
+ \frac{\sum_{t=r}^n \left[ \binom{n}{t}p^t(1-p)^{n-t} \psi\!\left( \frac{t-i+1}{2} \right)\right]}
{2\sum_{m=r}^n \binom{n}{m} p^m (1-p)^{n-m}},
\end{equation*}
where $\psi$ is the digamma function.
\end{remark}

\begin{remark}[High-probability regime $r \ll np$]\label{rem:high-probability-regime}
When $r$ is substantially smaller than $np$, the conditioning event $\{\mathrm{rank}(D)\ge r\}$ has probability close to $1$, so $(r,p)$-gates behave similarly to i.i.d.\ $p$-gates.
In this regime, the normalizing denominator in the expression for $\gamma_i$ is close to $1$, yielding the approximation independent of $r$
\[
\gamma_i \approx \ln(\sqrt{2}\sigma) + \frac{1}{2}\sum_{t=r}^n \binom{n}{t}p^t(1-p)^{n-t}\,
\psi\!\left(\frac{t-i+1}{2}\right),
\]
which is the expression used in our numerical comparisons.
\end{remark}

 From $\gamma_i > \gamma_{i+1}$ we find the following corollary.

\begin{corollary}\label{cor:exp spectrum for layers}
    Let $1 \le r \le n$ be an integer, $0 < p < 1$, and $\sigma > 0$. Consider a sequence of i.i.d. $(r,p,\sigma)$-layers $(M_i)$, and write $M_i = D_iW_i$. For any positive integer $L$, define the random matrix
    $$J_L = M_L \dots M_1 =  D_L W_LD_{L-1} \dots W_2D_1W_1.$$
    Denote $s_{1,L} \ge \dots \ge s_{n,L}$ the singular values of $J_L$. Then for any $1 \le i < r$, almost surely $s_{i,L} \neq 0$ and
    $$\frac{s_{i+1,L}}{s_{i,L}} \xrightarrow{L \to \infty} 0.$$
\end{corollary}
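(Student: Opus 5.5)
The corollary follows directly from Theorem~\ref{th:singular value asymptotics} by a ratio argument on logarithms. Fix $1 \le i < r$, so that both $i$ and $i+1$ lie in $\{1,\dots,r\}$. By the theorem there is an almost sure event $\Omega_0$, obtained by intersecting the finitely many almost sure events for the indices $1,\dots,r$, on which
\[
\frac{1}{L}\log s_{i,L} \to \gamma_i \quad\text{and}\quad \frac{1}{L}\log s_{i+1,L} \to \gamma_{i+1}, \qquad L \to \infty .
\]
Both limits are finite real numbers.

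\textbf{Nonvanishing.} Fix an outcome in $\Omega_0$. Since $\frac1L \log s_{i,L}$ converges to the finite value $\gamma_i$, we must have $\log s_{i,L} > -\infty$, that is $s_{i,L}\neq 0$, for all $L$ large enough; the same holds for $s_{i+1,L}$. The statement ``$s_{i,L}\neq 0$'' should therefore be understood as holding for all sufficiently large $L$, which is all that is needed for the ratio to be defined eventually.

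If one wants it for every $L$, I would add the following remark. Each $J_L$ is a product of Ginibre matrices interleaved with gates of rank at least $r$. Conditionally on the gates, the entries of $J_L$ are polynomial in Gaussian variables. The event $\operatorname{rank} J_L < r$ is then a proper algebraic subset of parameter space, because choosing $W_\ell$ to map the support of $D_{\ell-1}$ onto that of $D_\ell$ yields rank $\ge r$. Such a set has Lebesgue measure zero, so $s_{r,L}>0$ almost surely for every $L$, and a countable union over $L$ keeps this almost sure.

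\textbf{Ratio.} On $\Omega_0$, for $L$ large,
\[
\frac{1}{L}\log\frac{s_{i+1,L}}{s_{i,L}} = \frac1L\log s_{i+1,L} - \frac1L\log s_{i,L} \longrightarrow \gamma_{i+1}-\gamma_i < 0,
\]
using the strict ordering $\gamma_1>\dots>\gamma_r$ from the theorem. Set $c \coloneq (\gamma_i-\gamma_{i+1})/2>0$. For $L$ large we then have $s_{i+1,L}/s_{i,L} \le e^{-cL}$, which tends to $0$. This gives exponentially fast separation, which is stronger than the claimed limit.

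The only delicate point is the nonvanishing for all $L$ rather than eventually. The algebraic-genericity argument above handles it. Everything else is an immediate consequence of the strict inequality between consecutive Lyapunov exponents.
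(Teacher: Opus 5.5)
Your proof is correct and follows the paper's route: the ratio limit is exactly the paper's one-line deduction from the strict inequality $\gamma_i>\gamma_{i+1}$ of Theorem~\ref{th:singular value asymptotics}, and nonvanishing for every $L$ is what the paper obtains in Lemma~\ref{lem:checking assumptions}, where it shows inductively that $\mathrm{rk}\,J_L\ge r$ almost surely. The only difference is in that rank step: the paper gets $\mathrm{rk}\,J_{L+1}=\min(\mathrm{rk}\,D_{L+1},\mathrm{rk}\,J_L)$ from the rotation invariance of $W_{L+1}$, so that $\mathrm{Im}(W_{L+1}J_L)$ meets $\ker D_{L+1}$ in general position. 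You instead use that a nonzero $r\times r$ minor polynomial vanishes only on a null set, with a permutation-matrix witness; that witness should send an $r$-subset of the support of $D_{\ell-1}$ into the support of $D_\ell$, since the two supports can have different sizes.
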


Our results indicate that Jacobians of FGLNs at initialization exhibit both spectral separation and depth scaling. This is confirmed by our experiments in subsection \ref{subsec:depth scaling}.

\section{Alignment of singular vectors through spectral separation}\label{sec:alignment}

In this section, $n \ge 1$ denotes a fixed integer. We consider $n \times n$ real matrices unless specified otherwise. The goal is to formalize the following phenomenon. If a matrix $A$ has strong spectral separation, then its dominant left singular directions are stable under right-multiplication: for a wide class of matrices $B$, the product $AB$ has left singular vectors that are close to those of $A$ on the top singular subspace. Geometrically, $A$ maps the unit sphere to a highly elongated ellipsoid, and such an extreme anisotropy suppresses the influence of subsequent moderate distortions induced by $B$.
We state a quantitative alignment theorem with explicit first order convergence rates (Appendix~\ref{app:proof of alignment} and Remark \ref{rem:first order alignment}).

Consider a product $J_L=M_L\cdots M_1$ as in fixed-gates Jacobians, and fix $\ell \ll L$. Write
\[
A_\ell \coloneq M_L\cdots M_{\ell+1}, \qquad B_\ell \coloneq M_\ell\cdots M_1,
\]
so that $J_L = A_{\l}B_{\l}$.

In the i.i.d.\ initialization setting, each suffix product $A_\ell$ has the same distribution as a shorter product of $(r,p,\sigma)$-layers, and thus inherits spectral separation properties analogous to Corollary~\ref{cor:exp spectrum for layers}. Theorem~\ref{th:alignment} below then predicts that the dominant left singular subspace of $J_L$ is close to that of $A_\ell$, allowing us to relate singular directions of intermediate Jacobian products to those of the full Jacobian. This is confirmed by our experiments in subsection \ref{subsec:spectral separation-induced alignment}.

\begin{theorem}
\label{th:alignment}
Let $(A_L)$ be a sequence of invertible matrices such that each $A_L$ admits an SVD of the form
\[
A_L = U_{A_L} S_{A_L} I^\top,
\]
that is, the right singular vectors are the standard basis. Let $B$ be a fixed matrix and set $J_L \coloneq A_L B$, with SVD
\[
J_L = U_{J_L} S_{J_L} V_{J_L}^\top,
\]
chosen so that the $i$-th columns of $U_{A_L}$ and $U_{J_L}$ have positive dot product for $i\le r$.
Assume there exists an integer $1\le r\le n-1$ such that:
\begin{enumerate}
\item[(i)] (Spectral separation) Let $s_{i,A_{L}}$ be the diagonal values of $S_{A_L}$. Then
\[
\forall\,1\le i\le r,\qquad \frac{s_{i+1,A_L}}{s_{i,A_L}} \xrightarrow[L\to\infty]{} 0 .
\]
\item[(ii)] The first $r$ columns of $B$ are linearly independent.
\end{enumerate}
Let $R_L \coloneq U_{A_L}^\top U_{J_L}$. Write $X^{r,r}$ for the upper-left $r\times r$ block of a matrix $X$. Then
\[R_L^{r,r}\to I_r\]
and the off-diagonal entries converge to $0$ at rates controlled by the spectral ratios in (i). In particular, the (left) singular vectors of $A_{L}$ and $J_L$ align as $L \to \infty$.
\end{theorem}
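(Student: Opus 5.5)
The plan is to remove $U_{A_L}$ first. Since $A_L = U_{A_L} S_{A_L}$, we have $U_{A_L}^\top J_L = S_{A_L} B$. Hence $R_L = U_{A_L}^\top U_{J_L}$ is, up to the sign convention of the statement, exactly the matrix of left singular vectors of $S_L B$, where $S_L \coloneq S_{A_L} = \mathrm{diag}(s_{1,L},\dots,s_{n,L})$. The claim therefore becomes: the first $r$ left singular vectors of $S_L B$ converge to $e_1,\dots,e_r$. Equivalently, we need the top $r$ eigenvectors of the Gram matrix
\[
G_L \coloneq S_L C S_L, \qquad C \coloneq BB^\top, \qquad (G_L)_{ij} = s_{i,L}s_{j,L} C_{ij}.
\]
I will argue subspace by subspace: for each $k\le r$, show that the top-$k$ eigenspace $E_{k,L}$ of $G_L$ converges to $\mathrm{span}(e_1,\dots,e_k)$. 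Individual vectors are then recovered from the chain $E_{1,L}\subset E_{2,L}\subset\cdots$, since the $k$-th vector is the unit vector of $E_{k,L}$ orthogonal to $E_{k-1,L}$. The positivity convention in the statement fixes the signs, which yields $R_L^{r,r}\to I_r$. Orthonormality of $R_L$ then forces the off-diagonal entries of the first $r$ columns to vanish as well.

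For the subspace step I would use exterior powers, as in Appendix~\ref{app:Recollections on exterior powers}. The top-$k$ eigenspace of $G_L$ corresponds to the top eigenvector of $\Lambda^k G_L = \Lambda^k S_L\,\Lambda^k C\,\Lambda^k S_L$, which is well defined once there is a gap. In the basis $e_I = e_{i_1}\wedge\cdots\wedge e_{i_k}$ the entries are
\[
(\Lambda^k G_L)_{I,J} = s_{I,L}\, s_{J,L} \det C^{I,J}, \qquad s_{I,L} \coloneq \prod_{i\in I} s_{i,L}.
\]
Dividing by $s_{[k],L}^2$, where $[k]=\{1,\dots,k\}$, every ratio $s_{I,L}/s_{[k],L}$ with $I\neq[k]$ is bounded by $s_{k+1,L}/s_{k,L}\to 0$, by hypothesis (i) and the ordering of singular values. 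The normalized matrix therefore converges to $\det C^{[k],[k]}\, e_{[k]} e_{[k]}^\top$, with error $O(s_{k+1,L}/s_{k,L})$. This rank-one limit has a strictly positive simple eigenvalue exactly when $\det C^{[k],[k]}\neq 0$. A Davis--Kahan $\sin\Theta$ bound, or plain first-order eigenvector perturbation for a simple eigenvalue, then shows that the top eigenvector converges to $\pm e_{[k]}$ at rate $O(s_{k+1,L}/s_{k,L})$. This gives the explicit first-order rates mentioned after the statement. Since a unit decomposable $k$-vector close to $e_{[k]}$ spans a $k$-plane close to $\mathrm{span}(e_1,\dots,e_k)$, the subspace claim follows.

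The main obstacle is the nondegeneracy condition $\det C^{[k],[k]}\neq 0$ for all $k\le r$. By Cauchy--Binet, $\det C^{[k],[k]}$ is the Gram determinant of the first $k$ rows of $B$, so what is needed is that the first $r$ rows of $B$ are linearly independent. I would check carefully whether hypothesis (ii) is meant in this sense: the transpose convention for $B$ versus $B^\top$ matters, since $J_L = A_L B$ acts on $B$ from the left. If (ii) literally concerns columns, I would either reinterpret it through the convention $A_L = U S I^\top$ or show that it suffices in the intended application, where $B$ is invertible and so all leading row blocks have full rank. The remaining steps are routine bookkeeping:

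\begin{itemize}
\item making the perturbation constant uniform in $L$, using the fixed gap $\det C^{[k],[k]}>0$;
\item passing from Plücker vectors to subspaces;
\item propagating the errors through the chain $E_{1,L}\subset\cdots\subset E_{r,L}$, where they add and keep the rates controlled by $\max_{i\le r} s_{i+1,L}/s_{i,L}$.
\end{itemize}
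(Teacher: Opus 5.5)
Your argument is essentially the paper's proof, which goes through Theorem~\ref{th:pre-alignment}: you reduce to $X_L=S_LCS_L$ with $C=BB^\top$, normalize $\wedge^k X_L$ so that it converges to $\det C^{k,k}\,(e_1\wedge\cdots\wedge e_k)(e_1\wedge\cdots\wedge e_k)^\top$, and recover each $u_{k,L}$ from the nested top-$k$ subspaces; your Davis--Kahan step gives explicit but coarser rates, where the paper extracts sharp first-order constants from the Cholesky factor of $C^{r,r}$. Your worry about (ii) is justified, because the paper's proof really uses positive-definiteness of $(BB^\top)^{r,r}$, i.e.\ independence of the first $r$ \emph{rows} of $B$, and the column version is false (take $n=2$, $r=1$, $S_{A_L}=\mathrm{diag}(L,1)$, $B=e_2e_1^\top$, so $J_L=(U_{A_L}e_2)e_1^\top$ and $R_L^{1,1}=0$); hence reinterpreting the convention $A_L=U_{A_L}S_{A_L}I^\top$ cannot rescue it, and you should simply state the row condition.
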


\begin{remark}[Clustered spectra]
When applying Theorem~\ref{th:alignment} to random products, it may occur that several consecutive singular values of $A_L$ are close, weakening spectral separation within that cluster. A natural extension is to assume separation only between clusters. One then expects $R_L^{r,r}$ to converge to a block-diagonal matrix with orthogonal blocks matching the cluster sizes.
\end{remark}

% {\color{magenta} Again, I understand, this is your work and you want to show you have made some calculations, but I would strongly recommend removing the below as it removes the attention of the reader from the most important equation which is the alignment of the singular vectors. In my opinion the theorem would be way neater if you stop after the $L\rightarrow \infty$ above}
% {\blue{This part is actually quite crucial. This is the statement that says that there is a geometric decay of the coefficient as we move away from the diagonal. They give the first order term, we believe it is relevant precisely because we experimentally do not find perfectly diagonal matrices, we see the firsy order. Moreover, we use it in Remark \ref{rem:singular propto}. We believe this is a strong part of our argument: controlling first order provides the opportunity for simple yet more accurate analyses.}}

% {\color{magenta}I understand. But do you really think it should be included in the main statement or is it just a detailed version of the statement? You repeat yourself here. If you look above, you already give a short and clear explanation for the decay. Honestly I think it is enough. If the readers want details, they will just look up the proofs. But again that is my opinion. This is your work :)}

\begin{remark}[First order corrections]\label{rem:first order alignment}
    We are able to provide the first order term in the convergence $R_L^{r,r} \to I_r$ of Theorem \ref{th:alignment}, which we will use in Remark \ref{rem:singular propto}. Let $C\coloneq BB^\top$. Then, as $L\to\infty$,
    \begin{align*}
    \big(S_{J_L} S_{A_L}^{-1}\big)^{r,r} &\to \Sigma_C,\\
    \big(S_{A_L}^{r,r}\big)^{-1} R_L^{r,r} S_{A_L}^{r,r} &\to T_C,\\
    S_{A_L}^{r,r} R_L^{r,r} \big(S_{A_L}^{r,r}\big)^{-1} &\to (T_C^{-1})^\top,
    \end{align*}
    where $\Sigma_C$ is diagonal and $T_C$ is lower-triangular with unit diagonal, defined by the Cholesky factorization
    \[
    C^{r,r} = T_C \Sigma_C T_C^\top.
    \]
    Unraveling the definitions, this means that the off-diagonal coefficient of $R_L^{r,r}$ in position $(i,j)$ decays as $\frac{s_{i,A_L}}{s_{j,A_L}}$ (resp. $\frac{s_{j,A_L}}{s_{i,A_L}}$) if $j < i$ (resp. $i < j$). The proportionality constants are related to $T_C$.
\end{remark}

%%%%%%%%%%%%%%%%%%%%%%%%%%%%%%%%%%%%%%%%%%%%%%%%%%%%%%%%%%%%%%%%%%%%%%%%%%%%%%%%%%%%%

\section{Applications}\label{sec:applications}

% We hope that Theorems \ref{th:singular value asymptotics} and \ref{th:alignment} provide manageable assumptions for the study of singular values and singular vectors within Jacobians of neural networks, from which further properties could be inferred. 

Theorems~\ref{th:singular value asymptotics} and \ref{th:alignment} suggest two testable signatures of deep gated products: depth-induced scaling of ordered singular values and separation, induced alignment of dominant singular directions. In this section we illustrate how, under a controlled approximation regime combining these signatures, one can recover a deep-linear-like mode-wise singular-value evolution without assuming balancing. Consider an FGLN 
\[J = W_L D_{L-1}W_{L-1} \cdots D_1 W_1\]
trained by gradient flow on the weights $(W_\ell(t))_{\ell=1}^L$ with loss $\loss(J)$. We denote the SVD of $J = U S V^{\top}$, with $s_1 \ge ~\cdots \ge ~s_n$ its singular values. Let
\begin{align*}
    A_{\l} & \coloneq W_L D_{L-1} W_{L-1} \cdots W_{\l+1}D_{\l}, \\
    B_{\l} & \coloneq D_{\l-1}W_{\l-1} \cdots D_1 W_1
\end{align*}
so that for all $\l$ we have $J = A_{\l} W_{\l} B_{\l}$. Consider their respective SVDs
\[A_{\l} = U_{A_{\l}} S_{A_{\l}} V_{A_{\l}}^\top, \qquad
    B_{\l} = U_{B_{\l}} S_{B_{\l}} V_{B_{\l}}^\top.\]

A derivation in Appendix~\ref{app:proofs of sing val dynamics} yields the mode-wise dynamics summarized below.
\begin{proposition} \label{prop:singular_propto_fixed_gates}
    With notations as above, we make the following assumptions:
    \begin{enumerate}
        \item[(i)] \emph{(Depth Scaling)} For each $1 \le k \le n$ there exists scalar functions $\gamma_k(t)$ and $\delta_k(t)$ such that for every $\l \in \{1,\dots,L\}$ the singular values satisfy
        \begin{align*}
            s_{k,A_{\l}}(t) & = e^{(L-\l+1)\gamma_k(t) + \delta_k(t)}, \\
            s_{k,B_{\l}}(t) & = e^{\l \gamma_k(t) + \delta_k(t)};
        \end{align*}
        \item[(ii)] \emph{(Spectral separation)} There exists some $\varepsilon \ll 1$ such that for all time $t$ and every $\l \in \{1,\dots,L\}$
        \[\|U^{\top} U_{A_{\l}} - I\|_{\infty} < \varepsilon,\]
        \[\|V^{\top} V_{B_{\l}} - I\|_{\infty} < \varepsilon.\]
    \end{enumerate}
    Then for each $1 \le k \le n$
    \[\dot s_k(t) \overset{\varepsilon \to 0}{\sim} -e^{(2+\frac{2}{L}) \delta_k(t)} L s_k(t)^{2 - \frac{2}{L}} \inner{\nabla_{J} \loss(t)}{u_k(t) v_k(t)^\top}.\]
\end{proposition}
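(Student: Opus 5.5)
We follow the standard deep-linear derivation of \citet{arora2019implicitregularizationdeepmatrix}, replacing the balancing invariant by assumptions (i) and (ii). First we compute $\dot J$ under gradient flow. Since $J = A_\l W_\l B_\l$, the gradient of $\loss$ with respect to $W_\l$ is $A_\l^\top \nabla_J\loss\, B_\l^\top$. Hence
\[
\dot J = -\sum_{\l=1}^L A_\l A_\l^\top \nabla_J \loss\, B_\l^\top B_\l .
\]
Next, for a simple singular value, the derivative of $s_k$ is $\dot s_k = u_k^\top \dot J v_k$. Since $\mathrm{Tr}(X^\top Y)=\mathrm{Tr}(YX^\top)$, this gives
\[
\dot s_k = -\sum_{\l} \inner{\nabla_J\loss}{A_\l A_\l^\top u_k v_k^\top B_\l^\top B_\l}.
\]
Simplicity of $s_k$ is the generic situation guaranteed by the separation regime.

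The key step uses assumption (ii). Write
\[
A_\l A_\l^\top u_k = U_{A_\l} S_{A_\l}^2 U_{A_\l}^\top u_k .
\]
Since $U^\top U_{A_\l} = I + O(\varepsilon)$ entrywise, $U_{A_\l}^\top u_k = e_k + O(\varepsilon)$. This gives
\[
A_\l A_\l^\top u_k = s_{k,A_\l}^2\,(u_k + O(\varepsilon)) + (\text{contributions } \varepsilon\, s_{j,A_\l}^2 \text{ along other } u_j).
\]
Symmetrically, $B_\l^\top B_\l v_k \approx s_{k,B_\l}^2 v_k$. Hence each summand equals $s_{k,A_\l}^2 s_{k,B_\l}^2 \inner{\nabla_J\loss}{u_kv_k^\top}$ up to error terms. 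We must check that these error terms are $o(1)$ relative to the main term as $\varepsilon\to0$.

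This is the main obstacle. The errors along $u_j$ with $j<k$ carry weights $s_{j,A_\l}^2 \gg s_{k,A_\l}^2$, so a bare $O(\varepsilon)$ bound is not uniform. We handle this in two ways. First, we read assumption (ii) in the strong sense suggested by Remark~\ref{rem:first order alignment}: the $(j,k)$ entry of $U^\top U_{A_\l}$ is $O(\varepsilon\, s_{k,A_\l}/s_{j,A_\l})$, which makes the weighted errors $O(\varepsilon s_{k,A_\l}s_{j,A_\l})$. Second, we accept that "$\sim$" is meant as a leading-order statement in $\varepsilon$ at fixed $t$. In that reading, for fixed matrices we send $\varepsilon\to0$ with the singular values held fixed. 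We also note that the statement degenerates if $\inner{\nabla_J\loss}{u_kv_k^\top}=0$, so the equivalence is understood for nonzero leading coefficient.

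The remaining step is a computation from assumption (i). The main term gives
\[
\dot s_k \sim -\Big(\sum_{\l=1}^L s_{k,A_\l}^2 s_{k,B_\l}^2\Big)\inner{\nabla_J\loss}{u_kv_k^\top}.
\]
Each summand equals
\[
e^{2(L-\l+1)\gamma_k+2\delta_k}\,e^{2\l\gamma_k+2\delta_k} = e^{2(L+1)\gamma_k + 4\delta_k},
\]
which is independent of $\l$, so the sum is $L\,e^{2(L+1)\gamma_k+4\delta_k}$. Next we express $\gamma_k$ through $s_k$. Since $J=A_\l W_\l B_\l$, alignment together with depth scaling applied at the extremal index gives $s_k = e^{L\gamma_k+\delta_k}$. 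Equivalently, the $\l=L$ partial product $B_{L+1}$ formally equals $J$; this is the convention implicit in (i). Then
\[
e^{2(L+1)\gamma_k} = \big(s_k e^{-\delta_k}\big)^{2+2/L}.
\]
Substituting, we get
\[
e^{2(L+1)\gamma_k+4\delta_k} = s_k^{2+2/L} e^{(2-2/L)\delta_k}.
\]
This does not literally match the stated exponents $s_k^{2-2/L}$ and $e^{(2+2/L)\delta_k}$. Matching them requires $\dot s_k$ to scale with $s_k^{2(L-1)/L}$, as in Proposition~\ref{prop:gated_linear_network_dot_sk}. That in turn means the two exponents in (i) should sum to $L-1$ rather than $L+1$ across each summand; for example, $s_{k,A_\l}=e^{(L-\l)\gamma_k+\dots}$ and $s_{k,B_\l}=e^{(\l-1)\gamma_k+\dots}$, which corresponds to counting $W_\l$ as excluded from both products. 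We will therefore carry out the bookkeeping with the layer counts fixed so that $A_\l$ and $B_\l$ contain $L-\l$ and $\l-1$ weight layers respectively, and $J$ contains $L$. We will adjust the $\delta_k$ normalization (how many offsets $\delta_k$ appear in $s_k$) so that the prefactor comes out as $e^{(2+2/L)\delta_k}$. This final reconciliation of constants is routine once the counting convention is fixed.
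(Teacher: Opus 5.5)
Your argument follows the paper's proof. Both compute $\dot s_k=-\bigl(U^\top\sum_{\ell}A_\ell A_\ell^\top\nabla_J\loss\,B_\ell^\top B_\ell V\bigr)_{k,k}$ and then, at fixed $t$, simply set $U^\top U_{A_\ell}=V^\top V_{B_\ell}=I$. That is your second (fixed-$t$) reading of ``$\sim$''. Your first device does not on its own give uniformity, since the relative error is still $\varepsilon\, s_{j,A_\ell}/s_{k,A_\ell}$ for $j<k$.

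Your off-by-one diagnosis is correct. The paper's step ``(i) implies $s_{k,A_\ell}s_{k,B_\ell}=e^{(1+\frac{1}{L})\delta_k}s_k^{1-\frac{1}{L}}$'' silently uses the true layer counts $L-\ell$ and $\ell-1$ of $A_\ell$ and $B_\ell$, together with $s_k=e^{L\gamma_k+\delta_k}$. With these counts no renormalization of $\delta_k$ is needed, because $s_{k,A_\ell}^2s_{k,B_\ell}^2=e^{2(L-1)\gamma_k+4\delta_k}=e^{(2+\frac{2}{L})\delta_k}s_k^{2-\frac{2}{L}}$ exactly.
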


%Assumption \emph{(i)} is motivated by Theorem \ref{th:singular value asymptotics}, and assumption \emph{(ii)} is expected to hold in the context of spectral separation due to Theorem \ref{th:alignment}. Ideally, we would formulate assumption \emph{(ii)} exclusively in terms of singular values. However, the quantity $\|U^{\top} U_{A_{\l}} - I\|$ depends not only on the spectrum of $A_{\l}$, but also on the coefficients involved in the Cholesky factorization of $V_{A_{\l}}^{\top} B_{\l-1} B_{\l-1}^{\top} V_{A_{\l}}$. Controlling these coefficients lies beyond the scope of the present work, and we therefore do not pursue such an analysis here.

Assumption (i) is motivated by Theorem~\ref{th:singular value asymptotics} at initialization and serves here as a training-time approximation regime that can be tested empirically. Assumption (ii) formalizes the emergence of an approximately shared singular basis across intermediate products, as suggested by Theorem~\ref{th:alignment} under strong separation. Making (ii) depend only on singular values would require controlling additional quantities (e.g., Cholesky factors arising from $B_\ell B_\ell^\top$ in the alignment expansion), which we leave open.

Up to the multiplicative factor $e^{(2+\frac{2}{L})\delta_k(t)}$, the approximation for $\dot s_k$ coincides exactly with the expression stated in Proposition \ref{prop:gated_linear_network_dot_sk}. In our analysis, the balancing hypothesis has been replaced by an assumption on the exponential spectra of the intermediate Jacobian whose validity during training can in principle be assessed empirically (see for instance subsections \ref{subsec:depth scaling} and \ref{subsec:spectral separation-induced alignment}). As a trade-off we find an asymptotic approximation rather than an exact equality.

%Proposition \ref{prop:singular_propto_fixed_gates} should be thought of as a proof of concept rather than result of interest in itself, especially considering our approximations may be too strong. For instance spectral separation is a property arising in products of sufficiently many matrices; assuming it holds uniformly even for $A_L = W_LD_L$ and $B_1 = D_1W_1$ is unrealistic. Secondly, the effect of spectral separation is to cluster the nonzero coefficients of $U^TU_{A_{\l}}$ around the diagonal, something that the distance to $I$ does not capture faithfully. In the eyes of the authors, having recovered a similar expression to that of Proposition \ref{prop:gated_linear_network_dot_sk} is an indicator of the potential future uses of assumptions of type depth scaling and spectral separation in non-linear settings.

We emphasize that Proposition~\ref{prop:singular_propto_fixed_gates} is an illustrative derivation: its assumptions are deliberately strong and are not expected to hold uniformly across all $\ell$ (e.g., for very short prefix/suffix products). In practice, separation and alignment emerge only once products are sufficiently deep, and alignment is typically approximate and structured (mass concentrates near the diagonal rather than collapsing exactly to $I$). Our experiments therefore evaluate these assumptions quantitatively and identify the regimes where the approximation is accurate.

\begin{remark}\label{rem:singular propto}
    Proposition \ref{prop:singular_propto_fixed_gates} has two weaknesses: the model is restrictive, and its assumptions are crude. In an attempt to mitigate both flaws, we provide a second statement. Consider $M_1(t),\dots,M_L(t)$ any matrix-valued smooth functions, and define
    $J(t) = M_L(t)\dots M_1(t) = M_{1:L}(t)$, 
    which can be considered as an MLP Jacobian, or as an intermediate Jacobian for more sophisticated architecture such as a transformer; we impose no further constraint on the time dynamics. Working in this broader setting, we may replace the order-zero spectral separation approximation in Proposition \ref{prop:singular_propto_fixed_gates} by its first order provided by Remark \ref{rem:first order alignment}. 
    
    The statement is the following. Denote $J = USV$ its SVD, and $s_1\ge \dots \ge s_n$ its singular values. For each $\l \in \{1,\dots,L\}$, define $A_{\l}(t) = M_{\l:L}(t)$,  $B_{\l}(t) = M_{1:\l-1}(t)$ and consider their respective SVDs
    \[A_{\l} = U_{A_{\l}} S_{A_{\l}} V_{A_{\l}}^\top, \qquad
    B_{\l} = U_{B_{\l}} S_{B_{\l}} V_{B_{\l}}^\top.\]
    We make the following assumptions:
    \begin{enumerate}
        \item[(i)] \emph{(Depth Scaling)} For each $1 \le k \le n$ there exists scalar functions $\gamma_k(t)$ and $\delta_k(t)$ such that the singular values satisfy
        \begin{align*}
            s_{A_{\l},k}(t) & = e^{(L-\l+1)\gamma_k(t) + \delta_k(t)}, \\
            s_{B_{\l},k}(t) & = e^{\l \gamma_k(t) + \delta_k(t)};
        \end{align*}
        \item[(ii)] \emph{(spectral separation)} There exists some $\varepsilon \ll 1$, a lower triangular matrix $T_{\l}^-(t)$ and an upper triangular matrix $T_{\l}^+(t)$ such that for all time $t$
        \begin{align*}
            \| S_{A_{\l}}^{-1} U^{\top} U_{A_{\l}} S_{A_{\l}} - T_{\l}^-\|_{\infty} & < \varepsilon, \\
            \| S_{B_{\l}} V_{B_{\l}} V^{\top} S_{B_{\l}}^{-1} - T_{\l}^+\|_{\infty} & < \varepsilon.
        \end{align*}
    \end{enumerate}
    Then for each $1 \le k \le n$
    \[\dot s_k \overset{\varepsilon \to 0}{\sim} e^{(1+\frac{1}{L})\delta_k} s_k^{1-\frac{1}{L}}\sum_{\l = 1}^L (T_{\l+1}^- V_{A_{\l+1}}^{\top} \dot M_{\l} U_{B_{\l-1}} T_{\l-1}^+)_{k,k}.\]
    The reader is once again referred to Appendix \ref{app:proofs of sing val dynamics} for the proof. In this level of generality the expression is primarily diagnostic; for a specific architecture one can expand $\dot M_\ell$ and study which terms dominate.
    %In this level of generality it is difficult to extract further information. When working with a specific architecture, one could expand $\dot M_{\l}$ and carry on the analysis.
\end{remark}

\begin{figure}[t]
    \centering
    \includegraphics[width=0.49\linewidth]{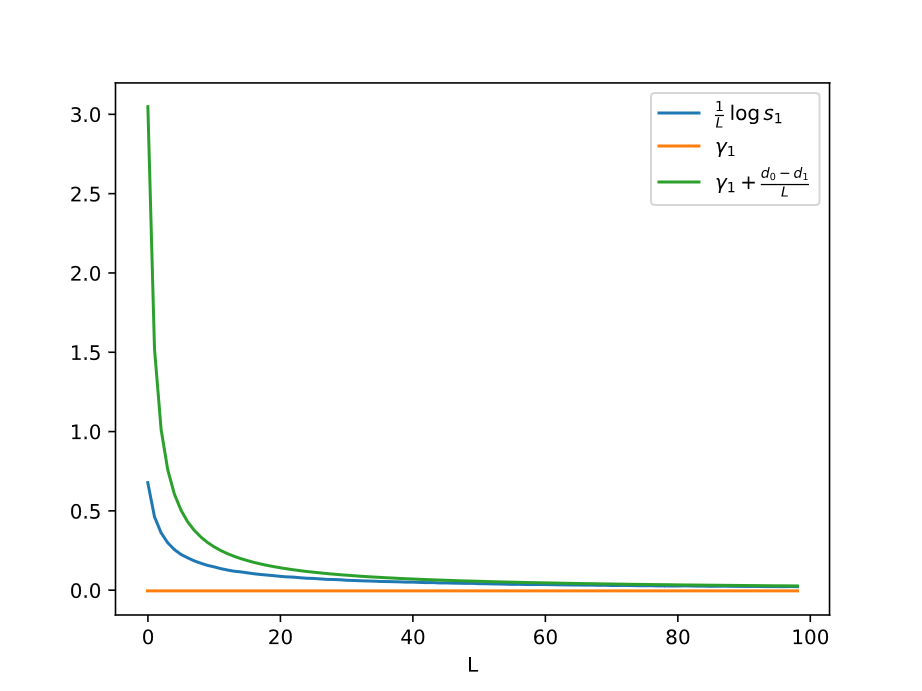}
    \includegraphics[width=0.49\linewidth]{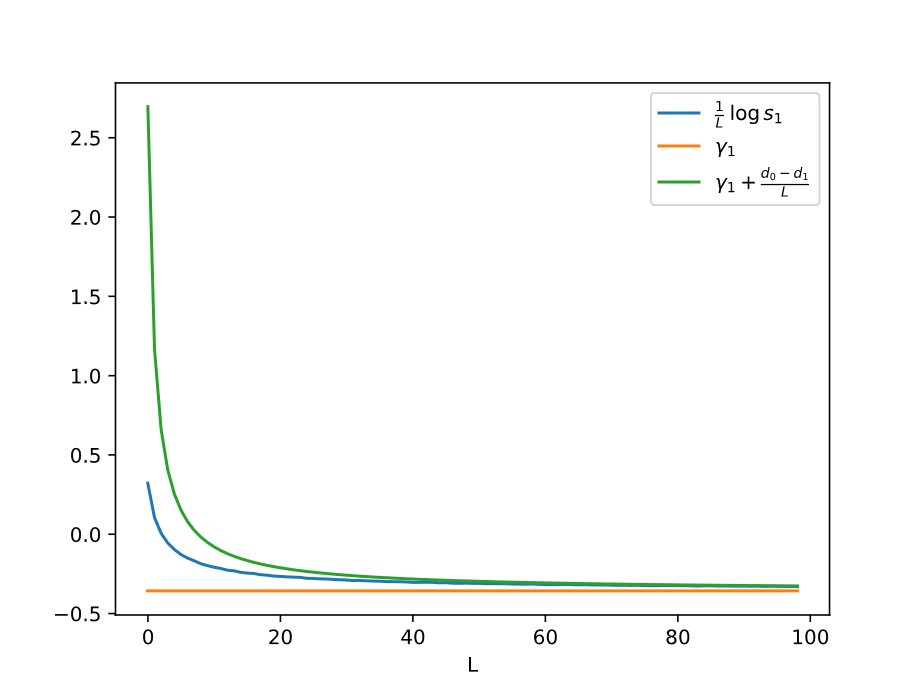}
    \caption{Convergence of $\frac{1}{L}\log s_{1,L}$ to $\gamma_1$ and comparison to the first-order correction $\gamma_1 + \frac{d_0-d_1}{L}$, for Gaussian weights and Bernoulli gates with $p=1$ (left) and $p=0.5$ (right).}
    \label{fig:convergence_s1_gamma1}
\end{figure}
% =========================================================
% ===================== Experiments =======================
% =========================================================

\section{Experiments}\label{sec:experiments}
%We showcase the main experiments that support our findings in this section. We additionally provide more experiments with several parameter variations and other findings in Appendix \ref{app:additional_xps}. 

We empirically evaluate the two signatures underpinning our analysis: (i) depth-induced scaling of ordered singular values at initialization (including the finite-depth correction), and (ii) separation-induced alignment of dominant singular directions in matrix products. Additional experiments and parameter sweeps are reported in Appendix~\ref{app:additional_xps} \footnote{An anonymized implementation and scripts to reproduce all experiments may be accessed at \url{https://anonymous.4open.science/r/ICML26-66F8/}.}.

\paragraph{Experimental setting}
We consider a square FGLN of depth $L$ and width $n=128$ with \emph{unconditioned} $p$-gates (i.i.d.\ Bernoulli$(p)$ on the diagonal) and $\sigma$-Ginibre initialized weights. Here we consider either $p=1$ or $p=0.5$ and always $\sigma = \frac{1}{\sqrt{n}}$. The initialization depth $L$ is  specified for each plot. 

%\red{Recall that Theorem~\ref{th:singular value asymptotics} is stated for conditioned $(r,p)$-gates to avoid a degeneracy as $L\to\infty$. In our finite-depth regime, $\mathrm{rank}(D_\ell)$ concentrates near $np$, so the conditioning event holds with overwhelming probability for the relevant ranks. We therefore use unconditioned $p$-gates in experiments and treat conditioning as a technical devic)e%Appendix~\ref{app:additional_xps} includes a direct sanity check comparing both models.}\blue{We do not have such a comparison. It is not needed: the expression with no $r$ is morally the "true" experimental limit, $r$ refer to the number of relevant singular values to track and depends on the depth of the network considered. The others being at zero anyway.}

We train the FGLN on the MNIST dataset augmented with the auto-augment policy of Cifar10 from torchvision, in which case we fix parameters $L=10$.  Additional first and last layers are added to the FGLN so that the input (size 784) and output (size 10) map seamlessly to the data while still preserving its square structure for the hidden layers. For consistency with the square-matrix theory, we report spectra and alignment statistics computed on the hidden square block (the product of the $n\times n$ layers), excluding the input/output adapters.
We use notations $J_L, U_{J_L}, A_{\l}, U_{A_{\l}}, B_{\l}$ defined as in Section \ref{sec:applications}, as well as $s_{i,L},\gamma_i,d_i$ as in Theorem \ref{th:singular value asymptotics}. We refer to $J_L$ as the full Jacobian, and to $A_{\l},B_{\l}$ as the intermediate Jacobians.

\begin{figure}[t]
    \centering
    \includegraphics[width=0.49\linewidth]{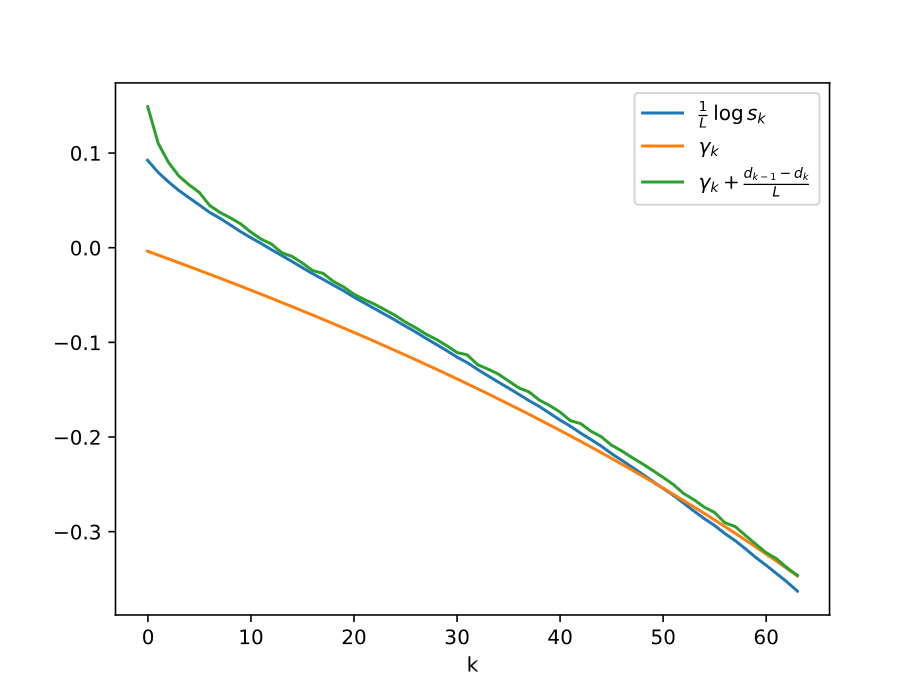}
    \includegraphics[width=0.49\linewidth]{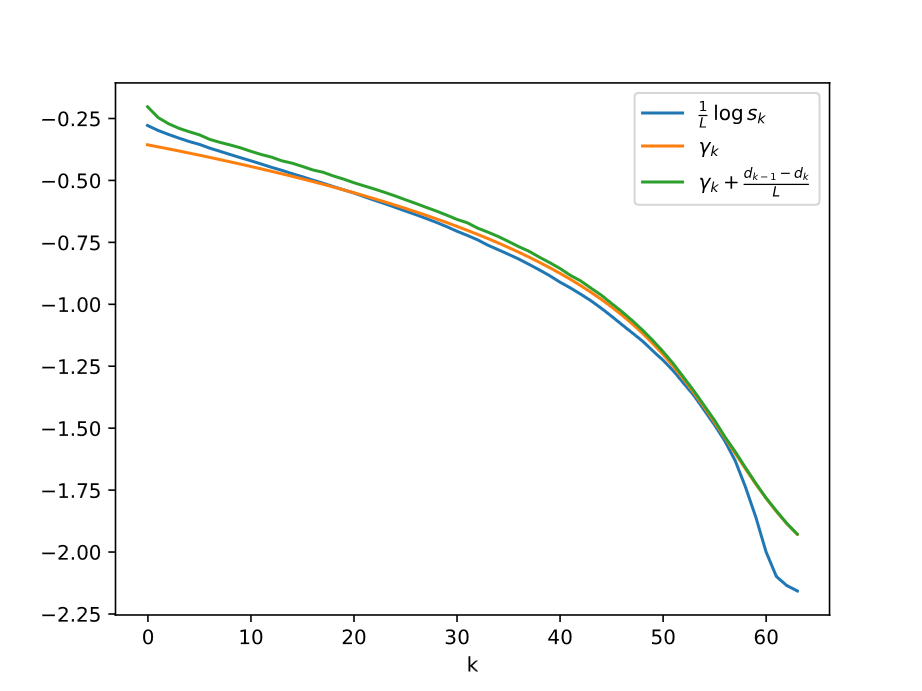}
    \caption{Top 64 values of $\frac{1}{L}\log s_{k,L}$ compared to $\gamma_k$ and to the corrected prediction $\gamma_k + \frac{d_{k-1}-d_k}{L}$, for depth $L=20$ with Gaussian weights and Bernoulli gates with $p=1$ (left) and $p=0.5$ (right).}
    \label{fig:spectrum_sk_gammak}
\end{figure}
\paragraph{Depth scaling}\label{subsec:depth scaling}
We first compare the approximation of $\frac{1}{L} \log(s_{i,L})$ provided by Theorem \ref{th:singular value asymptotics} with its experimental value at initialization (Figure \ref{fig:convergence_s1_gamma1} and Figure \ref{fig:spectrum_sk_gammak}).
% {\color{magenta} Generally, when you say ``Spectrum of $\frac{1}{L}\log s_{k, L}$ or $\log s_{k,\ell}$" in the legend, what you mean is really ``Representation of $\frac{1}{L}\log s_{k, L}$ for the largest $70$ singular values" or ``Representation of the spectrum of $B_{\ell}$"}
%
We plot in Figure \ref{fig:depth_scaling} the depth scaling property of the intermediate Jacobians $B_{\l}$, at initialization and after training. As expected, the log-singular values are affine functions in the depth. 
%The separation in two clusters of singular values after training is a manifestation of the low-rank bias.
After training, the emergence of two clusters is consistent with an effectively low-rank Jacobian geometry.
\begin{figure}[t]
    \centering
    \includegraphics[width=0.49\linewidth]{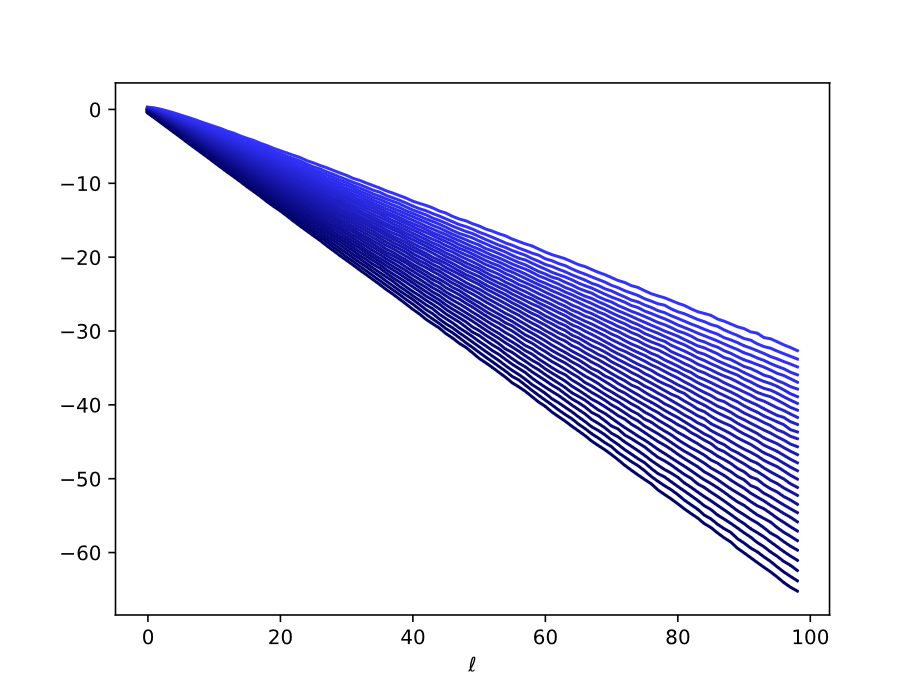}
    \includegraphics[width=0.49\linewidth]{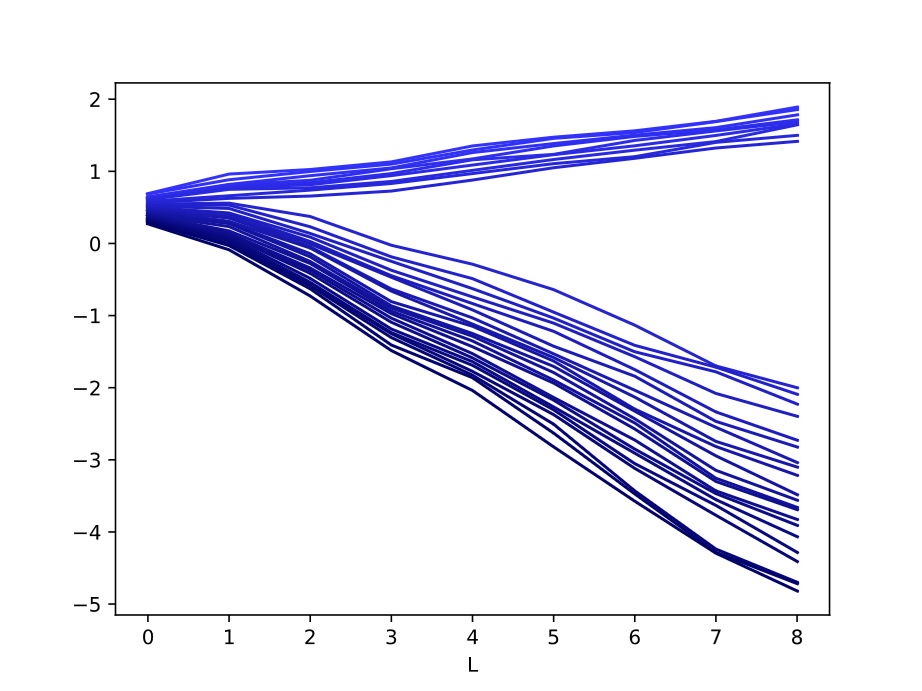}
    \caption{Spectrum of $\log s_{k,\l}$ for the top 30 log-singular values of the intermediate Jacobians $B_{\l}$ as $\l$ varies, at initialization (left, $L=100$) and trained in the MNIST setting (right), $p=0.5$.}
    \label{fig:depth_scaling}
\end{figure}
\begin{figure}[t]
    \centering
    \includegraphics[width=0.49\linewidth]{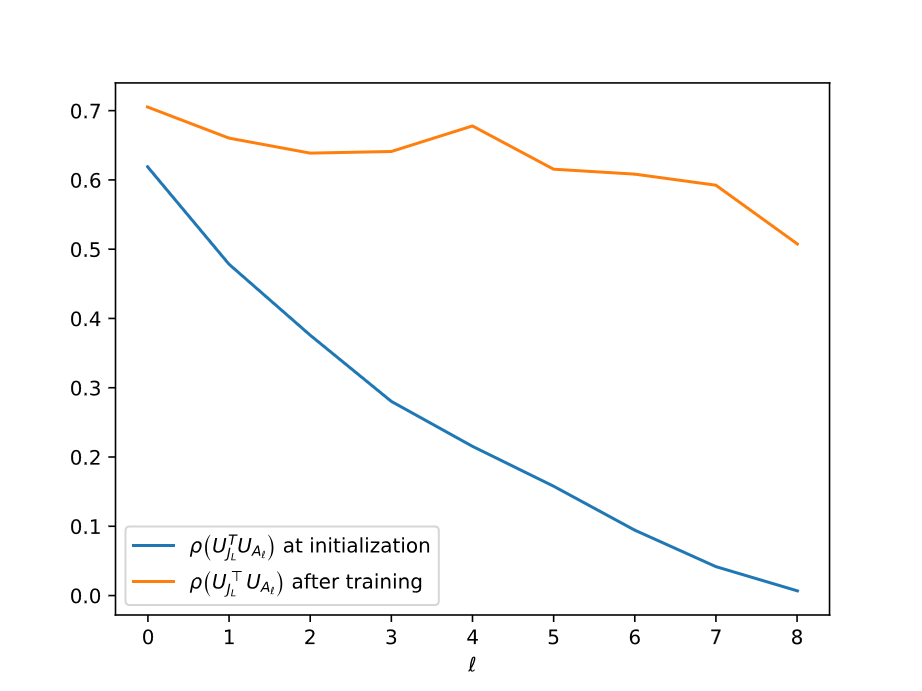}
    \caption{Diagonal correlation coefficient of $U_{J_L}^{\top}U_{A_{\l}}$ at initialization (blue) and trained in the MNIST setting (orange) with $p=0.5$}
    \label{fig:diagonal_correlation_uua}
\end{figure}
\begin{figure}[t]
    \centering
    \includegraphics[width=0.32\linewidth]{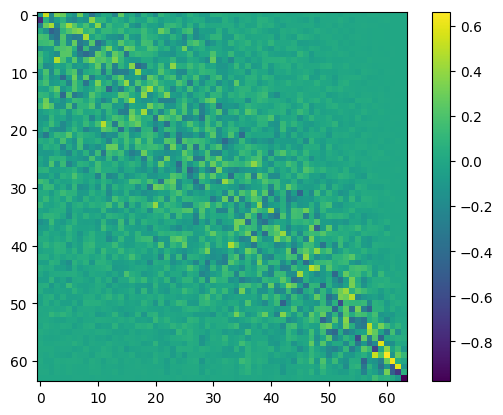}
    \includegraphics[width=0.32\linewidth]{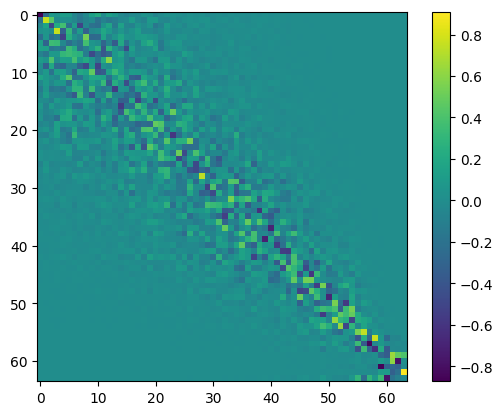}
    \includegraphics[width=0.32\linewidth]{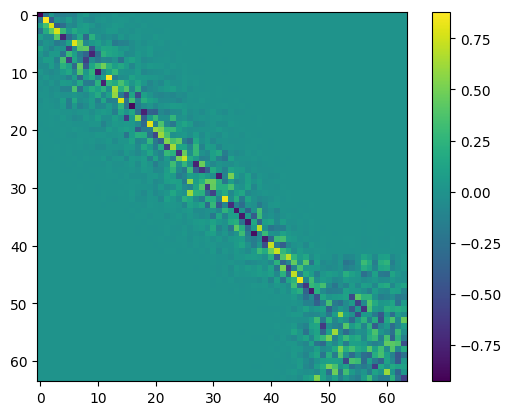}
    \caption{The matrix $U_{J_L}^{\top}U_{A_{\l}}$ at initialization with $p=1$. From left to right: $L=2$, $\l=1$, diagonal correlation of $0.26$; $L=10$, $\l=5$, diagonal correlation of $0.49$ and finally $L=20$, $\l=10$, diagonal correlation of $0.84$.}
\label{fig:alignment_diagonal_examples}
\end{figure}

\paragraph{Spectral separation-induced alignment}\label{subsec:spectral separation-induced alignment}
We show the alignment between the (left) singular vectors of the full Jacobian $J_L$ and the intermediate Jacobian $A_{\l}$. As a metric, we use the diagonal correlation coefficient, which we recall in Appendix \ref{app:diagonal correlation}.
We first compare qualitatively in Figure \ref{fig:alignment_diagonal_examples} the convergence of $U_{J_L}^{\top}U_{A_{\l}}$ to $I$ predicted by Theorem \ref{th:alignment} with the experiment at initialization.
In Figure \ref{fig:diagonal_correlation_uua} we compute the diagonal correlation of the matrix $U_{J_L}^{\top}U_{A_{\l}}$. High correlation indicates that the coefficients are concentrated around the diagonal, which implies that $U_{J_L}$ and $U_{A_{\l}}$ have similar columns (up to sign), that is, the (left) singular vectors are approximately aligned. At initialization the alignment is most noticeable for low values of $\l$ as expected, while this effect is amplified and spread out to higher values of $\l$ after training.

% =========================================================
% ===================== discussion ======================
% =========================================================

\section{Discussion and Conclusion}

In this work we identified depth-scaling and spectral separation-induced alignment as relevant mechanisms for the study of Jacobians of neural networks. These properties arise naturally at initialization, and our experiments demonstrate that they might hold more generally during training. Under these assumptions, Proposition \ref{prop:singular_propto_fixed_gates} asymptotically recovers the singular value dynamics of balanced linear networks, suggesting that the implicit bias formally proved in the balanced linear case could be generalized to other architectures with the help of depth scaling and spectral separation.

% {\color{magenta}This is the part that should in my view be moved before Remark 7.2 or right after it.} \blue{Are you talking about the paragraph above or below?}
In our applications and experiments we have restricted ourselves to (single-mode) FGLNs; only in Remark \ref{rem:singular propto} did we work in a more general setting, but its conclusion lacks interpretability. One direction for future work is to study simple non-linear models, such as multi-modes FGLNs with controlled dependencies between modes.

It would also be of interest to rework Proposition \ref{prop:singular_propto_fixed_gates} under more experimentally sound assumptions. While the spectral separation hypothesis only holds for arbitrarily large $L$, the concentration of nonzero coefficients around the diagonal of $U_{J_L}^{\top}U_{A_{\l}}$ is a genuine phenomenon, which we described in Remark \ref{rem:first order alignment}. This effect could be leveraged to refine the singular value dynamics, yielding a more realistic but still tractable formulation.

\newpage

\section*{Impact Statement}
This paper presents work whose goal is to advance the field of machine learning. There are many potential societal consequences of our work, none of which we feel must be specifically highlighted here.

\section*{Acknowledgment}
This work was supported by ANR-22-CE23-0002 ERIANA and ANR-22-EXES-0009 MAIA, and and was granted access to the HPC resources of IDRIS under the allocation made by GENCI.

\bibliographystyle{icml2026}
\bibliography{references}

%%%%%%%%%%%%%%%%%%%%%%%%%%%%%%%%%%%%%%%%%%%%%%%%%%%%%%%%%%%%%%%%%%%%%%%%%%%%%%%
%%%%%%%%%%%%%%%%%%%%%%%%%%%%%%%%%%%%%%%%%%%%%%%%%%%%%%%%%%%%%%%%%%%%%%%%%%%%%%%
% APPENDIX
%%%%%%%%%%%%%%%%%%%%%%%%%%%%%%%%%%%%%%%%%%%%%%%%%%%%%%%%%%%%%%%%%%%%%%%%%%%%%%%
%%%%%%%%%%%%%%%%%%%%%%%%%%%%%%%%%%%%%%%%%%%%%%%%%%%%%%%%%%%%%%%%%%%%%%%%%%%%%%%
\newpage
\appendix
\onecolumn

\section{Proof of weak singular value dynamics on Fixed Gates Linear Networks}\label{app:gated_linear_network_dot_sk}

\begin{lemma}
    \label{lem:grad_W}
    Consider an FGLN $J(t)=W_L(t) D_{L-1}\cdots D_1 W_1(t)$ trained by gradient flow on the weights $(W_\ell(t))_{\ell=1}^L$ with loss $\loss(J)$. Define $M_\ell(t) \coloneq D_\ell W_\ell(t) D_{\l-1}$ for $\ell=1,\dots,L$.
    Then:
    $$\nabla_{W_{\l}}  \loss = D_{\l} M_{\l+1:L}^\top \nabla_J \loss(J) M_{1:\l-1}^\top $$
\end{lemma}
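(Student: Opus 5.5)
The plan is a direct first-order perturbation computation followed by a rewriting of the prefix and suffix products in terms of the $M_\ell$. Fix $\ell$ and split $J$ around $W_\ell$:
\[
J = A\, W_\ell\, B,\qquad A \coloneq W_L D_{L-1} W_{L-1}\cdots W_{\ell+1} D_\ell,\qquad B \coloneq D_{\ell-1} W_{\ell-1}\cdots D_1 W_1 .
\]
The convention $D_0 = D_L = I$ covers the boundary cases $\ell=1$ and $\ell=L$, where $B$ or $A$ reduces to an identity or a single gate.

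\textbf{Step 1: the unconstrained gradient.} Perturb $W_\ell \mapsto W_\ell + H$ with all other weights held fixed. Since $J$ is linear in $W_\ell$, this gives $dJ = A H B$. By the chain rule,
\[
d\loss = \inner{\nabla_J\loss}{AHB} = \mathrm{Tr}\big((\nabla_J\loss)^\top A H B\big) = \mathrm{Tr}\big(B(\nabla_J\loss)^\top A\, H\big) = \inner{A^\top \nabla_J\loss\, B^\top}{H}.
\]
Hence
\[
\nabla_{W_\ell}\loss = A^\top\, \nabla_J\loss(J)\, B^\top .
\]
This step works for any FGLN, and also for rectangular layers, because it uses only cyclicity of the trace.

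\textbf{Step 2: rewriting $A$ and $B$ in terms of the $M_\ell$.} Expand the product directly:
\[
M_{\ell+1:L} = M_L\cdots M_{\ell+1} = (D_L W_L D_{L-1})(D_{L-1}W_{L-1}D_{L-2})\cdots(D_{\ell+1}W_{\ell+1}D_\ell).
\]
Each interior gate appears twice, as $D_k D_k$. For a Masked Linear Network the gates have $\{0,1\}$ diagonal entries, so $D_k^2 = D_k$. Together with $D_L = I$ this gives $M_{\ell+1:L} = A$. The same argument gives $M_{1:\ell-1} = D_{\ell-1}W_{\ell-1}D_{\ell-2}\cdots D_1 W_1 D_0 = B$.

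It remains to absorb the leading factor $D_\ell$ in the claimed formula. Since $A$ ends with $D_\ell$, we have $A = A D_\ell$, so $A^\top = D_\ell A^\top = D_\ell M_{\ell+1:L}^\top$, again by idempotence. Substituting into Step 1 yields
\[
\nabla_{W_\ell}\loss = D_\ell\, M_{\ell+1:L}^\top\, \nabla_J\loss(J)\, M_{1:\ell-1}^\top .
\]
The extra $D_\ell$ is redundant but harmless; it is written explicitly because it makes visible that rows of $W_\ell$ killed by the gate receive zero gradient.

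\textbf{Main obstacle.} The only delicate point is Step 2. For a general FGLN with arbitrary diagonal gates, $D_k^2 \neq D_k$, so $M_{\ell+1:L}$ contains squared gates and no longer equals the true suffix product $A$. I would therefore state the lemma in the form $\nabla_{W_\ell}\loss = A^\top\nabla_J\loss\,B^\top$, which is valid for every FGLN. The stated identity in terms of $M_\ell$ then holds exactly for Masked Linear Networks, the setting of Proposition~\ref{prop:gated_linear_network_dot_sk}. Alternatively, for general gates one can read $M_{\ell+1:L}$ as the product in which each gate appears once. The rest of the argument is routine index bookkeeping at the endpoints $\ell = 1$ and $\ell = L$.
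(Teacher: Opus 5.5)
Your proof is correct and is essentially the paper's argument: the paper also writes $J = A W_\ell B$ with $A = M_{\ell+1:L} D_\ell$ and $B = M_{1:\ell-1}$, and reads off $\nabla_{W_\ell}\loss = A^\top \nabla_J \loss\, B^\top$ from the Frobenius-inner-product chain rule. Your caveat is also justified, since the paper's identification $J = A W_\ell B$ silently uses $D_k^2 = D_k$. The stated formula does fail for non-idempotent gates: for $L=2$, $\ell=1$ it gives $D_1^2 W_2^\top \nabla_J\loss$ instead of the true $D_1 W_2^\top \nabla_J\loss$. Restricting to Masked Linear Networks, as you do, is therefore the right reading.
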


\begin{proof}
    Let $A = M_{\l+1:L} \, D_\l$ and $B = M_{1:\l-1}$ We write the network as
    \[J(W_{\l}) = A W_{\l} B.\]
    Denote $\langle X,Y\rangle = \mathrm{tr}(X^T Y)$ the Frobenius inner product. Let $H$ be any matrix. The chain rule gives
    \[\langle \nabla_{W_{\l}}\loss(J),H \rangle = \langle \nabla_{J}\loss(J), AHB \rangle = \langle A^{\top} \nabla_{J}\loss(J) B^{\top},H \rangle.\]
    Since this holds for all $H$, the equality follows.
\end{proof}

We may now prove Proposition \ref{prop:gated_linear_network_dot_sk}.

\begin{proof}[Proof of proposition \ref{prop:gated_linear_network_dot_sk}]
    We define the difference:
    \begin{equation*}
        \Delta_{\l}(t) \coloneqq M_{\l+1}^\top(t) M_{\l+1}(t) - M_{\l}(t) M_{\l}^\top(t)
    \end{equation*}
    We then show $\dot\Delta_{\l}(t) = 0$, which implies $\Delta_{\l}(t) = 0$ for all $t$. Indeed, observe that
    \begin{equation*}
        \dot{M}_{\l} = D_{\l}\dot W_{\l}D_{\l-1}
    \end{equation*}
    Develop $\dot\Delta_{\l}$:
    \begin{equation}\label{eq:delta_l dot}
        \dot\Delta_{\l} = \dot{M}_{\l+1}^\top M_{\l+1} + M_{\l+1}^\top\dot{M}_{\l+1} - \left( \dot{M}_{\l} M_{\l}^\top + M_{\l} \dot{M}_{\l}^\top\right)
    \end{equation}
    Each factors can be expanded with the help of Lemma \ref{lem:grad_W}:
    \begin{align*}
        \dot{M}_{\l+1}^\top M_{\l+1} &= -D_{\l}\left( D_{\l+1}M_{\l +2:L}^\top \nabla_J  \loss \, M_{1:\l}^\top\right)^{\top} D_{\l +1} D_{\l+1} W_{\l+1}D_{\l}\\
            &= -D_{\l}M_{1:\l} \nabla_J \loss^{\top} M_{\l +2:L} D_{\l +1}W_{\l+1}D_{\l}\\
            &= -M_{1:\l} \nabla_J \loss^{\top} M_{\l+1:L}
    \end{align*}
    Similarly,
    \begin{align*}
        M_{\l} \dot{M}_{\l}^\top &= -M_{1:\l} \nabla_J  \loss^{\top} M_{\l+1:L} \\
        M_{\l+1}^\top \dot{M}_{\l+1} &= -M_{\l+1:L}^\top \nabla_J \loss M_{1:\l}^\top \\
        \dot{M}_{\l} M_{\l}^\top &= -M_{\l+1:L}^\top \nabla_J \loss M_{1:\l}^\top.
    \end{align*}
    We see that all terms in equation \ref{eq:delta_l dot} cancel out. The rest is identical to the proof done by \cite{arora2019implicitregularizationdeepmatrix} in the deep linear case.
\end{proof}

\newpage
\section{Recollections on exterior powers}\label{app:Recollections on exterior powers}

In the course of the proof of Theorems \ref{th:singular value asymptotics} and \ref{th:alignment} in Appendices \ref{app:proof of spectrum at init} and \ref{app:proof of alignment} we rely on several occasions on the notion of exterior powers. We collect here their definition and fundamental and properties for the reader's convenience.

 From now-on fix $V$ a real vector space of dimension $n \ge 1$, which can be thought of as $\mathbb R^n$. Let $t \ge 1$ be an integer.

\subsection{Tensor powers}

Tensor powers are more common and easier to grasp than exterior powers. Although we do not use them in our proofs, we introduce them now to draw parallels later on.

\subsubsection{Description of tensor powers and tensors}

\begin{definition}\label{def:tensor power}
    The \emph{($t$-fold) tensor power} (or \emph{tensor product}) of $V$ is the vector space denoted $\bigotimes^t V$ generated by symbols
    $$v_1 \otimes \dots \otimes v_t, \hspace{1cm} v_1,\dots,v_t \in V$$
    subject to the following multi-linear relations: for every $1 \le i \le t$, $x_i,y_i,v_1,\dots,v_t \in V$, $a \in \mathbb R$,
    $$v_1 \otimes \dots \otimes (ax_i+y_i) \otimes \dots \otimes v_t = a(v_1 \otimes \dots \otimes x_i \otimes \dots \otimes v_t) + v_1 \otimes \dots \otimes y_i \otimes \dots \otimes v_t.$$
    Vectors of $\bigotimes^t V$ are often called \emph{tensors}.
\end{definition}

 Naturally, $\bigotimes^1 V = V$. The tensor product $\bigotimes^t V$ is \emph{generated} as a vector space by the tensors $v_1 \otimes \dots \otimes v_t$ called \emph{elementary tensors}. This implies that a general tensor in $\bigotimes^t V$ is a sum
$$\sum_i v_{1,i} \otimes \dots \otimes v_{t,i}.$$
Most tensors in $\bigotimes^t V$ cannot be simplified to an elementary tensor. For instance with $V = \mathbb R^2$ with standard basis $(e_1,e_2)$, $t=2$, the tensor $e_1 \otimes e_1 + e_2 \otimes e_2$ is not elementary.

\medskip

 The $\otimes$ symbol is not commutative: for $v,w \in V$ and $t=2$, the tensors $v \otimes w$ and $w \otimes v$ in $\bigotimes^2 V$ are different, unless $v=w$.

\medskip

 Fix a basis $(e_1,\dots,e_n)$ on $V$. Then $\bigotimes^t V$ is naturally equipped with a basis composed of the elementary tensors of the form
$$e_I \coloneqq e_{i_1} \otimes \dots \otimes e_{i_t}, \qquad I = (i_1,\dots,i_t) \subseteq \{1,\dots,n\}^t. $$
In particular, $\bigotimes^t V$ has dimension $n^t$. We may order the basis tensors $(e_I)$ using the lexicographic ordering.

\medskip

 If $V$ is moreover equipped with a dot product $\langle ,\rangle$, then so is $\bigotimes^t V$. It suffices to specify the dot product on elementary tensors, for which we set
$$\langle v_1 \otimes \dots \otimes v_t , w_1 \otimes \dots \otimes w_t \rangle \coloneqq \prod_{i=1}^t \langle v_i,w_i \rangle.$$
If a basis $(e_1,\dots,e_n)$ is orthonormal with respect to $\langle, \rangle$, then so is the induced basis $(e_I)$ on $\bigotimes^t V$ with respect to the induced dot product.

\subsubsection{Tensor powers of matrices}

Let $f$ be an endomorphism of $V$. Then we can define an endomorphism $\otimes^t f$ acting on $V$ by setting for every $v_1,\dots,v_t \in V$
$$(\otimes^t f)(v_1 \otimes \dots \otimes v_t) = f(v_1) \otimes \dots \otimes f(v_t).$$
By specifying a basis, we can thus make sense of the tensor powers of a matrix.

\begin{definition}
    Let $(e_i) = (e_1,\dots,e_n)$ the standard basis for $\mathbb R^n$, consider $(e_I)$ the induced basis on $\bigotimes^t\mathbb R^n$ ordered as above. Let $M$ be an $n \times n$ matrix, denote $f$ the automorphism of $\mathbb R^n$ it represents in the basis $(e_i)$. Then $\otimes^t M$ is the $n^t \times n^t$ matrix representing $\otimes^t f$ in the basis $(e_I)$ of $\bigotimes^t \mathbb R^n$.
\end{definition}

 If $M = (m_{i,j})$, then for $I,J \subseteq \{1,\dots,n\}^t$, $I = (i_1,\dots,i_t)$, $J=(j_1,\dots,j_t)$, the coefficient in position $(I,J)$ of $\otimes^t M$ is 
$$(\otimes^t M)_{I,J} = \prod_{k=1}^t m_{i_k,j_k}.$$
Tensor powers are compatible with matrix multiplication:
$$\otimes^t(MN) = (\otimes^tM)(\otimes^tN).$$

\medskip

 If $u_1,\dots,u_n$ denote the singular vectors of $M$ in $\mathbb R^n$, with corresponding singular values $s_1\ge \dots \ge s_n$, then the singular vectors of $\otimes^t M$ are exactly the elementary tensors
$$u_{i_1} \otimes \dots \otimes u_{i_t}, \hspace{1cm} I = (i_1,\dots,i_t) \subseteq \{1,\dots,n\}^t$$
with corresponding singular values given by the product $\prod_{i \in I} s_i$. In particular, (one of) the top singular vector(s) of $\otimes^tM$ is $u_1 \otimes \dots \otimes u_1$ with singular value $s_1^t$.

\subsection{Exterior powers}

We define and describe properties of exterior powers. The first two paragraphs are parallel to the ones on tensor powers. The third paragraph recalls the geometric meaning associated with exterior powers.

\subsubsection{Description of exterior powers and wedges}\label{subsubsection:Description of exterior powers and wedges}

\begin{definition}
    The \emph{($t$-fold) exterior power} (or \emph{wedge product}) of $V$ is the vector space denoted $\bigwedge^t V$ generated by symbols
    $$v_1 \wedge \dots \wedge v_t, \qquad v_1,\dots,v_t \in V$$
    subject to the same multi-linear relations as in Definition \ref{def:tensor power}, as well as the additional skew-symmetry relation: for every permutation of $\{1,\dots,n\}$ denoted $\sigma$,  for every $v_1,\dots,v_t \in V$,
    $$v_{\sigma(1)} \wedge \dots \wedge v_{\sigma(t)} = \varepsilon(\sigma)(v_1 \wedge \dots \wedge v_t)$$
    where $\varepsilon(\sigma)$ denotes the signature of the permutation $\sigma$. Vectors of $\bigwedge^t V$ are often called \emph{wedges}.
\end{definition}

 Naturally, $\bigwedge^1 V = V$. The wedge product $\bigwedge^t V$ is \emph{generated} as a vector space by the wedges $v_1 \wedge \dots \wedge v_t$ called \emph{elementary wedges}. This implies that a general wedge in $\bigwedge^t V$ is a sum
$$\sum_i v_{1,i} \wedge \dots \wedge v_{t,i}.$$
Most wedges in $\bigwedge^t V$ cannot be simplified to an elementary tensor. For instance with $V = \mathbb R^4$ with standard basis $(e_1,e_2,e_3,e_4)$, $t=2$, the wedge $e_1 \wedge e_2 + e_3 \wedge e_4$ is not elementary.

\medskip

 The $\wedge$ symbol is skew-commutative: for $v,w \in V$ and $t=2$, we have $v \wedge w = - w \wedge v$. Hence if $v = w$ then $v \wedge w = 0$. For a $t$-fold wedge $v_1 \wedge \dots \wedge v_t$, swapping any two components induces a change of sign.

\medskip

 Fix a basis $(e_1,\dots,e_n)$ on $V$. Then $\bigwedge^t V$ is naturally equipped with a basis composed of the elementary wedges of the form
$$e_I \coloneqq e_{i_1} \wedge \dots \wedge e_{i_t}, \hspace{1cm} I = (i_1 < \dots <i_t) \subseteq \{1,\dots,n\}^t. $$
In particular, $\bigwedge^t V$ has dimension $\binom{n}{t}$. If $t = n$, then $\bigwedge^t V$ is one-dimensional, and for $t > n$ we find $\bigwedge^t V = \{0\}$. We may order the basis wedges $(e_I)$ using the lexicographic ordering.

\medskip

 If $V$ is moreover equipped with a dot product $\langle ,\rangle$, then so is $\bigwedge^t V$. It suffices to specify the dot product on elementary wedges, for which we set
$$\langle v_1 \wedge \dots \wedge v_t , w_1 \wedge \dots \wedge w_t \rangle \coloneqq \det( \langle v_i,w_j \rangle)_{i,j}.$$
If a basis $(e_1,\dots,e_n)$ is orthonormal with respect to $\langle, \rangle$, then so is the induced basis $(e_I)$ on $\bigwedge^t V$ with respect to the induced dot product.

\subsubsection{Exterior powers of matrices}\label{subsubsection:Exterior powers of matrices}

Let $f$ be an endomorphism of $V$. Then we can define an endomorphism $\wedge^t f$ acting on $V$ by setting for every $v_1,\dots,v_t \in V$
$$(\wedge^t f)(v_1 \wedge \dots \wedge v_t) = f(v_1) \wedge \dots \wedge f(v_t).$$
By specifying a basis, we can thus make sense of the exterior powers of a matrix.

\begin{definition}
    Let $(e_i) = (e_1,\dots,e_n)$ the standard basis for $\mathbb R^n$, consider $(e_I)$ the induced basis on $\bigwedge^t\mathbb R^n$ ordered as above. Let $M$ be an $n \times n$ matrix, denote $f$ the automorphism of $\mathbb R^n$ it represents in the basis $(e_i)$. Then $\wedge^t M$ is the $\binom{n}{t} \times \binom{n}{t}$ matrix representing $\wedge^t f$ in the basis $(e_I)$ of $\bigwedge^t \mathbb R^n$.
\end{definition}

 If $M = (m_{i,j})$, then for $I,J \subseteq \{1,\dots,n\}^t$, $I = (i_1<\dots<i_t)$, $J=(j_1<\dots<j_t)$, the coefficient in position $(I,J)$ of $\wedge^t M$ is
$$(\wedge^t M)_{I,J} = \det M^{I,J}$$
where $M^{I,J}$ denotes the sub-matrix of $M$ keeping only the lines (resp. columns) indexed by $I$ (resp. $J$). Exterior power is compatible with matrix multiplication:
$$\wedge^t(MN) = (\wedge^tM)(\wedge^tN).$$

\medskip

 If $u_1,\dots,u_n$ denote the singular vectors of $M$ in $\mathbb R^n$, with corresponding singular values $s_1,\dots,s_n$, then the singular vectors of $\wedge^t M$ are exactly the elementary wedges
$$u_{i_1} \wedge \dots \wedge u_{i_t}, \hspace{1cm} I = (i_1<\dots<i_t) \subseteq \{1,\dots,n\}^t$$
with corresponding singular values given by the product $\prod_{i \in I} s_i$. In particular, (one of) the top singular vector(s) of $\wedge^tM$ is $u_1 \wedge u_2 \wedge \dots \wedge u_t$ with singular value $s_1s_2\dots s_t$.

\begin{remark}\label{rem:exterior powers involved}
    This last observation is the reason why exterior powers are naturally involved as soon as one is interested in singular values of a matrix $M$ other than the top one.
\end{remark}

\subsubsection{Geometric meaning of exterior powers}\label{subsubsection:Geometric meaning of exterior powers}

Intuitively the exterior power $\bigwedge^t V$ should be thought of as the space generated by $t$-dimensional linear subspaces of $V$. In particular, if $v_1,\dots,v_t \in V$ are linearly independent one should think about the elementary wedge $v_1 \wedge \dots \wedge v_t$ as representing the subspace spanned by $v_1,\dots,v_t$.

\medskip

 Let $v_1,\dots,v_t$ (resp. $w_1,\dots,w_t$) be linearly independent vectors in $V$. Denote $E$ (resp. $F$) the subspace spanned by the $v_i$ (resp. $w_i$). Then
\begin{center}
    $v_1 \wedge \dots \wedge v_t = a(w_1 \wedge \dots \wedge w_t)$ for some scalar $a \in \mathbb R$ if and only if $E = F$.
\end{center}
If $V$ is equipped with a dot product $\langle,\rangle$ then the induced dot product on $\bigwedge^t V$ corresponds to the intuitive notion of angle between $t$-dimensional subspaces. In particular:
\begin{center}
    $\langle v_1 \wedge \dots \wedge v_t, w_1 \wedge \dots \wedge w_t \rangle = 0$ if and only if $E$ and $F$ are at right angle.
\end{center}
Suppose that each $v_i=v_{i,L}$ depends on some integer $L$. Assume each $v_{i,L}$ and $w_i$ is a unit vector. Setting $E_L = \mathrm{Span}(v_{1,L},\dots,v_{t,L})$, then
\begin{center}
    $v_{1,L} \wedge \dots \wedge v_{t,L} \xrightarrow{L \to \infty} \pm(w_1 \wedge \dots \wedge w_t)$ if and only if $E_L$ aligns with $F$ as $L \to \infty$.
\end{center}

\newpage
\section{Proof of theorem \ref{th:singular value asymptotics}}\label{app:proof of spectrum at init}

We divide the proof in two steps. First we show the existence and formula for the Lyapunov exponents $\gamma_i$ using standard results from the literature of products random matrices. Then we compute a second order term for the expectation $\mathbb{E}[\log s_{i,L}]$; this is achieved by studying the quotient of operator norms $\|AB\|/\|A\|\|B\|$ when $A$ and $B$ are random square matrices with spectral separation.  

\subsection{First half of the proof of Theorem \ref{th:singular value asymptotics}}

For the rest of this paragraph we fix $1 \le r \le n$, $0<p<1$ and $\sigma > 0$. We seek to prove the existence and value for the Lyapunov exponents $\gamma_i$ in Theorem \ref{th:singular value asymptotics}, which already implies Corollary \ref{cor:exp spectrum for layers}.

\medskip

 We begin with a theorem generalizing the statements from \cite{bougerol2012products}, Example I.2.3, Theorem I.4.1.

\begin{theorem}\label{th:exp spectrum of iterated product}
    Let $(Y_i)$ be a sequence of i.i.d. random matrices. For any positive integer $L$, define the random matrix
    $$J_L = Y_L \dots Y_1.$$
    Assume that
    \begin{itemize}
        \item The expectation $\mathbb E[\log^+(\|Y_1\|)]$ is finite, and
        \item almost surely, $J_L$ is nonzero for every $L$.
    \end{itemize} 
    Denote $s_{1,L} \ge \dots \ge s_{n,L}$ the singular values of $J_L$. Then there exists numbers $\gamma_1 \ge \dots \ge \gamma_n$ (possibly $-\infty$) such that for any $1 \le i \le n$, almost surely
    $$\frac{1}{L} \log s_{i,L} = \gamma_i.$$
    Moreover, if for any orthogonal matrix $U$, $Y_1U$ has same distribution as $Y_1$, then for any $1 \le i \le n$
    $$\gamma_1 + \dots + \gamma_i = \mathbb E[\log \| Y_1 e_1 \wedge \dots \wedge Y_1 e_i \|].$$
\end{theorem}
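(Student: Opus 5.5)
The plan is to pass through exterior powers and reduce to the subadditive ergodic theorem (Kingman). The limit is meant as $\frac1L\log s_{i,L}\to\gamma_i$ almost surely. For each $1\le t\le n$ set $a_t(L)\coloneqq \log\|\wedge^t J_L\| = \log(s_{1,L}\cdots s_{t,L})$, with values in $[-\infty,\infty)$. By Appendix~\ref{app:Recollections on exterior powers}, $\wedge^t(J_{L+K}) = \wedge^t(Y_{L+K}\cdots Y_{L+1})\,\wedge^t J_L$, so submultiplicativity of the operator norm gives a subadditive cocycle over the shift on the i.i.d. sequence $(Y_i)$. Integrability of the positive part follows from $\log^+\|\wedge^t Y_1\|\le t\log^+\|Y_1\|$ and the first hypothesis. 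Kingman's theorem, in its version allowing the value $-\infty$, then yields almost sure convergence of $\frac1L a_t(L)$ to a constant $\Lambda_t\in[-\infty,\infty)$. The limit is constant because the shift on an i.i.d. sequence is ergodic.

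Next define $\gamma_t\coloneqq\Lambda_t-\Lambda_{t-1}$ (with $\Lambda_0=0$) whenever $\Lambda_{t-1}>-\infty$, and $\gamma_t=-\infty$ otherwise. Then $\frac1L\log s_{t,L}=\frac1L(a_t(L)-a_{t-1}(L))\to\gamma_t$. The second hypothesis, that $J_L\neq0$, guarantees $a_1(L)>-\infty$ for every $L$, so the first index is well posed. The ordering $\gamma_1\ge\dots\ge\gamma_n$ follows from $s_{t,L}\ge s_{t+1,L}$. One mild point is the convention once some $\Lambda_t=-\infty$. At that stage $s_{t,L}$ decays faster than any exponential, and all subsequent singular values, being smaller, inherit the limit $-\infty$.

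For the formula under right-orthogonal invariance, I would compute $\Lambda_t$ through a stationary argument. Fix the unit wedge $\omega_0=e_1\wedge\dots\wedge e_t$. First show $\frac1L\log\|\wedge^tJ_L\,\omega_0\|\to\Lambda_t$ almost surely, using the standard fact that a fixed vector is almost surely not orthogonal to the top singular direction. Then telescope:
\begin{equation*}
\log\|\wedge^t J_L\,\omega_0\|=\sum_{k=1}^{L}\log\frac{\|\wedge^t Y_k\,\omega_{k-1}\|}{1},\qquad \omega_k\coloneqq\frac{\wedge^tJ_k\,\omega_0}{\|\wedge^tJ_k\,\omega_0\|}.
\end{equation*}
Here $\omega_{k-1}$ is a unit decomposable wedge representing a $t$-plane and is independent of $Y_k$. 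Choose an orthogonal $U$, measurable in $\omega_{k-1}$, with $\wedge^tU\,\omega_0=\omega_{k-1}$. Invariance of $Y_k$ under $Y_k\mapsto Y_kU$ then shows that each summand has the law of $\log\|Y_1e_1\wedge\dots\wedge Y_1e_t\|$. Taking expectations gives $\mathbb E[a_t\text{-increment}]$ constant, and combining with the law of large numbers (or with Kingman's identification $\Lambda_t=\lim\frac1L\mathbb E[a_t(L)]$, sandwiched by the telescoped lower bound) gives $\Lambda_t=\mathbb E[\log\|Y_1e_1\wedge\dots\wedge Y_1e_t\|]$.

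I expect the main obstacle to be this last identification when $Y_1$ is non-invertible. The wedge $\wedge^tJ_L\,\omega_0$ may vanish, so the normalized $\omega_k$ is undefined and the equality "$\|\wedge^tJ_L\omega_0\|$ grows like $\|\wedge^tJ_L\|$" needs care. I would handle it in two ways. First, I would note that the expectation identity holds in $[-\infty,\infty)$ with the convention $\log0=-\infty$, which settles the degenerate case at once. Second, by invariance one may replace $\omega_0$ by $\wedge^tU\omega_0$ for a Haar-random $U$ independent of everything. For such a random plane, $\|\wedge^tJ_L\wedge^tU\omega_0\|\ge c\|\wedge^tJ_L\|$ holds with probability bounded below. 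This gives the upper bound, while the lower bound is automatic. Together these give equality of the exponents.
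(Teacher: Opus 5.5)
Your proposal is correct. It follows the route the paper only cites from Bougerol--Lacroix: Kingman/Furstenberg--Kesten applied to $\log\|\wedge^t J_L\|$, then right-invariance, which makes the telescoped increments $\log\|\wedge^t Y_k\,\omega_{k-1}\|$ i.i.d.\ with the law of $\log\|Y_1e_1\wedge\dots\wedge Y_1e_t\|$. By using the $[-\infty,\infty)$-valued Kingman theorem and the convention $\log 0=-\infty$, you actually supply the adaptation the paper omits, and its nonzero-product hypothesis becomes essentially superfluous.

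The one step to tighten is comparing $\|\wedge^tJ_L\,\omega_0\|$ with $\|\wedge^tJ_L\|$. Non-orthogonality to the top singular direction does not by itself exclude exponential decay of the overlap. Rest this step instead on your Haar-randomized bound $\|\wedge^tJ_L\,\wedge^tU\omega_0\|\ge|\det\Omega^{t,t}|\,\|\wedge^tJ_L\|$, with $\Omega^{t,t}$ independent of $J_L$. You can then close it either by reverse Fatou on the events $\{|\det\Omega^{t,t}|\ge c\}$, or, after taking expectations, by $\mathbb E\log|\det\Omega^{t,t}|>-\infty$.
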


 The original statements in \cite{bougerol2012products} assumes that the matrices $Y_i$ are invertible. The reason for this is to ensure that the product $J_L$ is nonzero so that one can divide by the norm $\|J_L\|$ to define \emph{cocycles}. We reviewed their proof and noticed that it suffices to assume (up to a slight generalization of the notion of cocycle) that $J_L$ is almost surely nonzero for every $L$. Unfortunately, reproducing and adapting their argument here would be excessively long, and, to the best of our knowledge, there is no other source that proves Theorem \ref{th:exp spectrum of iterated product} under our hypotheses.

\medskip

 The existence part in Theorem \ref{th:singular value asymptotics} follows from Theorem \ref{th:exp spectrum of iterated product} once we check the assumptions holds.

\begin{lemma}\label{lem:checking assumptions}
    A sequence of i.i.d. $(r,p,\sigma)$-layers $(Y_i)$ satisfies the assumptions of Theorem \ref{th:exp spectrum of iterated product}.
\end{lemma}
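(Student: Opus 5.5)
Two assumptions of Theorem \ref{th:exp spectrum of iterated product} must be checked for $Y_i = D_iW_i$ i.i.d.\ $(r,p,\sigma)$-layers. We also check the rotation-invariance hypothesis used for the formula, since it is needed afterwards.

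\textbf{Integrability.} Since $D_1$ is a $0/1$ diagonal matrix, $\|D_1\|\le 1$, so $\|Y_1\|\le \|W_1\|\le \|W_1\|_F$. Then $\log^+\|Y_1\| \le \|W_1\|_F$, and $\|W_1\|_F$ has finite expectation because $\mathbb E\|W_1\|_F^2 = n^2\sigma^2$. Hence $\mathbb E[\log^+\|Y_1\|]<\infty$.

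\textbf{Rotation invariance.} For a fixed orthogonal $U$, $W_1U$ is again $\sigma$-Ginibre, since the rows of $W_1$ are i.i.d.\ $\mathcal N(0,\sigma^2 I)$ and this law is rotation invariant. Because $W_1U$ is independent of $D_1$, the pair $(D_1,W_1U)$ has the same law as $(D_1,W_1)$, so $Y_1U$ has the same law as $Y_1$.

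\textbf{Non-vanishing of $J_L$.} This is the main step. I would prove the stronger statement that almost surely $\operatorname{rank} J_L \ge r$ for every $L$, which gives $J_L\neq 0$ because $r\ge 1$. Intersecting over countably many $L$, it suffices to fix $L$.

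\begin{itemize}
\item Condition on the gates $D_1,\dots,D_L$. Each $D_\ell$ has rank at least $r$, so choose index sets $I_\ell$ of size exactly $r$ on which $D_\ell$ equals $1$.
\item By Cauchy--Binet, the minor $\det (J_L)^{I_L,I_0}$, for any fixed $I_0$ of size $r$, is a polynomial in the entries of $W_1,\dots,W_L$.
\item This polynomial is not identically zero. Pick $W_\ell$ to be a permutation matrix sending the coordinates $I_{\ell-1}$ onto $I_\ell$ (with $I_0$ arbitrary). Then the product restricted to these coordinates is a bijection, so the minor equals $\pm1$.
\item A nonzero polynomial vanishes only on a Lebesgue-null set. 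The joint law of the Ginibre entries is absolutely continuous, so the minor is nonzero almost surely, conditionally on the gates.
\item Integrating over the finitely many gate configurations gives $\operatorname{rank}J_L\ge r$ almost surely.
\end{itemize}

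The only subtlety I anticipate is this non-vanishing step. It is exactly where the conditioning to rank at least $r$ in Definition \ref{def:(r,p)-gates and layers} is used: with unconditioned $p$-gates, a zero gate would kill $J_L$. The same rank bound should also feed the later claim that $s_{i,L}$ is almost surely nonzero for $i\le r$ in Corollary \ref{cor:exp spectrum for layers}.
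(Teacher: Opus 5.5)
Your proof is correct. The integrability and rotation-invariance checks are the same as the paper's. The only addition there is that you make explicit, via $\log^+ x\le x$ and $\mathbb E\|W_1\|_F^2=n^2\sigma^2$, the finiteness of $\mathbb E[\log^+\|W_1\|]$ that the paper simply takes as known.

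The non-vanishing step is where you genuinely differ.
\begin{itemize}
\item The paper inducts on $L$. By left rotation invariance of $W_{L+1}$, the image of $W_{L+1}J_L$ is a uniformly distributed subspace, independent of the coordinate subspace $\ker D_{L+1}$. Their intersection therefore has generic dimension, which gives the exact recursion $\operatorname{rank}J_{L+1}=\min(\operatorname{rank}D_{L+1},\operatorname{rank}J_L)$ almost surely.
\item You instead fix $L$ and condition on the finitely many gate configurations. You then show that one $r\times r$ minor of $J_L$ is a polynomial in the weight entries that is not identically zero (your permutation-matrix witness makes it $\pm1$). Hence it is almost surely nonzero.
\end{itemize}

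Your argument is fully self-contained. The paper's instead relies on an unproved general-position fact for random subspaces, and its displayed rank identity is written somewhat loosely. Your argument also uses only absolute continuity of the weight distribution, not its rotation invariance. So it applies verbatim to any absolutely continuous i.i.d.\ initialization.

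The paper's induction additionally yields the exact value $\operatorname{rank}J_L=\min_{\ell\le L}\operatorname{rank}D_\ell$. Your method recovers this too if you take index sets of size $\min_{\ell}\operatorname{rank}D_\ell$ rather than $r$, since that quantity is trivially an upper bound for $\operatorname{rank}J_L$.
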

\begin{proof}
    Write $Y_1 = D_1W_1$. For any orthogonal matrix $U$, the product $W_1 U$ has same distribution as $W_1$, so the same holds true of $Y_1$. It is known that $\mathbb E [\log^+ \|W_1\|]$ is finite, thus
    $$0 \le \mathbb E [\log^+ \|Y_1\|] \le \mathbb E [\log^+ \|D_1\| \cdot \| W_1 \|] = \mathbb E [\log^+ \|W_1\|] < \infty.$$
    We show that $J_L$ is nonzero for every $L$ almost surely by showing inductively that it has rank at least $r$. This is clearly the case of $J_1$. If $J_L$ has rank $\ge r$, then so does $W_{L+1} J_L$ almost surely. The rank of $J_{L+1} = D_{L+1}W_{L+1}J_L$ is given by
    $$\mathrm{rk} (D_{L+1}) - \dim (\mathrm{Im}(W_{L+1}J_L) \cap \ker D_{L+1})$$
    Recall that the dimension of the intersection of independent uniformly distributed subspaces of dimensions $d_1$ and $d_2$ is almost surely $\max(0,n-d_1-d_2)$, so the dimension of $\ker J_{L+1}$ is almost surely
    $$\mathrm{rk}(J_{L+1}) = \mathrm{rk} (D_{L+1}) - \max(0,-\mathrm{rk} (J_L) + \mathrm{rk} (D_{L+1})) = \min(\mathrm{rk} (D_{L+1}),\mathrm{rk} (J_{L})) \ge r.$$
\end{proof}

 We now compute the Lyapunov exponents. The appearance of wedge products in the formula 
$$\gamma_1 + \dots + \gamma_i = \mathbb E[\log \| Y_1 e_1 \wedge \dots \wedge Y_1 e_i \|]$$
given in Theorem \ref{th:exp spectrum of iterated product} is explained by Remark \ref{rem:exterior powers involved}. We compute this expectation in two steps.

\begin{lemma}
    Let $W$ be a $\sigma$-Ginibre matrix. For $1 \le t \le n$, let $D$ be diagonal with entries in $\{0,1\}$, of rank $t$ (i.e. there are $t$ copies of $1$ along the diagonal). Set $Y = DW$. Then for any $1 \le i \le n$,
    $$\mathbb E[\log \| Y e_1 \wedge \dots \wedge Y e_i \|] =
    \begin{cases}
        \frac{i}{2} \ln (2\sigma^2) + \frac{1}{2}\sum_{k=1}^i \psi\left( \frac{t-k+1}{2} \right) & \text{if } i \le r \\
        - \infty & \text{if } i > r
    \end{cases}$$
    where $\psi$ denotes the digamma function.
\end{lemma}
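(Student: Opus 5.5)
The plan is to reduce the wedge norm to a Gram determinant of a rectangular Gaussian matrix and then use the Bartlett decomposition. Let $T\subseteq\{1,\dots,n\}$ be the set of $t$ indices where $D$ has a $1$. The vectors $Ye_1,\dots,Ye_i$ are the first $i$ columns of $DW$. They are supported on the coordinates in $T$, and there they coincide with the corresponding entries of $W$.

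Let $G$ be the $t\times i$ submatrix of $W$ with rows indexed by $T$ and columns $1,\dots,i$. Its entries are i.i.d. $\mathcal N(0,\sigma^2)$. By the definition of the induced dot product on $\bigwedge^i\mathbb R^n$ in Appendix~\ref{app:Recollections on exterior powers}, we have
$$\|Ye_1\wedge\dots\wedge Ye_i\|^2=\det\big(\langle Ye_k,Ye_l\rangle\big)_{k,l}=\det(G^\top G).$$

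\textbf{Degenerate case.} If $i>t$, then $i$ vectors lie in a $t$-dimensional subspace. Hence the wedge vanishes identically and the expected logarithm is $-\infty$. This is the second case of the statement; note that the threshold is really the rank $t$ of $D$, so I read the condition ``$i\le r$'' as ``$i\le t$''. After averaging over $(r,p)$-gates of rank $t\ge r$, this yields exactly the case split in Theorem~\ref{th:singular value asymptotics}.

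\textbf{Main case $i\le t$.} I will write $G=\sigma Z$ with $Z$ standard Gaussian, so that $\det(G^\top G)=\sigma^{2i}\det(Z^\top Z)$. To evaluate $\det(Z^\top Z)$, I apply Gram--Schmidt to the columns $z_1,\dots,z_i$ of $Z$:
$$\det(Z^\top Z)=\prod_{k=1}^i \|P_k z_k\|^2,$$
where $P_k$ is the orthogonal projection onto the complement of $\mathrm{span}(z_1,\dots,z_{k-1})$, a subspace of dimension $t-k+1$ almost surely. Since $z_k$ is independent of $z_1,\dots,z_{k-1}$ and its law is rotation invariant, $\|P_k z_k\|^2$ is $\chi^2_{t-k+1}$-distributed conditionally on the earlier columns. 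Its law does not depend on them, so the factors are independent $\chi^2_{t-k+1}$ variables (the Bartlett decomposition).

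It remains to use $\mathbb E[\log\chi^2_m]=\ln 2+\psi(m/2)$. This follows by differentiating $\mathbb E[X^s]=2^s\Gamma(m/2+s)/\Gamma(m/2)$ at $s=0$. Summing the contributions gives
$$\mathbb E[\log\|Ye_1\wedge\dots\wedge Ye_i\|]=\tfrac12\Big(i\ln\sigma^2+\sum_{k=1}^i\big(\ln2+\psi(\tfrac{t-k+1}{2})\big)\Big),$$
which is the claimed formula.

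\textbf{Main obstacle.} The only delicate step is justifying the independence in the Gram--Schmidt factorization. I will handle it by conditioning successively on $z_1,\dots,z_{k-1}$ and applying rotation invariance of $z_k$, which makes each conditional law free of the conditioning variables. Finiteness of the expectations is automatic since $t-k+1\ge1$. The remaining steps are routine.
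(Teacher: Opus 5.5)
Your proposal is correct and follows the paper's route. You reduce to $\det(X^\top X)$ for a $t\times i$ Gaussian block, note that the wedge vanishes when $i>t$ (you are right that the statement's ``$i\le r$'' should read ``$i\le t$''), and evaluate the Wishart log-determinant. The paper simply cites that last identity as known, while you derive it via the Bartlett/Gram--Schmidt decomposition and $\mathbb E[\log\chi^2_m]=\ln 2+\psi(m/2)$.

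One small simplification: you only need the expectation of a sum of logarithms, so linearity of expectation suffices. The independence of the $\chi^2$ factors is therefore not required, only their marginal laws.
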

\begin{proof}
    Up to relabeling, we may assume that
    $$D = \mathrm{diag}(1,\dots,1,0,\dots,0).$$
    Observe that $We_j$ is the $j$-th column of $W$, hence $Ye_j$ is a vector whose first $t$ coordinates coincide with those of $We_j$, and the rest are zeros. Thus we can view $(Ye_j)_{1 \le j \le i}$ as a collection of independent Gaussian vectors in $\mathbb R^t$. If $i>t$, this collection is not linearly independent, hence
    $$Y e_1 \wedge \dots \wedge Y e_i = 0.$$
    From now on we assume $i \le t$. Denoting $\langle ,\rangle$ the standard scalar product in $\mathbb{R}^t$, the norm on the wedge product $\bigwedge^i \mathbb R^t$ satisfies (see paragraph \ref{subsubsection:Description of exterior powers and wedges})
    $$\|Ye_1 \wedge \dots \wedge Ye_i \|^2 = \det(\langle Ye_p,Ye_q \rangle)_{1\le p,q \le i} = \det(X^{\top} X)$$
    where $X$ is the $t \times i$ matrix whose columns are the independent Gaussian vectors $Ye_1,\dots,Ye_i$ viewed in $\mathbb R^t$. The $i \times i$ matrix $X^{\top}X$ thus follows the Wishart distribution, it is known that the log-expectation of its determinant is
    $$\mathbb E [\log \det(X^{\top} X)] = i \ln (2\sigma^2) + \sum_{k=1}^i \psi\left( \frac{t-k+1}{2} \right).$$
\end{proof}

\begin{lemma}
    Let $Y_1=D_1W_1$ be an $(r,p,\sigma)$-layer. For any $1 \le i \le n$, denote $\lambda_i = \mathbb E[\log \| Y_1 e_1 \wedge \dots \wedge Y_1 e_i \|]$. Then
    $$\lambda_i =
    \begin{cases}
        - \infty & \text{if } i > r \\
        \frac{i}{2}\ln(2\sigma^2) + \frac{\sum_{t=r}^n \left[ \binom{n}{t}p^t(1-p)^{n-t} \sum_{k=1}^i \psi\left( \frac{t-k+1}{2} \right)\right]}{2\sum_{m=r}^n \binom{n}{m} p^m (1-p)^{n-m}} & \text{otherwise}.
    \end{cases}$$
    where $\psi$ denotes the digamma function. For $i \le r \ll np$ the expression simplifies to
    $$\lambda_i \approx \frac{i}{2}\ln(2\sigma^2) + \frac{1}{2}\sum_{t=r}^n \left[ \binom{n}{t}p^t(1-p)^{n-t} \sum_{k=1}^i \psi\left( \frac{t-k+1}{2} \right)\right]$$
\end{lemma}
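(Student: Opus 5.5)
The plan is to condition on the rank of the gate and then average with the previous lemma. Write $Y_1 = D_1 W_1$, where $D_1$ is an $(r,p)$-gate and $W_1$ is an independent $\sigma$-Ginibre matrix, and let $T = \mathrm{rk}(D_1)$. Since $D_1$ is a $p$-gate conditioned on $\{T \ge r\}$, the law of $T$ is binomial $\mathrm{Bin}(n,p)$ restricted to $\{r,\dots,n\}$:
\[
\mathbb P(T=t) = \frac{\binom{n}{t}p^t(1-p)^{n-t}}{\sum_{m=r}^n \binom{n}{m}p^m(1-p)^{n-m}}, \qquad r \le t \le n .
\]
Conditionally on $D_1$, the matrix $W_1$ is still $\sigma$-Ginibre by independence. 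So the previous lemma gives the conditional expectation
\[
\mathbb E[\log\|Y_1e_1\wedge\dots\wedge Y_1e_i\| \mid D_1] = \tfrac{i}{2}\ln(2\sigma^2) + \tfrac12\sum_{k=1}^i \psi\!\left(\tfrac{T-k+1}{2}\right)
\]
whenever $i \le T$, and $-\infty$ whenever $i > T$. (I read the case split in that lemma as $i \le t$ versus $i>t$; the proof there uses $t$.) The conditional law depends on $D_1$ only through $T$, because the positions of the ones can be relabeled.

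Next, apply the tower property $\lambda_i = \mathbb E\big[\mathbb E[\,\cdot \mid D_1]\big]$ and split into two cases.

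\textbf{Case $i \le r$.} Then $i \le T$ almost surely, so every conditional value is finite. The constant term $\tfrac{i}{2}\ln(2\sigma^2)$ comes out of the expectation. Averaging the digamma sum over the law of $T$ above gives exactly the stated ratio. All quantities are finite, since $\psi$ is evaluated at arguments $\ge 1/2$ when $t \ge i$, and the sum is finite, so no integrability issue arises.

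\textbf{Case $i > r$.} The event $\{T = r\}$ has positive probability $\propto \binom{n}{r}p^r(1-p)^{n-r} > 0$. On that event the conditional expectation is $-\infty$, while on the complementary event it is bounded above (either finite or $-\infty$). Hence $\lambda_i = -\infty$. For the conventions with $\log^+$ to be legitimate, one checks that the positive part of $\log\|Y_1e_1\wedge\dots\wedge Y_1e_i\|$ is integrable. This holds because the positive part is bounded by $i\log^+\|W_1\|$, whose expectation is finite.

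For the approximation when $r \ll np$, I would invoke binomial concentration (Chernoff/Hoeffding): $\mathbb P(\mathrm{Bin}(n,p) < r) \le e^{-2(np-r)^2/n}$. This makes the denominator $1 - o(1)$, and the claimed simplified expression follows.

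The only step needing care is the measure-theoretic handling of $-\infty$ values in the case $i>r$, together with justifying the conditioning on $D_1$ via independence. Both are routine.
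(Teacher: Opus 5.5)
Your proposal is correct and follows the paper's own route. Like the paper, you condition on $\mathrm{rk}(D_1)$, whose law is the binomial restricted to $\{r,\dots,n\}$, and average the Wishart log-determinant formula from the previous lemma, rightly reading its case split as $i\le t$. The value $\lambda_i=-\infty$ for $i>r$ then comes from the positive-probability event $\{\mathrm{rk}(D_1)=r\}$. Your integrability check via $\log^+\|Y_1e_1\wedge\dots\wedge Y_1e_i\|\le i\log^+\|W_1\|$ and your Hoeffding bound on the denominator make precise two steps the paper only asserts.
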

\begin{proof}
    We split the expectation according to the rank of $D_1$. If $i>r$, some terms are $- \infty$, hence $\lambda_i = -\infty$. From now-on we assume $i \le r$.

    \smallskip
     By definition for $r \le t \le n$
    $$P(\mathrm{rk}D_1 = t) = \frac{P(\mathrm{rk}D = t)}{P(\mathrm{rk}D \ge r)} = \frac{\binom{n}{t}p^t(1-p)^{n-t}}{\sum_{m=r}^n \binom{n}{m} p^m (1-p)^{n-m}}$$
    where $D$ denotes a $(p)$-gate, whose rank follows an $(n,p)$-binomial distribution. Hence, letting $\lambda_i = \mathbb E[\log \| Y_1 e_1 \wedge \dots \wedge Y_1 e_i \|]$, we find
    $$\lambda_i = \frac{i}{2}\ln(2\sigma^2) + \frac{\sum_{t=r}^n \left[ \binom{n}{t}p^t(1-p)^{n-t} \sum_{k=1}^i \psi\left( \frac{t-k+1}{2} \right)\right]}{2\sum_{m=r}^n \binom{n}{m} p^m (1-p)^{n-m}}$$
    When $r \ll np$, the sum in the denominator is almost indistinguishable from 1, hence the approximation.
\end{proof}

 We deduce the values of the exponents $\gamma_i$.

\begin{lemma}
    Let $Y_1=D_1W_1$ be an $(r,p,\sigma)$-layer. For any $1 \le i \le r$, denote $\lambda_i = \mathbb E[\log \| Y_1 e_1 \wedge \dots \wedge Y_1 e_i \|]$ and $\gamma_i \coloneqq \lambda_{i} - \lambda_{i-1}$. Then
    \begin{align*}
        \gamma_i & = \ln(\sqrt{2} \sigma) + \frac{\sum_{t=r}^n \left[ \binom{n}{t}p^t(1-p)^{n-t} \psi\left( \frac{t-i+1}{2} \right)\right]}{2\sum_{m=r}^n \binom{n}{m} p^m (1-p)^{n-m}} \\
                & \approx \ln(\sqrt{2} \sigma) + \frac{1}{2}\sum_{t=r}^n \left[ \binom{n}{t}p^t(1-p)^{n-t} \psi\left( \frac{t-i+1}{2} \right)\right] & \text{for } r \ll np.
    \end{align*}
    where $\psi$ denotes the digamma function. Moreover, $\gamma_i > \gamma_{i+1}$ for $i<r$.
\end{lemma}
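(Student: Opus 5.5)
The plan is to subtract the two consecutive formulas from the previous lemma and then use the monotonicity of the digamma function. We set $\lambda_0 = 0$. For $1 \le i \le r$, both $\lambda_i$ and $\lambda_{i-1}$ are finite and given by the previous lemma. Subtracting them, the term $\frac{i}{2}\ln(2\sigma^2) - \frac{i-1}{2}\ln(2\sigma^2)$ leaves $\frac12\ln(2\sigma^2) = \ln(\sqrt2\sigma)$. The inner sums $\sum_{k=1}^i \psi\!\left(\frac{t-k+1}{2}\right)$ and $\sum_{k=1}^{i-1}\psi\!\left(\frac{t-k+1}{2}\right)$ telescope to the single term $\psi\!\left(\frac{t-i+1}{2}\right)$, for each $t$. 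The normalizing denominator $2\sum_{m=r}^n\binom{n}{m}p^m(1-p)^{n-m}$ is common to both expressions, so it factors out. This yields the exact formula for $\gamma_i$.

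The approximate formula for $r \ll np$ follows in the same way, since that denominator tends to $1$. We should also note, to connect with Theorem \ref{th:exp spectrum of iterated product}, that $\gamma_i = \lambda_i - \lambda_{i-1}$ is exactly the $i$-th Lyapunov exponent. That theorem gives $\gamma_1+\dots+\gamma_i = \lambda_i$, and Lemma \ref{lem:checking assumptions} verifies its hypotheses.

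For strict monotonicity, take $i < r$. Then
\[
\gamma_i - \gamma_{i+1} = \frac{\sum_{t=r}^n \binom{n}{t}p^t(1-p)^{n-t}\left[\psi\!\left(\frac{t-i+1}{2}\right) - \psi\!\left(\frac{t-i}{2}\right)\right]}{2\sum_{m=r}^n \binom{n}{m}p^m(1-p)^{n-m}}.
\]
For every $t \ge r > i$, both arguments are positive, since $\frac{t-i}{2} \ge \frac12$. The function $\psi$ is strictly increasing on $(0,\infty)$, because $\psi'(x) = \sum_{k\ge0}(x+k)^{-2} > 0$, so each bracket is strictly positive. The binomial weights are strictly positive because $0<p<1$, and at least one such term exists, namely $t = n \ge r$. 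Hence the difference is strictly positive.

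The only point requiring care is the domain check: we must make sure no digamma is evaluated at a nonpositive argument. This is why the sum over $t$ starts at $r$, a consequence of the conditioning in the definition of an $(r,p)$-gate. The restriction $i \le r$ then guarantees $t - i + 1 \ge 1$, so every argument of $\psi$ is positive. Everything else is direct algebra.
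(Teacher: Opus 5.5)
Your proposal is correct and follows the paper's argument: subtract the formulas for $\lambda_i$ and $\lambda_{i-1}$ from the preceding lemma (the common normalizer factors out and the digamma sums telescope to $\psi\left(\frac{t-i+1}{2}\right)$), then use the monotonicity of $\psi$ to get $\gamma_i > \gamma_{i+1}$. Your extra checks just make explicit what the paper's two-line proof leaves implicit: the digamma arguments are positive for $t \ge r \ge i$, $\psi$ is strictly increasing since $\psi' > 0$, and the binomial weights are positive with the term $t=n$ always present.
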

\begin{proof}
    The value for $\gamma_i$ and its approximation follow from those of $\lambda_i$. If $i < r$, as $\psi$ is increasing, we find $\gamma_i > \gamma_{i+1}$.
\end{proof}

\subsection{Second half of the proof of Theorem \ref{th:singular value asymptotics}}

Again we fix $1 \le r \le n$, $0<p<1$ and $\sigma > 0$. We wish to compute the limit of the quantity $\mathbb E \left[\log(s_{i,L}) \right] - \gamma_i L$ as $L \to \infty$.

\medskip

 Let us motivate the Lemma \ref{lem:log-expect of submultiplicative} below. If $A$ and $B$ are two square matrices, then
$$\|AB\| \le \|A\| \|B\|.$$
In general the gap between both quantities can be arbitrary, as it heavily depends on the relative orientation of singular vectors of $A$ and $B$. However if $A$ and $B$ exhibit spectral separation and Haar-distributed singular vectors, then it is possible to estimate on average the value of the ratio $\frac{\|AB\|}{\|A\| \|B\|}$. This is a particular case of the content of Lemma \ref{lem:log-expect of submultiplicative}.

\begin{lemma}\label{lem:log-expect of submultiplicative}
    Let $(A_L)_L,(B_L)_L$ be sequences of random matrices. For every $1 \le i \le n$, denote $s_{i,L}^A$ (resp. $s_{i,L}^B$) the $i$-th singular value of $A_L$ (resp. $B_L$), in decreasing order. Assume further that
    \begin{enumerate}
        \item[(a)] for $1 \le i \le r$, almost surely $s_{i,L}^A \neq 0$ and $\frac{s_{i+1,L}^{A}}{s_{i,L}^A} \xrightarrow{L \to \infty} 0$, similarly for $B$,
        \item[(b)] writing $A_L = U_L^A S_L^A V_L^A$ and $B_L = U_L^B S_L^B V_L^B$ for the respective SVDs, $V_L^AU_L^B$ is Haar distributed and independent of $(S_L^A,S_L^B)$. 
    \end{enumerate}
    Then for every $1 \le t \le r$,
    $$\mathbb E \left[ \log \frac{\|\wedge^t(A_LB_L)\|}{\|\wedge^t A_L\| \|\wedge^t B_L\|} \right] \xrightarrow{L \to \infty} \mathbb{E}[\log |\det \Omega^{t,t}|] \le 0$$
    where $\Omega^{t,t}$ is the $t \times t$ top-left block of a Haar-distributed orthogonal matrix.
\end{lemma}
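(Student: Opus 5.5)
We reduce the statement to a computation on the top block of the Haar matrix $\Omega_L \coloneqq V_L^A U_L^B$.

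Write $\wedge^t(A_LB_L) = (\wedge^t U_L^A)(\wedge^t S_L^A)(\wedge^t \Omega_L)(\wedge^t S_L^B)(\wedge^t V_L^B)$. The outer factors are orthogonal, so
$$\|\wedge^t(A_LB_L)\| = \|(\wedge^t S_L^A)(\wedge^t \Omega_L)(\wedge^t S_L^B)\|.$$
Both $\wedge^t S_L^A$ and $\wedge^t S_L^B$ are diagonal in the basis $(e_I)$, with entries $\prod_{i\in I} s^A_{i,L}$ and $\prod_{i\in I} s^B_{i,L}$. Their top entries sit at $I_0=\{1,\dots,t\}$ and equal $\|\wedge^t A_L\|$ and $\|\wedge^t B_L\|$ respectively. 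Set $\alpha_I \coloneqq \prod_{i\in I}s^A_{i,L}/\prod_{i\le t}s^A_{i,L}$ and define $\beta_I$ similarly for $B$. By (a), $t\le r$ and the spectral separation give $\alpha_I,\beta_I\to 0$ almost surely for every $I\neq I_0$, since any such $I$ replaces some index $\le t$ by a larger one. Dividing through, the ratio in the statement equals $\|N_L\|$ with
$$(N_L)_{I,J}=\alpha_I\,(\wedge^t\Omega_L)_{I,J}\,\beta_J .$$
Here $|\alpha_I|,|\beta_J|\le 1$ and the $(I_0,I_0)$ entry is $\det \Omega_L^{t,t}$. The $(I_0,I_0)$ entry is exact here, because the entries of $\wedge^t\Omega$ are minors (Appendix~\ref{app:Recollections on exterior powers}).

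Since $\Omega_L$ is orthogonal, every entry of $\wedge^t\Omega_L$ is bounded by $1$. Comparing with the rank-one matrix $\det\Omega_L^{t,t}\, e_{I_0}e_{I_0}^\top$, we get
$$\bigl|\,\|N_L\| - |\det\Omega_L^{t,t}|\,\bigr| \le \varepsilon_L \coloneqq \Bigl(\sum_{(I,J)\neq(I_0,I_0)}\alpha_I^2\beta_J^2\Bigr)^{1/2},$$
and $\varepsilon_L\to 0$ almost surely. This bound depends only on $(S_L^A,S_L^B)$, which by (b) is independent of $\Omega_L$. It also gives the bounds $|\det\Omega_L^{t,t}|\le \|N_L\|$ (since $\|N_L\|\ge |(N_L)_{I_0,I_0}|$) and $\|N_L\|\le\|\wedge^t\Omega_L\|=1$. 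Hence
$$\log|\det\Omega_L^{t,t}|\;\le\;\log\|N_L\|\;\le\;\log\bigl(\min(1,|\det\Omega_L^{t,t}|+\varepsilon_L)\bigr)\;\le\; 0 .$$
Because $\Omega_L$ is Haar for every $L$, $\log|\det\Omega_L^{t,t}|$ has a fixed law independent of $L$. The upper bound converges almost surely to $\log|\det\Omega_L^{t,t}|$ in the sense that the difference tends to $0$. This is immediate when $\det\neq 0$, which holds almost surely. To integrate, we dominate the nonpositive quantity $\log\|N_L\|$ in absolute value by $|\log|\det\Omega^{t,t}||$, uniformly in $L$. Dominated convergence, applied after conditioning on the spectra or on a coupling realising $\Omega_L$ as a fixed Haar $\Omega$ independent of spectra, then gives
$$\mathbb E\log\|N_L\|\to\mathbb E\log|\det\Omega^{t,t}|.$$
The final inequality $\le 0$ holds because $|\det\Omega^{t,t}|\le 1$ for a submatrix of an orthogonal matrix.

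The main obstacle is the integrability of $\log|\det\Omega^{t,t}|$, which the domination step needs. To prove it, we would use the known density of the top $t\times t$ block of a Haar orthogonal matrix. For $2t\le n$ this density is proportional to $\det(I-X^\top X)^{(n-2t-1)/2}$, and for $2t>n$ we would use the complementary-block identity $|\det\Omega^{t,t}|=|\det\Omega^{n-t,n-t}_{\mathrm{bottom\text{-}right}}|$. Alternatively, we would write $\det\Omega^{t,t}$ as a product over sequential Gram--Schmidt steps of the first $t$ columns of a Gaussian matrix. That gives $\log|\det\Omega^{t,t}|$ as a difference of log-determinants of Wishart matrices, each with finite log-moment via the digamma formula used above. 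We would also verify carefully that independence in (b) legitimately allows replacing $\Omega_L$ by a single Haar $\Omega$ in distribution jointly with the spectra.
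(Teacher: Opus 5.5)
Your proof is correct and follows the paper's argument. Both reduce to $S^A_L\,\Omega\,S^B_L$, pass to $\wedge^t$, and use spectral separation to collapse everything onto the $(I_0,I_0)$ entry $\det\Omega^{t,t}$. Both then apply dominated convergence with the sandwich $\log|\det\Omega^{t,t}|\le\log\|N_L\|\le 0$, after replacing $\Omega_L$ by a single Haar $\Omega$ independent of the spectra. Your explicit error bound $\varepsilon_L$ and your check that $\log|\det\Omega^{t,t}|$ is integrable are genuine additions, since the paper's domination step uses that integrability without comment. Your Gram--Schmidt/Wishart argument for it works.
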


 For the notions and properties of exterior powers of matrices in the statement and proof of Lemma \ref{lem:log-expect of submultiplicative}, see paragraph \ref{subsubsection:Exterior powers of matrices}.

\begin{proof}
    Fix some integer $L$. It is harmless to multiply $A_L$ on the left (resp. $B_L$ on the right) by any random orthogonal matrix. Thus we can assume that $A_L = S_L^A V_L^A$, $B_L = U_L^BS_L^B$. Define
    $$\Omega = V_L^A U_L^B$$
    which depends on $L$ (but we omit the subscript for simplicity). By definition
    $$A_LB_L = S_L^A \Omega S_L^B.$$
    We are interested in the log-expectation
    $$\mathbb E \left[\log \frac{\|\wedge^t(A_LB_L)\|}{\|\wedge^t A_L\| \|\wedge^t B_L\|}\right] = \mathbb E \left[\log \frac{\|(\wedge^t S_L^A) \times (\wedge^t \Omega) \times (\wedge^t S_L^B)\|}{\|\wedge^t S_L^A\| \|\wedge^t S_L^B\|}\right]$$
    with
    $$\wedge^t S_L^A = \mathrm{diag}\left( \prod_{i \in I} s_{i,L}^A \right)_I, \; \; \wedge^t \Omega = (\det\Omega^{I,J})_{I,J}, \; \; \wedge^t S_L^B = \mathrm{diag}\left( \prod_{j \in J} s_{j,L}^B \right)_J.$$
    Let $I_0 = \{1,\dots,t\}$ which is the leading index for the standard basis of $\bigwedge^t \mathbb R^n$. Then
    $$\|\wedge^t S_L^A\| = \prod_{i \in I_0} s_{i,L}^A, \; \; \|\wedge^t S_L^B\| = \prod_{i \in I_0} s_{i,L}^B$$
    hence, by assumption \textit{(a)} the diagonal matrices $S_L^A / \|\wedge^t S_L^A\|$ and $S_L^B / \|\wedge^t S_L^B\|$ almost surely exist and converge to 
    $$E_{I_0,I_0} \coloneqq \mathrm{diag}(1,0\dots,0).$$
    By assumption \textit{(b)} the matrix $\Omega$ is independent of $S_L^A$ and $S_L^B$ and is identically distributed as $L$ varies, thus we may assume that it does not depend on $L$: the log-expectation above remains unchanged for every $L$. In this setting, almost surely
    $$\frac{1}{\|\wedge^t A_L\| \|\wedge^t B_L\|}\wedge^t(A_LB_L) \xrightarrow{L \to \infty} E_{I_0,I_0} \times \wedge^t \Omega \times E_{I_0,I_0}.$$
    Observe that the matrix on the right has vanishing coefficients everywhere except at index $(I_0,I_0)$ where it evaluates to $\det \Omega^{I_0,I_0}$:
    $$\|E_{I_0,I_0} \times \wedge^t \Omega \times E_{I_0,I_0}\| = |\det \Omega^{I_0,I_0}|.$$
    We deduce the convergence of the log-expectation via the dominated convergence theorem, with domination
    $$\log |\det \Omega^{I_0,I_0}|\le \log \frac{\|\wedge^t(A_LB_L)\|}{\|\wedge^t A_L\| \|\wedge^t B_L\|} \le 0.$$
    where the inequality on the left follows from $\|\cdot\|_{\infty} \le \|\cdot\|$ as matrix norms. We have shown
    $$\mathbb E \left[\log \frac{\|\wedge^t(A_LB_L)\|}{\|\wedge^t A_L\| \|\wedge^t B_L\|}\right] \xrightarrow{L \to \infty} \mathbb{E}[\log |\det \Omega^{I_0,I_0}|]$$
    which concludes by observing that the variable $\Omega^{t,t}$ from the statement has same distribution as $\Omega^{I_0,I_0}$.
\end{proof}

 We make use of Lemma \ref{lem:log-expect of submultiplicative} as follows. With notations of Theorem \ref{th:exp spectrum of iterated product} it follows from the almost sure convergence and the dominated convergence theorem that
$$\gamma_1 + \dots + \gamma_i = \lim_{L \to \infty} \frac{1}{L} \mathbb{E}[\log (s_{1,L} \dots s_{i,L})] = \lim_{L \to \infty} \frac{1}{L} \mathbb{E}[\log\|\wedge^i J_L\|].$$
Set $u_L = \mathbb{E}[\log\|\wedge^i J_L\|]$ and write
$$u_{p+q} + d(p,q) = u_p + u_q$$
with by definition
$$d(p,q) = - \mathbb E \left[ \log \frac{\|\wedge^i(J_p J_q)\|}{\|\wedge^iJ_p\| \|\wedge^iJ_q\|} \right].$$
By Corollary \ref{cor:exp spectrum for layers} and properties of $\sigma$-Ginibre matrices the assumptions of Lemma \ref{lem:log-expect of submultiplicative} are satisfied in the setting of Theorem \ref{th:singular value asymptotics}, thus
$$d(p,q) \xrightarrow{(p,q) \to \infty} - \mathbb E [\log |\det\Omega_i |] \coloneqq d_i \ge 0.$$
From Lemma \ref{lem:subadditive asymptotics} below we find
$$\frac{1}{L}u_L = \gamma_1+\dots+\gamma_i + \frac{d_i}{L} + o \left( \frac{1}{L}\right)$$
Subtracting consecutive values of $i$, we obtain the final statement of Theorem \ref{th:singular value asymptotics}.

\begin{lemma}\label{lem:subadditive asymptotics}
    Let $(u_L)$ be a sequence of real numbers satisfying for all $p,q \ge 1$
    $$u_{p+q} + d(p,q) = u_p + u_q$$
    with $d(p,q) \ge 0$ and $d(p,q) \xrightarrow{(p,q) \to \infty} d$. Then the sequence $(u_L/L)$ admits a finite limit $u$, and
    $$\frac{1}{L}u_L = u + \frac{d}{L} + o \left( \frac{1}{L}\right).$$
\end{lemma}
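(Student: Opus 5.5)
The plan is a Fekete-type argument combined with a \emph{doubling} identity that makes the correction term explicit. Throughout I read $(p,q)\to\infty$ as $\min(p,q)\to\infty$. I write $e(p,q)\coloneqq d(p,q)-d$, so that $e(p,q)\to 0$.

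First, since $d(p,q)\ge 0$, the relation gives $u_{p+q}\le u_p+u_q$. Fekete's lemma then yields $u_L/L\to u\coloneqq\inf_L u_L/L\in[-\infty,\infty)$. To show $u$ is finite, I specialise the relation to $p=q=M$, which gives $u_{2M}=2u_M-d(M,M)$. Iterating from a fixed $L$ gives
\[
\frac{u_{2^kL}}{2^kL}=\frac{u_L}{L}-\frac{1}{L}\sum_{j=0}^{k-1}\frac{d(2^jL,2^jL)}{2^{j+1}} .
\]
Choose $L$ large enough that $|e(M,M)|\le 1$ for all $M\ge L$. Then the terms $d(2^jL,2^jL)$ are bounded by $d+1$, so the series converges absolutely as $k\to\infty$. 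Hence the subsequence $u_{2^kL}/(2^kL)$ has a finite limit. Because the full sequence converges by Fekete, $u$ equals this finite value.

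For the first-order term, set $w_L\coloneqq u_L-Lu-d$. Substituting into the hypothesis gives
\[
w_{p+q}=w_p+w_q-e(p,q),
\qquad\text{in particular}\qquad
w_L=\frac{w_{2L}}{2}+\frac{e(L,L)}{2}.
\]
Iterating $k$ times,
\[
w_L=\frac{w_{2^kL}}{2^k}+\sum_{j=0}^{k-1}\frac{e(2^jL,2^jL)}{2^{j+1}}.
\]
Since $u_M/M\to u$, we have $w_M/M\to 0$. Writing $w_{2^kL}/2^k = L\cdot w_{2^kL}/(2^kL)$ with $L$ fixed shows that this term tends to $0$ as $k\to\infty$. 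Therefore
\[
w_L=\sum_{j\ge 0}\frac{e(2^jL,2^jL)}{2^{j+1}},
\qquad
|w_L|\le\sup_{M\ge L}|e(M,M)|\xrightarrow{L\to\infty}0 .
\]
So $u_L=Lu+d+o(1)$, which after dividing by $L$ is exactly the claim.

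The main obstacle is identifying the limit $u$ with the value obtained along the dyadic subsequence $2^kL$, and ensuring that the remainder $w_{2^kL}/2^k$ really vanishes; this is where finiteness of $u$ is needed. Once $u$ is known to be finite, the doubling series representation of $w_L$ gives the $o(1/L)$ estimate directly. Only the diagonal values $e(M,M)$ for large $M$ are used, so the interpretation of $(p,q)\to\infty$ causes no difficulty.
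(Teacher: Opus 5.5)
Your proof is correct, and its skeleton matches the paper's: Fekete for existence of the limit, telescoping along a geometric subsequence to get finiteness and an exact series for the remainder, then a tail bound. The difference is the chain you telescope along.

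The paper fixes $L\ge 2$ and telescopes along $L, L^2, L^3,\dots$ using $u_{LM}=Lu_M-\sum_{i=1}^{L-1}d(M,iM)$. Its remainder therefore involves $d(L^{k-1},iL^{k-1})$ for every $1\le i<L$. So it needs $d(p,q)-d$ to be uniformly small over pairs with ratio $q/p$ as large as $L-1$, which is where the full hypothesis $\min(p,q)\to\infty$ off the diagonal is used.

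Your dyadic chain $L,2L,4L,\dots$ only touches the diagonal. It gives the explicit bound $|u_L-Lu-d|\le\sup_{M\ge L}|d(M,M)-d|$. Hence it proves the lemma assuming only $d\ge 0$ (for Fekete) and $d(M,M)\to d$. Your identity $d(p,q)-d=w_p+w_q-w_{p+q}$ then recovers the off-diagonal convergence as $\min(p,q)\to\infty$ a posteriori.

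Your ordering of steps is also cleaner. The paper begins by replacing $u_L$ with $u_L+(L-1)d$ to assume $d=0$. This turns the defect into $d(p,q)-d$, which can be negative. The paper nonetheless still invokes Fekete afterwards and writes $0\le d(\cdot,\cdot)<\varepsilon$. This is harmless: apply Fekete to the unshifted sequence and use absolute values. You avoid the issue altogether by applying Fekete to the genuinely subadditive $u_L$, and introducing $w_L=u_L-Lu-d$, whose defects $e(p,q)$ may have either sign, only once $u$ is known to be finite.
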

\begin{proof}
    Replacing $u_L$ with $u_L + (L-1)d$, we may assume $d=0$. By Fekete's subadditive lemma, the sequence $(u_L/L)$ converges in $[-\infty,+\infty[$, let $u$ be its limit. Fix some integer $L \ge 2$, induction shows that for every $m \ge 1$,
    $$\frac{1}{L^m}u_{L^m} + (L-1)\sum_{k=2}^m \frac{\frac{1}{L-1}\sum_{i=1}^{L-1} d(L^{k-1},iL^{k-1})}{L^k} = \frac{1}{L} u_L.$$
    We begin by showing that the limit $u$ is finite. By assumption on $d(\cdot,\cdot)$, for any integer $L$ and $2 \le i < L$ the quantity $d(L^{k-1},iL^{k-1})$ is bounded as a function of $k$. Hence the same can be said of the average
    $$\frac{1}{L-1}\sum_{i=1}^{L-1} d(L^{k-1},iL^{k-1})$$
    which ensures the convergence of the series
    $$0 \le \sum_{k=2}^{\infty} \frac{\frac{1}{L-1}\sum_{i=1}^{L-1} d(L^{k-1},iL^{k-1})}{L^k} < \infty$$
    which in turn shows that the limit $u = \lim_{m \to \infty} \frac{1}{L^m} u_{L^m}$ is finite. We have thus established that for any $L\ge 2$,
    $$\frac{1}{L}u_L = u + (L-1)\sum_{k=2}^{\infty} \frac{\frac{1}{L-1}\sum_{i=1}^{L-1} d(L^{k-1},iL^{k-1})}{L^k}$$
    Consider $\varepsilon >0$ and $L_0$ such that for any $L\ge L_0$ and $1 \le i < L$, $0 \le d(L^{k-1},iL^{k-1}) < \varepsilon$. Hence for $L \ge L_0$
    $$\left|\frac{1}{L}u_L - u\right| \le \varepsilon \sum_{k=2}^{\infty} \frac{L-1}{L^k} = \frac{\varepsilon}{L}.$$
    Thus $L(\frac{1}{L}u_L - u) \xrightarrow{L \to \infty} 0$ which is what we needed to show.
\end{proof}

\newpage
\section{Proof of theorem \ref{th:alignment}}\label{app:proof of alignment}

\subsection{Another statement}

Theorem \ref{th:alignment} is a consequence of the slightly more general statement.

\begin{theorem}\label{th:pre-alignment}
    Let $(S_L)$ be a sequence of diagonal matrices and let $C=(c_{i,j})$ be a symmetric matrix such that
    \begin{enumerate}
        \item[(i)] The diagonal coefficients of $S_L$ are non-negative non-increasing: $s_{1,L} \ge \dots \ge s_{n,L} \ge 0$,
        \item[(ii)] (spectral separation) There is an integer $1 \le r \le n-1$ such that
        $$\forall 1 \le i \le r, \; \; \frac{s_{i+1,L}}{s_{i,L}} \xrightarrow{L \to \infty} 0,$$
        \item[(iii)] For every $1 \le i \le r$, if $C^{i,i}$ denotes the top-left $i \times i$ sub-matrix of $C$, then $C^{r,r}$ is positive definite.
    \end{enumerate}
    For each integer $L$, write the SVD of $X_L \coloneqq S_L C S_L$ as
    $$X_L = U_L \Sigma_L U_L^{\top}, \; \; \Sigma_L = \mathrm{diag}(\sigma_{1,L},\dots,\sigma_{n,L}), \; \; |\sigma_j| \ge |\sigma_{j+1}| \; \;\forall 1 \le j \le n$$
    
     Denote by $e_1,\dots,e_n$ the standard basis of $\mathbb R^n$ and by $u_{1,L},\dots,u_{n,L}$ the columns of $U_L$ (which coincide with the singular vectors of $X_L$). We can assume $(e_i^{\top} u_{i,L}) \ge 0$ for every $1\le i \le n$.
    
     Then for all $1 \le i \le r$:
    \begin{enumerate}
        \item[(a)] The singular vector $u_i$ aligns with $e_i$, i.e. $u_i \xrightarrow{L \to \infty} \pm e_i$,
        \item[(b)] With the convention that $C^{0,0} = 1$, then
        $$\frac{\sigma_{i,L}}{s_{i,L}^2} \xrightarrow{L \to \infty} \frac{\det C^{i,i}}{\det C^{i-1,i-1}} \eqcolon \ell_i$$
        \item[(c)] For all $1 \le j \le r$, $j \neq i$,
        $$(e_j^{\top} u_{i,L}) = 
        \begin{cases}
            (T_C^{-1})_{i,j} \frac{s_{j,L}}{s_{i,L}} + o\left(\frac{s_{j,L}}{s_{i,L}}\right) & \text{if } j<i, \\
            (T_C)_{j,i} \frac{s_{i,L}}{s_{j,L}} + o\left(\frac{s_{i,L}}{s_{j,L}}\right) & \text{if } i<j.
        \end{cases}$$
        where $T_C$ is the $r \times r$ lower-triangular matrix with all-$1$ diagonal such that
        $$C^{r,r} = T_C \Sigma_C {T_C}^{\top}$$
        is the Cholesky decomposition of $C^{r,r}$ (in which case $\Sigma_C = \mathrm{diag}(\ell_1,\dots,\ell_r)$).
    \end{enumerate}
    Hence the top-left $r \times r$ sub-matrix $U_L^{r,r}$ converges to the identity matrix, with off-diagonal coefficients decaying faster the further away they are from the diagonal.
\end{theorem}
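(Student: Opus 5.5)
We will prove Theorem~\ref{th:pre-alignment} by induction on $i$, working with exterior powers as in Appendix~\ref{app:Recollections on exterior powers}. Since $X_L = S_L C S_L$ is symmetric, its singular vectors are eigenvectors and $|\sigma_{j,L}|$ are its singular values. By Remark~\ref{rem:exterior powers involved}, the top singular vector of $\wedge^i X_L$ is $u_{1,L}\wedge\dots\wedge u_{i,L}$, with singular value $|\sigma_{1,L}\cdots\sigma_{i,L}|$. Moreover
\[
\wedge^i X_L = (\wedge^i S_L)(\wedge^i C)(\wedge^i S_L),
\]
where $\wedge^i S_L$ is diagonal with entries $\prod_{k\in I} s_{k,L}$.

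\textbf{Step 1: leading order.} Normalise by $(s_{1,L}\cdots s_{i,L})^2$. By (ii), every diagonal entry of $\wedge^i S_L/(s_{1,L}\cdots s_{i,L})$ with $I\neq I_0=\{1,\dots,i\}$ tends to $0$. Hence
\[
\frac{\wedge^i X_L}{(s_{1,L}\cdots s_{i,L})^2} \longrightarrow \det C^{i,i}\, E_{I_0,I_0}.
\]
By (iii), $\det C^{i,i}>0$, so the limit has a simple top eigenvalue. Continuity of simple eigenpairs gives two consequences:
\[
|\sigma_{1,L}\cdots\sigma_{i,L}|/(s_{1,L}\cdots s_{i,L})^2\to\det C^{i,i},
\qquad
u_{1,L}\wedge\dots\wedge u_{i,L}\to\pm e_1\wedge\dots\wedge e_i.
\]
The first statement gives (b) after dividing consecutive $i$; the sign of $\sigma_{i,L}$ is positive because $\det C^{i,i}/\det C^{i-1,i-1}>0$. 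The second statement says $\mathrm{Span}(u_{1},\dots,u_{i})\to \mathrm{Span}(e_1,\dots,e_i)$ for every $i\le r$, by the geometric discussion in paragraph~\ref{subsubsection:Geometric meaning of exterior powers}. Intersecting the flag limits for $i$ and $i-1$ gives $u_{i,L}\to\pm e_i$, which is (a); the sign convention fixes $+$.

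\textbf{Step 2: first-order terms (c).} Here I would avoid exterior powers and substitute $u_{i,L} = S_L^{-1}w$-type rescalings directly into the eigen-equation. Write $X_L u_i = \sigma_i u_i$ and set $y = S_L u_i / s_{i,L}$, so that $S_L C y = (\sigma_i/s_{i,L}^2)\, s_{i,L} S_L^{-1}\cdots$. More cleanly, define the rescaled matrix
\[
Q_L := D_L^{-1} U_L^{r,r} D_L,\qquad D_L=\mathrm{diag}(s_{1,L},\dots,s_{r,L}).
\]
Using the block structure, the equation $X_L U_L = U_L\Sigma_L$ can be rewritten on the top-left block, with contributions of indices $>r$ being $o(\cdot)$ by (ii). The aim is to show that the lower-triangular part of $Q_L$ converges to $T_C$ and the upper-triangular part of $D_L U_L^{r,r} D_L^{-1}$ converges to $(T_C^{-1})^\top$. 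The mechanism is that orthogonality of $U_L$ together with $C^{r,r}D_L$-weighting forces an LU-type factorisation, and uniqueness of the Cholesky factorisation $C^{r,r}=T_C\Sigma_C T_C^\top$ identifies the limit. The diagonal of $\Sigma_C$ equals $(\ell_i)$ by (b).

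I expect Step 2 to be the main obstacle. The difficulty is controlling the entries $j>i$ and $j<i$ simultaneously at their respective (different) scales, and showing that the entries of $U_L$ beyond index $r$ contribute only lower order. I would first try a componentwise argument. Solving the $j$-th row of the eigen-equation gives
\[
s_j\sum_k c_{jk} s_k (u_i)_k = \sigma_i (u_i)_j.
\]
For $j>i$, using $(u_i)_k\approx \delta_{ik}$ from (a), this yields $(u_i)_j\approx s_j s_i c_{ji}'/\sigma_i \sim (\cdot)\, s_j/s_i$, after an inductive Gaussian-elimination bookkeeping that produces $(T_C)_{j,i}$. The case $j<i$ then follows from orthogonality of $U_L$: the upper part is the transpose-inverse of the lower part to first order, which produces $(T_C^{-1})_{i,j}$.
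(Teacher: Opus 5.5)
Your Step 1 is the paper's argument for (a) and (b): normalise $\wedge^i X_L$, apply the rank-one perturbation lemma, then intersect consecutive flags. That part is fine. The gap is Step 2, which carries all of (c) and which you leave as a plan. Your componentwise version is circular in the order you give it. In row $j>i$ of $X_Lu_{i,L}=\sigma_{i,L}u_{i,L}$, the terms $c_{jk}s_{k,L}(u_{i,L})_k$ with $k<i$ are not negligible next to the $k=i$ term. The reason is that $(u_{i,L})_k$ is in general of order $s_{i,L}/s_{k,L}$ (its limiting coefficient is $(T_C^{-1})_{i,k}$), so $s_{k,L}(u_{i,L})_k$ is of order $s_{i,L}$. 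Plugging in $(u_{i,L})_k\approx\delta_{ik}$ therefore gives the coefficient $c_{ji}/\ell_i$, which differs from $(T_C)_{j,i}$ as soon as $i\ge 2$. The Schur-complement correction you put into $c'_{ji}$ is exactly the first-order upper part of the \emph{same} column, and you only plan to recover that afterwards. Recovering it from $u_{k,L}^\top u_{i,L}=0$ in turn requires $\sum_{m>r}(u_{k,L})_m(u_{i,L})_m=o(s_{i,L}/s_{k,L})$ for $k<i$. That means quantitative decay bounds such as $|(u_{i,L})_m|=O(s_{m,L}/s_{i,L})$ for $m>i$. The qualitative convergence from (a) is useless at this scale, since $s_{i,L}/s_{k,L}\to 0$, and nothing in the proposal supplies such bounds. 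Your first sketch names the right object and the right closing tool (Cholesky uniqueness), but not the identity that connects them.

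That identity is the missing idea. From $C=S_L^{-1}X_LS_L^{-1}$ and $X_L=U_L\Sigma_LU_L^\top$ one gets
\[
C=P_L\Lambda_LP_L^{\top},\qquad P_L=S_L^{-1}U_LS_L,\qquad \Lambda_L=S_L^{-1}\Sigma_LS_L^{-1},
\]
and your $Q_L$ is exactly $P_L^{r,r}$. Since $(P_L)_{i,j}=(u_{j,L})_i\,s_{j,L}/s_{i,L}$ with $|(u_{j,L})_i|\le 1$, the strictly upper part of $P_L^{r,r}$ and the top-right block $\varepsilon_L$ vanish for free. The top-left block gives $C^{r,r}=P_L^{r,r}\Lambda_L^{r,r}(P_L^{r,r})^\top+\varepsilon_L\Lambda_L'\varepsilon_L^\top$, with $\Lambda_L'$ the bottom-right block of $\Lambda_L$. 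Here $\Lambda_L^{r,r}\to\Sigma_C$ by (b), and the tail is entrywise at most $(n-r)|\sigma_{r+1,L}|/s_{r,L}^2\to 0$. This is where $\sigma_{r+1,L}/\sigma_{r,L}\to 0$ is used; it is the $i=r$ case of your Step 1. Because $\Sigma_C$ is positive definite, $P_L^{r,r}$ stays bounded. Its lower-triangular part $Q$ (diagonal $\to 1$ by (a)) then satisfies $Q\Sigma_CQ^\top\to C^{r,r}$. Continuity and uniqueness of the Cholesky factorisation give $P_L^{r,r}\to T_C$, with no a priori decay estimates needed. The paper obtains the upper entries by transposing and inverting. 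Alternatively, read the eigen-equations for $u_{1,L},\dots,u_{r,L}$ on the first $r$ coordinates and rescale. With $W_L=S_LU_LS_L^{-1}$ this gives $C^{r,r}W_L^{r,r}+o(1)=P_L^{r,r}\Lambda_L^{r,r}$, since the entries of $W_L$ in rows $m>r$, columns $i\le r$, are bounded by $s_{r+1,L}/s_{r,L}$. Hence $W_L^{r,r}\to(C^{r,r})^{-1}T_C\Sigma_C=(T_C^{-1})^\top$. Finally, your scale $s_{j,L}/s_{i,L}$ for $j>i$ is the correct one. Both ratios in the printed (c) are inverted (as written they diverge), as $P_L^{r,r}\to T_C$ and Remark~\ref{rem:first order alignment} confirm.
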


\begin{remark}
    With notations of Theorem \ref{th:pre-alignment}, statements $(b)$ and $(c)$ are equivalent to
    \begin{align*}
        (S_L^{r,r})^{-1} \Sigma_L (S_L^{r,r})^{-1} & \xrightarrow{L \to \infty} \Sigma_C \\
        (S_L^{r,r})^{-1} U_L^{r,r} S_L^{r,r} & \xrightarrow{L \to \infty} T_C, \\
        S_L^{r,r} U_L^{r,r} (S_L^{r,r})^{-1} & \xrightarrow{L \to \infty} (T_C^{-1})^{\top}.
    \end{align*}
\end{remark}

\begin{remark}
    Statement $\emph{(c)}$ of Theorem \ref{th:pre-alignment} can be made slightly stronger. Even for $1 \le i \le r < j$, there is a constant $K_{j,i}$ such that
    $$(e_j^{\top} u_{i,L}) = K_{j,i}\frac{s_{i,L}}{s_{j,L}} + o\left(\frac{s_{i,L}}{s_{j,L}}\right)$$
    but this time the connection between $C$ and $K_{j,i}$ is less clear.
\end{remark}

\subsection{Proof of Theorem \ref{th:pre-alignment}}

 We begin by recalling a standard statement from matrix perturbation theory, the proof of which we omit.

\begin{lemma}\label{lem:rank 1 matrix perturbation}
    Let $(M_L)$ be a sequence of symmetric matrices. Let $\sigma_L$ be the largest (in absolute value) eigenvalue of $M_L$, and $u_L$ be a corresponding normalized eigenvector. Assume that
    $$M_L \xrightarrow{L \to \infty} e_1e_1^{\top}.$$
    Then $\sigma_L \xrightarrow{L \to \infty} 1$, $u_L \xrightarrow{L \to \infty} \pm e_1$, and all other eigenvalues of $M_L$ converge to zero.
\end{lemma}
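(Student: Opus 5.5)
The statement to prove is Lemma~\ref{lem:rank 1 matrix perturbation}: a symmetric sequence $M_L \to e_1e_1^\top$ has top eigenvalue tending to $1$, a top eigenvector tending to $\pm e_1$, and all other eigenvalues tending to $0$. I would get this from continuity of eigenvalues for symmetric matrices, followed by a direct estimate on the eigenvector. Set $E_L \coloneq M_L - e_1e_1^\top$, so that $\|E_L\| \to 0$ in operator norm; all norms on a finite-dimensional space are equivalent, so the mode of convergence does not matter.

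\textbf{Step 1: eigenvalues.} Weyl's inequality says each ordered eigenvalue of a symmetric matrix moves by at most $\|E_L\|$ under a symmetric perturbation of norm $\|E_L\|$. The eigenvalues of $e_1e_1^\top$ are $1,0,\dots,0$. For large $L$, the eigenvalues of $M_L$ therefore split into one lying in $[1-\|E_L\|,1+\|E_L\|]$ and $n-1$ lying in $[-\|E_L\|,\|E_L\|]$. Once $\|E_L\|<1/2$, the eigenvalue of largest absolute value is the one near $1$. Hence $\sigma_L \to 1$ and all other eigenvalues tend to $0$.

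\textbf{Step 2: eigenvector.} Let $u_L$ be a unit vector with $M_L u_L = \sigma_L u_L$. Applying $e_1e_1^\top = M_L - E_L$ to $u_L$ gives
\[
(e_1^\top u_L)\,e_1 = \sigma_L u_L - E_L u_L .
\]
Since $\sigma_L\to1$ and $\|E_L u_L\|\le\|E_L\|\to0$, we get
\[
\bigl\|u_L - (e_1^\top u_L)e_1\bigr\| \le |1-\sigma_L|/\sigma_L + \|E_L\|/\sigma_L \to 0 .
\]
Taking norms, $|e_1^\top u_L| \to 1$. If $e_1^\top u_L$ does not change sign (the sign convention used in Theorem~\ref{th:pre-alignment}), then $u_L \to \pm e_1$. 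Without such a convention, the statement should be read as: the distance from $u_L$ to $\{e_1,-e_1\}$ tends to $0$. This is what "$\to \pm e_1$" means here, since an eigenvector is only defined up to sign.

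\textbf{Main obstacle.} The argument is routine; the only delicate point is Step 1. One must make sure the eigenvalue of largest \emph{absolute} value is the one near $1$, and not a large negative one. Weyl's bound handles this once $\|E_L\|<1/2$.
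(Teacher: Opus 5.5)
Your proof is correct and complete. Weyl's inequality identifies the eigenvalue of largest modulus once $\|M_L - e_1e_1^\top\| < 1/2$. The eigen-equation then bounds the component of $u_L$ orthogonal to $e_1$ by $|1-\sigma_L| + \|M_L - e_1e_1^\top\| \le 2\|M_L - e_1e_1^\top\|$, and you read $\pm e_1$ correctly as convergence up to sign.

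The paper gives no proof to compare against: it states the lemma as a standard fact from matrix perturbation theory and omits the argument. Your Weyl-plus-eigen-equation argument is the standard one that fills that gap.
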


 From this we deduce by induction:
\begin{lemma}\label{lem:alignment (a) (b)}
    Statements \emph{(a)} and \emph{(b)} of Theorem \ref{th:pre-alignment} hold. Moreover, $\frac{\sigma_{r+1,L}}{\sigma_{r,L}} \xrightarrow{L \to \infty} 0$.
\end{lemma}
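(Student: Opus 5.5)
We argue by induction on $i$, using exterior powers to peel off one singular direction at a time, with Lemma \ref{lem:rank 1 matrix perturbation} as the only analytic input. For $1\le i\le r$, consider $\wedge^i X_L = (\wedge^i S_L)(\wedge^i C)(\wedge^i S_L)$. Its eigenvalues are the products $\prod_{k\in I}\sigma_{k,L}$, and its eigenvectors are the wedges $u_{k_1,L}\wedge\dots\wedge u_{k_i,L}$. The leading diagonal entry of $\wedge^i S_L$ is $\pi_{i,L}\coloneqq s_{1,L}\cdots s_{i,L}$, indexed by $I_0=\{1,\dots,i\}$. For any other index $I$, the ratio $\prod_{k\in I}s_{k,L}/\pi_{i,L}$ contains a factor $s_{j,L}/s_{j',L}$ with $j>j'$ and $j'\le i\le r$; by the separation hypothesis (ii) (iterated ratios tend to $0$), this ratio tends to $0$. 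Hence
\[
\pi_{i,L}^{-2}\,\wedge^i X_L \xrightarrow[L\to\infty]{} (\wedge^i C)_{I_0,I_0}\, e_{I_0}e_{I_0}^\top = \det C^{i,i}\, e_{I_0}e_{I_0}^\top ,
\]
because the normalized $\wedge^i S_L$ converges to $E_{I_0,I_0}$, exactly as in the proof of Lemma \ref{lem:log-expect of submultiplicative}. By (iii), $\det C^{i,i}>0$, since leading principal minors of a positive definite matrix are positive.

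Dividing by $\det C^{i,i}$ and applying Lemma \ref{lem:rank 1 matrix perturbation} to this symmetric sequence gives three conclusions. First, the top eigenvalue $\sigma_{1,L}\cdots\sigma_{i,L}$ of $\wedge^iX_L$ satisfies $\sigma_{1,L}\cdots\sigma_{i,L}/\pi_{i,L}^2\to\det C^{i,i}$. Second, the top eigenvector $u_{1,L}\wedge\dots\wedge u_{i,L}$ tends to $\pm e_1\wedge\dots\wedge e_i$. Third, every other eigenvalue is $o(\pi_{i,L}^2)$.

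One point needs care: we must check that the top eigenvalue of $\wedge^iX_L$ is really $\sigma_{1,L}\cdots\sigma_{i,L}$ with the ordering by $|\sigma|$. This holds because the maximum of $|\prod_{k\in I}\sigma_k|$ over $I$ is attained at $I_0$. Dividing the eigenvalue limit at level $i$ by the one at level $i-1$ (with the convention $C^{0,0}=1$) then gives (b): $\sigma_{i,L}/s_{i,L}^2\to \ell_i$.

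For (a), write $E_{i,L}=\mathrm{Span}(u_{1,L},\dots,u_{i,L})$. By paragraph \ref{subsubsection:Geometric meaning of exterior powers}, the wedge convergence gives $E_{i,L}\to\mathrm{Span}(e_1,\dots,e_i)$ for every $i\le r$. Now $u_{i,L}$ is the unit vector in $E_{i,L}$ orthogonal to $E_{i-1,L}$. Since $E_{i,L}$ and $E_{i-1,L}$ converge to $\mathrm{Span}(e_1,\dots,e_i)$ and $\mathrm{Span}(e_1,\dots,e_{i-1})$ respectively, their orthogonal complements inside converge too, so $u_{i,L}\to\pm e_i$. The sign convention then fixes $u_{i,L}\to e_i$.

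Finally, the additional claim $\sigma_{r+1,L}/\sigma_{r,L}\to0$ comes from the level $i=r$. The eigenvalue $\sigma_{1,L}\cdots\sigma_{r-1,L}\sigma_{r+1,L}$ of $\wedge^rX_L$ is not the top one, so it is $o(\pi_{r,L}^2)$. Combining this with $\sigma_{1,L}\cdots\sigma_{r,L}\sim \det C^{r,r}\,\pi_{r,L}^2$, which is nonzero, gives the ratio.

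The main obstacle is the eigenvalue-ordering and multiplicity issue. The SVD ordering uses $|\sigma_j|$, and $\wedge^iX_L$ may have repeated or negative eigenvalues. We must ensure the normalized top eigenvalue $\sigma_{1,L}\cdots\sigma_{i,L}$ is the one singled out by Lemma \ref{lem:rank 1 matrix perturbation}, and that the eigenvector is an elementary wedge of the chosen $u_{k,L}$ (not just some vector in a degenerate eigenspace). We handle this by proceeding inductively. Once level $i-1$ shows $|\sigma_{i-1,L}|\gg|\sigma_{i,L}|$, the top eigenvalue of $\wedge^iX_L$ is simple for large $L$. Its eigenspace is then spanned by $u_{1,L}\wedge\dots\wedge u_{i,L}$, which justifies identifying the limit eigenvector with that wedge.
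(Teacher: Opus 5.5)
Your proposal is correct and follows the paper's proof essentially step for step: normalize $\wedge^i X_L$ by its leading diagonal coefficient $\det C^{i,i}\prod_{k\le i}s_{k,L}^2$, apply Lemma~\ref{lem:rank 1 matrix perturbation} at each level $i\le r$, divide consecutive levels for (b), pass from wedge convergence to convergence of spans and take orthogonal complements for (a), and read off $\sigma_{r+1,L}/\sigma_{r,L}\to 0$ from a non-leading eigenvalue at level $r$. One small correction to your last paragraph: the level-$(i-1)$ gap $|\sigma_{i,L}|\ll|\sigma_{i-1,L}|$ does not give simplicity of the top eigenvalue of $\wedge^i X_L$, since a tie with $\sigma_{1,L}\cdots\sigma_{i-1,L}\sigma_{i+1,L}$ is governed by $\sigma_{i+1,L}$ versus $\sigma_{i,L}$; simplicity instead follows from the level-$i$ application itself, and in any case Lemma~\ref{lem:rank 1 matrix perturbation} applies to any normalized top eigenvector, so $u_{1,L}\wedge\dots\wedge u_{i,L}$ is covered directly.
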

\begin{proof}
    For the base case, notice that the matrix $\frac{1}{s_{1,L}^2}X_L$ converges to $e_1e_1^{\top}$. By Lemma \ref{lem:rank 1 matrix perturbation}
    $$u_1 \xrightarrow{L \to \infty} \pm e_1, \; \; \frac{\sigma_{1,L}}{s_{1,L}^2} \to c_{1,1} = \det C^{1,1} = \ell_1.$$
    Assume the statements have been established up to $1 \le i-1 < r$. Consider now the matrix
    $$\wedge^{i} (X_L) = (\wedge^{i}S_L)(\wedge^i C) (\wedge^i S_L)$$
    with top-left coefficient given by $\eta_L = \det C^{i,i} \prod_{k=1}^i s_{k,L}^2$. The spectral separation assumption implies
    $$\frac{1}{\eta_L} (\wedge^i X_L) \to (e_1 \wedge \dots \wedge e_i) (e_1 \wedge \dots \wedge e_i)^{\top}$$
    which shows that by Lemma \ref{lem:rank 1 matrix perturbation} again
    $$\frac{\prod_{k=1}^i \sigma_{k,L}}{\prod_{k=1}^i s_{k,L}^2} =\prod_{k=1}^i \frac{\sigma_{k,L}}{s_{k,L}^2} \to \det C^{i,i}, \; \; u_{1,L} \wedge \dots \wedge u_{i,L} \to \pm e_1 \wedge \dots \wedge e_i.$$
    By induction the left-hand side yields $\sigma_{i,L} / s_{i,L}^2 \to \ell_i$. Moreover, according to paragraph \ref{subsubsection:Geometric meaning of exterior powers} the subspace $\mathrm{Span}(u_{1,L},\dots,u_{i,L})$ aligns with  $\mathrm{Span}(e_1,\dots,e_i)$. By induction, this was already the case for $i-1$. As $u_{i,L}$ is orthogonal to $\mathrm{Span}(u_{1,L},\dots,u_{i-1,L})$, it aligns with the orthogonal complement of $\mathrm{Span}(e_1,\dots,e_{i-1})$ in $\mathrm{Span}(e_1,\dots,e_{i})$, hence $u_{i,L} \to \pm e_i$.

    For the remaining claim, consider the case $i=r$. The second highest eigenvalue of $\frac{1}{\eta_L}(\wedge^r X_L)$ is precisely $\sigma_{r+1,L}/\sigma_{r,L}$, which converges to zero according to Lemma \ref{lem:rank 1 matrix perturbation}.
\end{proof}

 Lemma \ref{lem:alignment (a) (b)} establishes in particular that
\[(S_L^{r,r})^{-1} \Sigma_L (S_L^{r,r})^{-1} \xrightarrow{L \to \infty} \Sigma_C.\]
We may now prove the remaining statement. 

\begin{lemma}\label{lem:alignment (c)}
    Statement \emph{(c)} of Theorem \ref{th:pre-alignment} holds.
\end{lemma}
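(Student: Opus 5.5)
We prove (c) by rescaling the eigen-equation and passing to the limit, in the spirit of the reformulation
\[
(S_L^{r,r})^{-1}U_L^{r,r}S_L^{r,r}\to T_C, \qquad S_L^{r,r}U_L^{r,r}(S_L^{r,r})^{-1}\to (T_C^{-1})^\top .
\]
Fix $1\le i\le r$ and write the eigen-equation $X_Lu_{i,L}=\sigma_{i,L}u_{i,L}$, that is $S_L C S_L u_{i,L}=\sigma_{i,L}u_{i,L}$. By Lemma \ref{lem:alignment (a) (b)} we already know $u_{i,L}\to e_i$ and $\sigma_{i,L}\sim \ell_i s_{i,L}^2$.

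\textbf{Rescaled vector.} We introduce $w_L\coloneqq S_L u_{i,L}/s_{i,L}$, whose $j$-th coordinate is $(s_{j,L}/s_{i,L})(e_j^\top u_{i,L})$. For $j\le r$ the eigen-equation gives, coordinatewise,
\[
(Cw_L)_j=\frac{\sigma_{i,L}}{s_{i,L}s_{j,L}}\,(e_j^\top u_{i,L}).
\]

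\textbf{Case $j<i$.} Since $|e_j^\top u_{i,L}|\le 1$, we get $|(Cw_L)_j|\lesssim \ell_i s_{i,L}/s_{j,L}\to 0$. Hence $(Cw_L)_j\to 0$ for all $j<i$.

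\textbf{Limit of $w_L$ on the first $i$ coordinates.} Coordinates $k<i$ of $w_L$ may a priori be large. We avoid this by instead rescaling with $S_L^{-1}$: set $z_L\coloneqq s_{i,L}S_L^{-1}u_{i,L}$ restricted to the first $r$ coordinates, so that $(z_L)_j=(s_{i,L}/s_{j,L})(e_j^\top u_{i,L})$.

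\textbf{Case $j>i$, $j\le r$.} Here $s_{i,L}/s_{j,L}\to\infty$, and this is the main obstacle: we need $e_j^\top u_{i,L}=O(s_{j,L}/s_{i,L})$ a priori. We obtain it from orthogonality together with the case $j<i$ applied to $u_{j,L}$. Concretely, we treat the whole matrix at once. Set $P_L\coloneqq (S_L^{r,r})^{-1}U_L^{r,r}S_L^{r,r}$ and $Q_L\coloneqq S_L^{r,r}U_L^{r,r}(S_L^{r,r})^{-1}$. Since $U_L$ is orthogonal and $U_L^{r,r}$ is asymptotically the full relevant block (by (a), the entries $e_j^\top u_{i,L}$ with $j>r$, $i\le r$ tend to $0$), we expect $Q_L\approx (P_L^{-1})^\top$ up to errors that vanish after rescaling. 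This is the delicate point, because the $j>r$ coordinates must be controlled with the separation $s_{r+1}/s_r\to 0$.

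\textbf{Identifying the limit.} We write the eigen-equations for all $i\le r$ simultaneously in block form:
\[
S^{r,r}C^{r,r}S^{r,r}U^{r,r}+(\text{terms involving }S_{>r})=U^{r,r}\Sigma^{r,r}.
\]
Multiplying on the left by $(S^{r,r})^{-1}$ and on the right by $(S^{r,r})^{-1}$ turns this into
\[
C^{r,r}Q_L^\top\approx P_L\,\Sigma_C .
\]
The terms involving $S_{>r}$ are negligible because they carry factors $s_{r+1}/s_r$. We then use the entries that are already known to vanish: $P_L$ has vanishing strictly upper part, since its $(j,i)$ entry with $j<i$ is $(s_{i}/s_{j})(e_j^\top u_i)\to 0$, and its diagonal tends to $1$. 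Similarly $Q_L$ has vanishing strictly lower part with diagonal tending to $1$.

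\textbf{Conclusion.} We show boundedness of $P_L$ and $Q_L$ by induction on the distance to the diagonal, using $u_{i,L}\perp u_{k,L}$. Then any limit point $(P,Q)$ satisfies $C^{r,r}=P\Sigma_C Q^{-\top}\ldots$; more precisely $C^{r,r}Q^\top=P\Sigma_C$ with $P$ unit lower-triangular and $Q^\top$ unit lower-triangular. Orthogonality gives $Q^\top P=I$, hence $C^{r,r}=P\Sigma_C P^\top$. Uniqueness of the Cholesky factorization then forces $P=T_C$ and $Q=(T_C^{-1})^\top$. Reading off the $(j,i)$ entries yields the two asymptotics in (c). The main obstacle remains the uniform boundedness of $P_L$ and $Q_L$, since without it limit points need not exist. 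We would first try to derive it from Cramer's rule applied to the exterior-power eigenvectors, which are controlled by Lemma \ref{lem:rank 1 matrix perturbation}.
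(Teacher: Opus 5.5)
There is a genuine gap. You identify the right rescaled objects, but you never prove the step everything rests on: that $P_L=(S_L^{r,r})^{-1}U_L^{r,r}S_L^{r,r}$ and $Q_L=S_L^{r,r}U_L^{r,r}(S_L^{r,r})^{-1}$ stay bounded. You announce an induction ``on the distance to the diagonal, using $u_{i,L}\perp u_{k,L}$'' but never carry it out, and at the end you concede it is the main obstacle. Without it, limit points need not exist and the Cholesky identification has nothing to act on.

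The orthogonality relation is not free either. The product $Q_L^\top P_L=(S_L^{r,r})^{-1}(U_L^{r,r})^\top U_L^{r,r}S_L^{r,r}$ differs from $I_r$ by the matrix with $(i,j)$ entry $\frac{s_{j,L}}{s_{i,L}}\sum_{k>r}(e_k^\top u_{i,L})(e_k^\top u_{j,L})$, which carries the diverging factor $s_{j,L}/s_{i,L}$ when $j<i$. Controlling it needs an a priori bound such as $e_k^\top u_{i,L}=O(s_{k,L}/s_{i,L})$ for $k>r$. Extracting that from the eigen-equation requires exactly the boundedness of $Q_L$ you have not established, so the argument is circular as written.

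There is also a transposition slip. Rescaling the eigen-equation gives $C^{r,r}Q_L=P_LD_{1,L}+o(1)$ with $D_{1,L}=\mathrm{diag}(\sigma_{i,L}/s_{i,L}^2)_{i\le r}\to\Sigma_C$, not $C^{r,r}Q_L^\top\approx P_L\Sigma_C$. With your version, $Q^\top P=I$ would give $C^{r,r}=P\Sigma_C P$ rather than $P\Sigma_C P^\top$.

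The paper avoids these difficulties by rescaling the spectral decomposition rather than the eigen-equation. It writes $C=S_L^{-1}U_L\Sigma_LU_L^\top S_L^{-1}=P_L(S_L^{-1}\Sigma_LS_L^{-1})P_L^\top$ with $P_L=S_L^{-1}U_LS_L$, a symmetric identity in a single unknown. Its top-left block is
\[
C^{r,r}=P_L^{r,r}D_{1,L}(P_L^{r,r})^\top+\varepsilon_LD_{2,L}\varepsilon_L^\top .
\]
The tail vanishes entrywise, since $|\sigma_{r+k,L}|/(s_{i,L}s_{j,L})\le(|\sigma_{r+1,L}|/\sigma_{r,L})(\sigma_{r,L}/s_{r,L}^2)\to0$ by Lemma \ref{lem:alignment (a) (b)}. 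Because $D_{1,L}\to\Sigma_C$ has positive diagonal, taking traces bounds $P_L^{r,r}$ immediately. Its strictly upper part tends to $0$ and its diagonal to $1$, so uniqueness of the $T\Sigma T^\top$ factorization forces $P_L^{r,r}\to T_C$.

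Your own (corrected) equation can also be closed without any boundedness induction or orthogonality. In $C^{r,r}Q_L=P_LD_{1,L}+o(1)$, look only at entries $(j,i)$ with $j<i$. There the right-hand side tends to $0$, while $(Q_L)_{i,i}\to1$ and $(Q_L)_{m,i}\to0$ for $m>i$. So the first $i-1$ entries of column $i$ of $Q_L$ solve a linear system with the invertible matrix $C^{i-1,i-1}$, and its right-hand side converges to minus the first $i-1$ entries of column $i$ of $C$. Hence $Q_L$ converges, necessarily to the unit upper-triangular $(T_C^{-1})^\top$, which solves the same system because $C^{r,r}(T_C^{-1})^\top=T_C\Sigma_C$ is lower-triangular. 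Then $P_L\to C^{r,r}(T_C^{-1})^\top\Sigma_C^{-1}=T_C$.
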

\begin{proof}
    Write $C^{r,r} = T_C \Sigma_C T_{C}^{\top}$ for the Cholesky decomposition as in the statement. By definition of $X_L$, we have
    \begin{equation}\label{eq:C=SLXLSL-1}
        C = S_L^{-1} X_L S_L^{-1} = S_L^{-1}U_L \Sigma_L U_L^{\top} S_L^{-1} = (S_L^{-1} U_L S_L)(S_L^{-1} \Sigma_L S_L^{-1})(S_L U_L^{\top} S_L^{-1}).
    \end{equation}
    Let $P_L = S_L^{-1} U_L S_L$, which we write in blocks as
    \[P_L =
    \begin{pmatrix}
        P_L^{r,r} & \varepsilon_L \\
        * & *
    \end{pmatrix}.\]
    The spectral separation assumption implies that the $r \times (n-r)$ matrix $\varepsilon_L$ converges to zero. Decomposing the $r \times r$ matrix $P_L^{r,r}$ as
    \[P_L^{r,r} = Q_L + R_L\]
    where $Q_L$ is lower-triangular with all-1 diagonal and $R_L$ is upper triangular with all-0 diagonal, spectral separation shows once more that $R_L$ converges to 0. A computation performed in Lemma \ref{lem:lemma in alignment (c)} below yields
    \[C^{r,r} = Q_L \Sigma_CQ_L^{\top} + o(1).\]
    In particular, $Q_L \Sigma_CQ_L^{\top}$ is the Cholesky decomposition for the positive definite matrix $C^{r,r} + o(1)$, which converges to the positive definite matrix $C^{r,r}$. By continuity and uniqueness of the Cholesky decomposition for positive definite matrices, we find
    \[Q_L \xrightarrow{L \to \infty} T_C\]
    and hence $P_L^{r,r}$ converges to $T_C$. Thus
    \[(S_L^{r,r})^{-1} U_L^{r,r} S_L^{r,r} \xrightarrow{L \to \infty} T_C\]
    and the remaining convergence is obtained by taking transpose and inverse.
\end{proof}

\begin{lemma}\label{lem:lemma in alignment (c)}
    With notations of the proof of Lemma \ref{lem:alignment (c)},
    \[C^{r,r} = Q_L \Sigma_CQ_L^{\top} + o(1).\]
\end{lemma}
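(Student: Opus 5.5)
We need to show $C^{r,r} = Q_L\Sigma_C Q_L^\top + o(1)$, starting from \eqref{eq:C=SLXLSL-1}. Set $D_L \coloneq S_L^{-1}\Sigma_L S_L^{-1}$, which is diagonal; this gives $C = P_L D_L P_L^\top$. The plan is to extract the top-left $r\times r$ block of this identity and show that every contribution other than $Q_L\Sigma_C Q_L^\top$ vanishes. For simplicity, first assume $S_L$ is invertible, as in the proof of Lemma~\ref{lem:alignment (c)}. Writing $P_L$ in blocks $\begin{pmatrix} P_L^{r,r} & \varepsilon_L \\ * & * \end{pmatrix}$ and $D_L = \mathrm{diag}(D_L^{(1)}, D_L^{(2)})$, the top-left block reads
\[
C^{r,r} = P_L^{r,r} D_L^{(1)} (P_L^{r,r})^\top + \varepsilon_L D_L^{(2)} \varepsilon_L^\top .
\]

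\textbf{First term.} By Lemma~\ref{lem:alignment (a) (b)}, $D_L^{(1)} \to \Sigma_C$. Write $P_L^{r,r} = Q_L + R_L$ with $R_L \to 0$. The task is then to show that $P_L^{r,r}$ stays bounded, i.e.\ that $Q_L$ is bounded. The entries of $Q_L$ below the diagonal are $(U_L)_{j,i}\, s_{i,L}/s_{j,L}$ with $j>i$, and these are not a priori bounded. I would get boundedness from the identity itself. Since $C^{r,r}$ is fixed and $D_L^{(1)}$ is bounded below by a positive diagonal matrix (each $\ell_i>0$ because $C^{r,r}$ is positive definite), the quantity
\[
\mathrm{tr}\bigl(P_L^{r,r} D_L^{(1)} (P_L^{r,r})^\top\bigr) \le \mathrm{tr}\, C^{r,r}
\]
is bounded. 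This requires the second term to be positive semidefinite, so I need $D_L^{(2)} \succeq 0$. That holds if $X_L$ is PSD, which is the case of interest since $C = BB^\top$. In the general symmetric case I would instead bound $D_L^{(2)}$ directly (see below). Boundedness of $P_L^{r,r}$ then gives
\[
P_L^{r,r} D_L^{(1)} (P_L^{r,r})^\top = Q_L\Sigma_C Q_L^\top + o(1).
\]

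\textbf{Second term.} The entries of $\varepsilon_L D_L^{(2)} \varepsilon_L^\top$ are sums over $k>r$ of
\[
(U_L)_{i,k}^2\, \frac{s_{k,L}^2}{s_{i,L}^2}\,\frac{\sigma_{k,L}}{s_{k,L}^2} = (U_L)_{i,k}^2\, \frac{\sigma_{k,L}}{s_{i,L}^2},
\]
with $i\le r<k$. It therefore suffices to show $|\sigma_{k,L}|/s_{r,L}^2 \to 0$ for $k>r$. This follows from Lemma~\ref{lem:alignment (a) (b)}: on one hand $|\sigma_{k,L}| \le |\sigma_{r+1,L}| = o(\sigma_{r,L})$, and on the other $\sigma_{r,L}\sim \ell_r s_{r,L}^2$. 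Since $|(U_L)_{i,k}|\le 1$, the whole term is $o(1)$.

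This also removes the invertibility issue. Written in this form, $\frac{\sigma_{k,L}}{s_{i,L}^2}$ only involves $s_{i,L}$ with $i\le r$, which are nonzero for large $L$, so one can work directly with $U_L$ entries and never invert $s_{k,L}$ for $k>r$. The same bound handles the non-PSD case, because the second term is then $o(1)$ without any sign argument and the trace bound still gives boundedness of $P_L^{r,r}$.

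\textbf{Main obstacle.} I expect the boundedness of $Q_L$ to be the delicate step, since it is not implied by spectral separation alone. The plan is to handle it with the trace/positive-definiteness argument above. Once it is in place, combining the two terms gives $C^{r,r} = Q_L\Sigma_C Q_L^\top + o(1)$.
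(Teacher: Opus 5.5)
Your proof is correct and follows the paper's route. You use the same block split of $C = P_L D_L P_L^{\top}$, and you bound the second term by the same estimate $(|\sigma_{r+1,L}|/\sigma_{r,L})(\sigma_{r,L}/s_{r,L}^2)\to 0$. One small correction: the general $(i,j)$ entry of that term is $\sum_{k>r}(U_L)_{i,k}(U_L)_{j,k}\,\sigma_{k,L}/(s_{i,L}s_{j,L})$, not the squared diagonal form you wrote, but the same bound applies since $s_{i,L}s_{j,L}\ge s_{r,L}^2$.

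Your trace argument is a genuine addition. The paper replaces $P_L^{r,r}D_{1,L}(P_L^{r,r})^{\top}$ by $Q_L\Sigma_C Q_L^{\top}+o(1)$ without checking that $Q_L$ stays bounded, which is needed because its subdiagonal entries are of $0\cdot\infty$ type. Your bound supplies exactly this: the second term is $o(1)$ independently of $P_L$, and $D_{1,L}\succeq cI$ eventually because each $\ell_i>0$. Your observation that only $s_{1,L},\dots,s_{r,L}$ need to be inverted also removes the paper's tacit assumption that $S_L$ is invertible.
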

\begin{proof}
    Define the diagonal blocks
    \[S_L^{-1} \Sigma_L S_L^{-1} =
    \begin{pmatrix}
        D_{1,L} & 0 \\
        0 & D_{2,L}
    \end{pmatrix}.\]
    From Lemma \ref{lem:alignment (a) (b)} we have $D_{1,L} = \Sigma_C + o(1)$. Plugging into \ref{eq:C=SLXLSL-1} yields
    \[C^{r,r} = P_L^{r,r} D_{1,L} {P_L^{r,r}}^{\top} + \varepsilon_L D_{2,L} \varepsilon_L^{\top} = Q_L \Sigma_C Q_L^{\top} + \varepsilon_L D_{2,L} \varepsilon_L^{\top} + o(1).\]
    Thus it suffices to show that $\varepsilon_L D_{2,L} \varepsilon_L^{\top}$ converges to zero. This is an $r \times r$-matrix with coefficient in position $(i,j)$ given by
    \[(\varepsilon_L D_{2,L} \varepsilon_L^{\top})_{i,j} = \sum_{k=1}^{n-r} \frac{\sigma_{r+k}}{s_{i,L} s_{j,L}} (e_i^{\top} u_{r+k,L})(e_j^{\top} u_{r+k,L}).\]
    Notice that the terms $(e_i^{\top} u_{r+k,L})(e_j^{\top} u_{r+k,L})$ are bounded by 1, and that
    \[\frac{|\sigma_{r+k}|}{s_{i,L} s_{j,L}} \le \frac{|\sigma_{r+1}|}{\sigma_{r,L}} \cdot \frac{\sigma_{r,L}}{s_{r,L}^2} \xrightarrow{L \to \infty} 0\]
    according to Lemma \ref{lem:alignment (a) (b)}. Hence the $(i,j)$-coefficient of $\varepsilon_L D_{2,L} \varepsilon_L^{\top}$ converges to zero.
\end{proof}

\newpage
\section{Proofs of jacobian singular value dynamics}\label{app:proofs of sing val dynamics}
\subsection{Preliminaries}

\begin{lemma}
    \label{lem:dot_J}
    Consider an FGLN $J(t)=W_L(t) D_{L-1}\cdots D_1 W_1(t)$ trained by gradient flow on the weights $(W_\ell(t))_{\ell=1}^L$ with loss $\loss(J)$.
    Define $M_\ell(t) \coloneq D_\ell W_\ell(t)$ for $\ell=1,\dots,L$.
    Then:
    $$ \dot{J} = - \sum_{\l=1}^L M_{\l+1:L} D_\l M_{\l+1:L}^\top \nabla_J \loss(J) M_{1:\l-1}^\top M_{1:\l-1} $$
\end{lemma}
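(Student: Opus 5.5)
The plan is to differentiate $J$ with the product rule, layer by layer, and then substitute the gradient flow $\dot W_\ell = -\nabla_{W_\ell}\loss$, using the gradient expression already computed in Lemma~\ref{lem:grad_W}. With $M_\ell = D_\ell W_\ell$ and $D_L = I$, we have $J = M_{1:L}$. For each $\ell$ we isolate the $\ell$-th weight by writing
\[
J = A_\ell W_\ell B_\ell, \qquad A_\ell \coloneq M_{\ell+1:L} D_\ell, \qquad B_\ell \coloneq M_{1:\ell-1}.
\]
Only $W_\ell$ in this factorization carries a time derivative attributable to layer $\ell$. The Leibniz rule on the $L$ time-dependent factors $W_1,\dots,W_L$ then gives
\[
\dot J = \sum_{\ell=1}^L A_\ell \dot W_\ell B_\ell .
\]

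Next we compute $\nabla_{W_\ell}\loss$. The argument in the proof of Lemma~\ref{lem:grad_W} applies verbatim to the factorization $J = A_\ell W_\ell B_\ell$: by the chain rule, $\langle \nabla_{W_\ell}\loss, H\rangle = \langle \nabla_J\loss, A_\ell H B_\ell\rangle$ for every $H$. This yields $\nabla_{W_\ell}\loss = A_\ell^\top \nabla_J\loss\, B_\ell^\top$, and hence
\[
\dot W_\ell = -A_\ell^\top \nabla_J\loss(J)\, B_\ell^\top .
\]
Substituting this into the product-rule expression gives
\[
\dot J = -\sum_\ell A_\ell A_\ell^\top \nabla_J\loss\, B_\ell^\top B_\ell .
\]
The right factor is already $M_{1:\ell-1}^\top M_{1:\ell-1}$, as in the statement.

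It remains to simplify the left factor:
\[
A_\ell A_\ell^\top = M_{\ell+1:L} D_\ell D_\ell^\top M_{\ell+1:L}^\top = M_{\ell+1:L} D_\ell^2 M_{\ell+1:L}^\top .
\]
This is the only step needing care, since the statement writes $D_\ell$ in place of $D_\ell^2$. For masked gates (entries in $\{0,1\}$) $D_\ell$ is an idempotent diagonal projector, so $D_\ell^2 = D_\ell$ and the stated formula follows. For general fixed gates the same computation gives the formula with $D_\ell^2$, and I would record this caveat explicitly.

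Two boundary cases need checking against the conventions of the Overview:
\begin{itemize}
\item For $\ell = L$, $M_{L+1:L} = I$ and $D_L = I$, so $A_L = I$.
\item For $\ell = 1$, $B_1 = M_{1:0} = I$.
\end{itemize}
With these, the end terms of the sum are consistent with the general one.
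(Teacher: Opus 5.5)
Your proof is correct and follows the same route as the paper: the product rule over the layers, then substituting $\dot W_\ell = -\nabla_{W_\ell}\loss$ obtained from the chain-rule computation of Lemma~\ref{lem:grad_W}. Your caveat is also right: the computation gives $M_{\ell+1:L} D_\ell^2 M_{\ell+1:L}^\top$, and the paper's one-line proof silently uses $D_\ell^2 = D_\ell$, so the formula holds as stated only for masked (idempotent) gates and in general has $D_\ell^2$ in place of $D_\ell$.
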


\begin{proof}
    $$\dot J = \sum_{\l=1}^L M_{\l+1:L} \dot{M}_\l M_{1:\l-1} $$   And for all $\l$, $ \dot{M}_\l = D_\l \dot W_\l = - D_\l \nabla_{W_\l}\loss$
    And using lemma \ref{lem:grad_W} we get the result.
\end{proof}

The derivative of singular values are related to the derivative of the Jacobian as follows.

\begin{lemma}
    \label{lem:dot_sk}
    Let $J$ a matrix depending smoothly on time.
    Let $J = USV^{\top}$ be its singular value decomposition.
    Then:
        $$u_k^\top\dot{J}v_k = \dot{s_k}$$
\end{lemma}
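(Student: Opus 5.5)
We differentiate the SVD identity $S = U^\top J V$ entrywise. Write $s_k = u_k^\top J v_k$, where $u_k$ and $v_k$ are the $k$-th columns of $U$ and $V$, chosen to depend smoothly on time. This is possible locally whenever $s_k$ is simple; at repeated singular values one can restrict to an open dense set of times, or use an analytic choice of the decomposition. The product rule gives
\[
\dot s_k = \dot u_k^\top J v_k + u_k^\top \dot J v_k + u_k^\top J \dot v_k .
\]
The plan is to show that the two outer terms vanish, so that only $u_k^\top \dot J v_k$ survives.

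For the first term, use $J v_k = s_k u_k$, so $\dot u_k^\top J v_k = s_k\, \dot u_k^\top u_k$. Since $u_k^\top u_k = 1$ for all $t$, differentiating gives $2\,\dot u_k^\top u_k = 0$, and the term vanishes. For the third term, use $u_k^\top J = s_k v_k^\top$ and $v_k^\top v_k = 1$ in the same way, so $u_k^\top J \dot v_k = s_k\, v_k^\top \dot v_k = 0$. Combining the two, $\dot s_k = u_k^\top \dot J v_k$.

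For rectangular $J$ (see Remark \ref{rem:rectangular}) the argument is unchanged, because it only uses the relations $J v_k = s_k u_k$ and $J^\top u_k = s_k v_k$ together with the unit norms of $u_k$ and $v_k$.

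The only delicate point is justifying that $u_k$, $v_k$ and $s_k$ can be chosen differentiably. When $s_k$ is simple and nonzero this follows from standard analytic perturbation theory: it is a simple eigenvalue of the smooth symmetric matrix $J^\top J$ (after taking square roots), so $s_k$ and $v_k$ are smooth and $u_k = J v_k / s_k$ is smooth. Otherwise we interpret the identity on the open set of times where the relevant singular values are simple. There is also the case $s_k = 0$, where $u_k$ need not be defined by $J v_k / s_k$. There we fix any smooth choice of orthonormal frames; the first and third terms then vanish trivially since $s_k = 0$.
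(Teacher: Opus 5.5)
Your argument is correct and is essentially the paper's. The paper differentiates $J = USV^\top$ and sandwiches with $u_k^\top(\cdot)v_k$, whereas you differentiate $s_k = u_k^\top J v_k$ directly; both then eliminate the extra terms using $J v_k = s_k u_k$, $J^\top u_k = s_k v_k$ and $u_k^\top \dot u_k = v_k^\top \dot v_k = 0$.

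Your regularity discussion goes beyond the paper, which silently assumes a differentiable SVD. One correction: the set of times where $s_k$ is simple is open but need not be dense (e.g.\ $J(t) \equiv I$). In such cases you must rely on a smooth or analytic choice of singular vectors rather than on restricting to simple times.
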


\begin{proof}
    \begin{equation*}
        \dot{J} = \dot{U}S V^\top + U \dot{S} V^\top + U S \dot{V}^\top
    \end{equation*}
    Hence
    \begin{equation*}
        u_k^\top\dot{J}v_k = u_k^\top\dot{u_k}s_k + \dot{s_k} + s_k \dot{v_k^\top}v_k
    \end{equation*}
    However $u_k^\top\dot{u_k} = \frac{1}{2}\frac{d}{dt}||u_k||^2_2 = 0$ and similarly $\dot{v_k^\top}v_k$ = 0 thus the result.
\end{proof}

\subsection{Proof of Proposition \ref{prop:singular_propto_fixed_gates}}

\begin{proof}[Proof of Proposition \ref{prop:singular_propto_fixed_gates}]
    \label{proof:prop_singular_propto_fixed_gates}
    We work at a fixed time $t$. Note that for all $\l$ we have $J = A_{\l} W_{\l} B_{\l}$. Applying Lemmas \ref{lem:dot_J} and \ref{lem:dot_sk} yields
    $$\dot s_k = -\left( \sum_{\l=1}^L U^{\top}A_{\l} A_{\l}^{\top} \nabla_J\loss \, B_{\l}^{\top} B_{\l} V\right)_{k,k}$$
    Incorporating our notations and assumptions, this becomes
    \[\dot s_k =-\sum_{\l =1}^L (U^{\top}U_{A_{\l}} S_{A_{\l}}^2 U_{A_{\l}}^{\top} U \left( U^{\top} \nabla_J \loss \, V \right) V^{\top} V_{B_{\l}} S_{B_{\l}}^2 V_{B_{\l}}^{\top} V)_{k,k}.\]
    Since we only care about the first order in $\varepsilon$ we may assume $U^{\top}U_{A_{\l}} = V^{\top} V_{B_{\l}} = I$ by assumption \emph{(ii)}. In this case the equation becomes
    \[\dot s_k \overset{\varepsilon \to 0}{\sim} -\sum_{\l =1}^L s_{k,A_{\l}}^2 s_{k,B_{\l}}^2 \left( U^{\top} \nabla_J \loss \, V \right)_{k,k}\]
    Assumption \emph{(i)} implies that $s_{k,A_{\l}} s_{k,B_{\l}} = e^{(1+\frac{1}{L})\delta_k} s_k^{1-\frac{1}{L}}$. Thus:
    $$\dot s_k \overset{\varepsilon \to 0}{\sim} -e^{(2+\frac{2}{L})\delta_k} s_{k}^{2 - \frac{2}{L}} \sum_{\l =1}^L\left( U^{\top} \nabla_J \loss \, V \right)_{k,k} = -e^{(2+\frac{2}{L})\delta_k} L s_{k}^{2 - \frac{2}{L}} \left( U^{\top} \nabla_J \loss \, V \right)_{k,k}$$
    The conclusion follows from the observation that
    \[\left( U^{\top} \nabla_J \loss \, V \right)_{k,k} = \inner{\nabla_{J} \loss}{u_k v_k^\top}.\]
\end{proof}

\subsection{Proof of Remark \ref{rem:singular propto}}

\begin{proof}[Proof of Remark \ref{rem:singular propto}]
    The derivative of the Jacobian reads
    \[\dot J = \sum_{\l = 1}^L A_{\l +1} \dot M_{\l} B_{\l-1}.\]
    Let $1\le k \le n$. By Lemma \ref{lem:dot_sk} we can access $\dot s_k$ via
    \[\dot s_k = (U^{\top} \dot J V)_{k,k} = \sum_{\l =1}^L \left( U^{\top} U_{A_{\l+1}} S_{A_{\l+1}} V_{A_{\l+1}}^{\top} \dot M_{\l} U_{B_{\l-1}} S_{B_{\l-1}} V_{B_{\l-1}}^{\top} V\right)_{k,k}.\]
    Since we only care about the first order in $\varepsilon$, by assumption \emph{(ii)} we may assume
    \[S_{A_{\l}}^{-1} U^{\top} U_{A_{\l}} S_{A_{\l}} = T_{\l}^-, \qquad S_{B_{\l}} V_{B_{\l}} V^{\top} S_{B_{\l}}^{-1} = T_{\l}^+.\]
    Then
    \begin{align*}
        \dot s_k &\overset{\varepsilon \to 0}{\sim} \sum_{\l =1}^L (S_{A_{\l+1}} T_{\l+1}^- V_{A_{\l+1}}^{\top} \dot M_{\l} U_{B_{\l-1}} T_{\l-1}^+S_{B_{\l}})_{k,k} \\
            & = \sum_{\l =1}^L s_{A_{\l+1},k}s_{B_{\l},k} (T_{\l+1}^- V_{A_{\l+1}}^{\top} \dot M_{\l} U_{B_{\l-1}} T_{\l-1}^+)_{k,k}
    \end{align*}
    Assumption \emph{(i)} ensures $s_{A_{\l+1},k}s_{B_{\l-1},k} = e^{\delta_k - \gamma_k}s_k = e^{(1+\frac{1}{L})\delta_k} s_k^{1-\frac{1}{L}}$ which is independent of $\l$ and can be factored out of the sum. This concludes the proof.
\end{proof}

\section{Diagonal correlation coefficient}\label{app:diagonal correlation}

The \emph{diagonal correlation coefficient}, denoted $\rho$, is a measure of how "diagonal" a square matrix is. It ranges from $-1$ to $1$, attains $1$ (resp. $-1$) precisely for diagonally-supported (resp. antidiagonally-supported) matrices.

Let $A$ be a nonzero $n \times n$ matrix with real entries; the computation of $\rho(A)$ is as follows. Up to taking absolute value we may assume that its coefficients are non-negative. We may now interpret $A_{ij}$ as the weight (or frequency) associated with the joint occurrence of indices $(i,j)$ in a random sampling of an $n \times n$ checkerboard. Define the coordinate random variables $(X,Y)$ with respect to this distribution:
\[\mathbb{P}(X=i, Y=j) = \frac{A_{ij}}{\sum_{k,\ell} A_{k\ell}}.\]
Then $\rho(A)$ is defined as the Pearson correlation coefficient between variables $X$ and $Y$:
\[\rho(A) = \frac{\mathrm{Cov}(X,Y)}{\sigma_X\sigma_Y}.\]
To express it purely in terms of the matrix $A$, we introduce
\[\mathbf{1} = (1,1,\dots,1)^\top, \qquad \mathbf{r_1} = (1,2,\dots,n)^\top, \qquad \mathbf{r_2} = (1^2,2^2,\dots,n^2)^\top .\]
Let $m = \mathbf{1}^\top A \mathbf{1}$ denote the total mass of $A$. With a little work, one can show that
\[\rho(A)= \frac
    {
    m \, (\mathbf{r_1}^\top A \mathbf{r_1})
    - (\mathbf{r_1}^\top A \mathbf{1})
    (\mathbf{1}^\top A \mathbf{r_1})
    }{
        \sqrt{
            \left(
            m \, (\mathbf{r_2}^\top A \mathbf{1})
            - (\mathbf{r_1}^\top A \mathbf{1})^2
            \right)
            \left(
            m \, (\mathbf{1}^\top A \mathbf{r_2})
            - (\mathbf{1}^\top A \mathbf{r_1})^2
            \right)
        }
    }.\]

\section{Additional Experiments}
\label{app:additional_xps}

\subsection{Singular value dynamics}
We train a FGLN of depth $L=10$, width $n=64$ and Bernoulli parameter $p=0.5$ on a random linear synthetic dataset. This dataset generates 1000 random input gaussian vectors and labels them through a fixed random linear map of rank 10. Both the input and output are of size $n$.

In this setup we compare the predicted top singular value trajectory versus the experimental one. The scaling factor in front of the prediction is fine-tuned to match the experiment, as it depends on several parameters that have not been considered in our study (such as the learning rate).
Denoting $g(t) = \left\langle \nabla_J \loss(J(t)),\, u_1(t) v_1(t)^\top \right\rangle$ where $u_1$ and $v_1$ are the first left and right singular vector of the full jacobian, we consider the iterative process: 
$$s(t+1) = s(t) + C s(t)^{2-\frac{2}{L}} g(t)$$
where $C$ is the fine tuned scaling factor. We observe in figure \ref{fig:singular_dynamics} that the predicted process matches the shape of the actual singular dynamics.

\begin{figure}[h]
    \centering
    \includegraphics[width=0.5\linewidth]{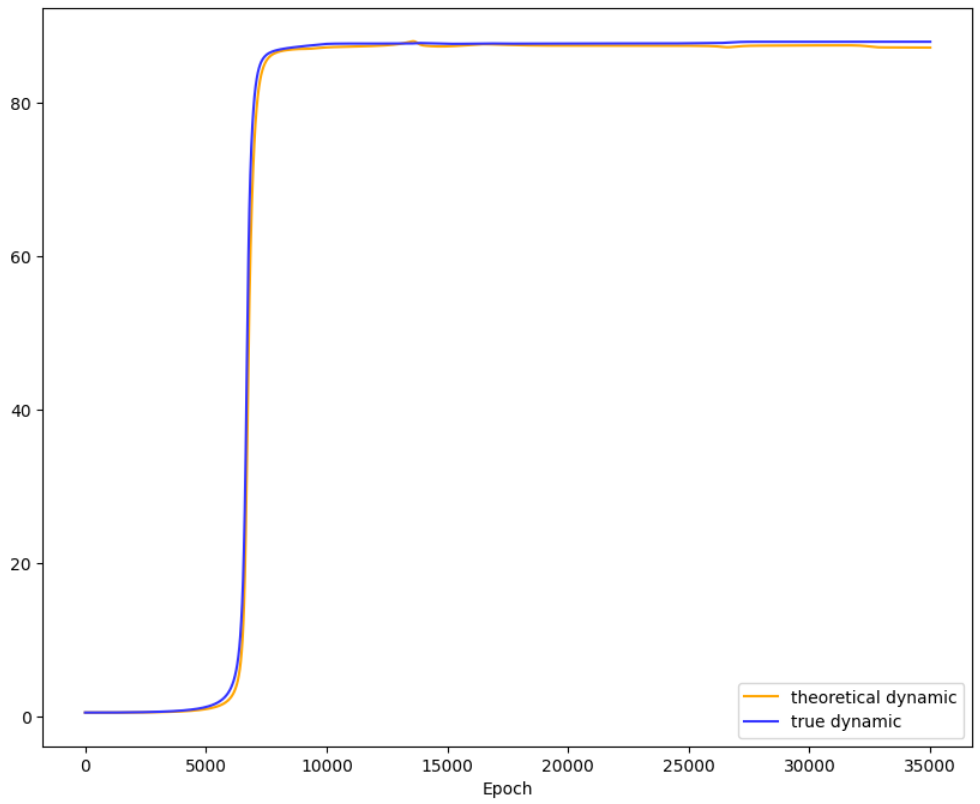}
    \caption{Theoretical versus experimental singular value dynamics for $s_1$.}
    \label{fig:singular_dynamics}
\end{figure}

\begin{figure}[h]
    \centering
    \includegraphics[width=0.5\linewidth]{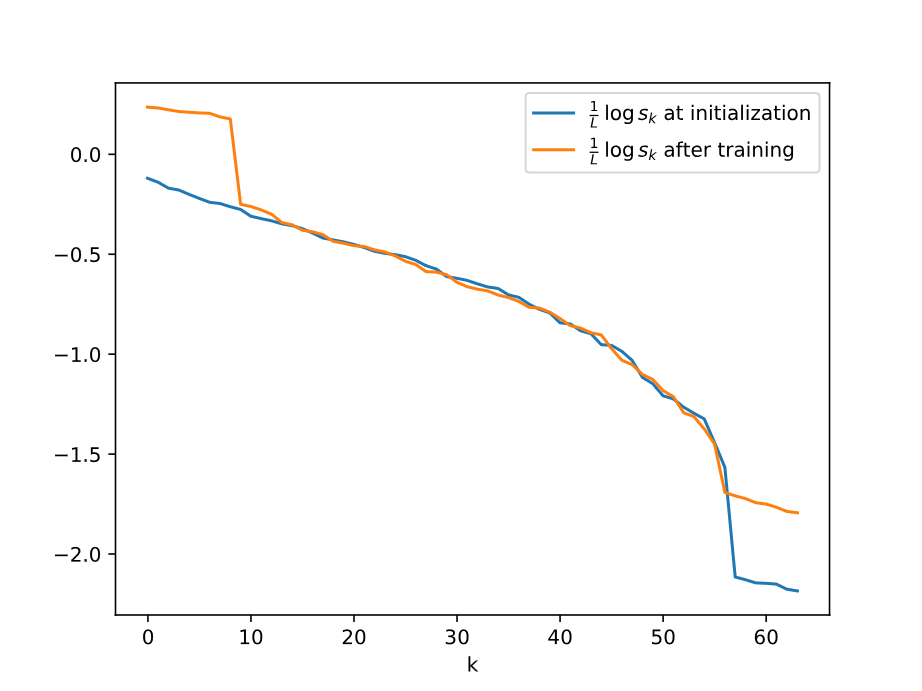}
    \caption{We compare the trained singular spectrum with respect to the one at its initialization. We observe that only the top singular values were strongly impacted by the training.}
\end{figure}

\subsection{Depth scaling and Alignment}

Here we consider a square FGLN at initialization with depth $L=20$ and varying Bernoulli parameter $p$ and width $n$.

\begin{figure}[h]
    \centering
    \includegraphics[width=0.4\linewidth]{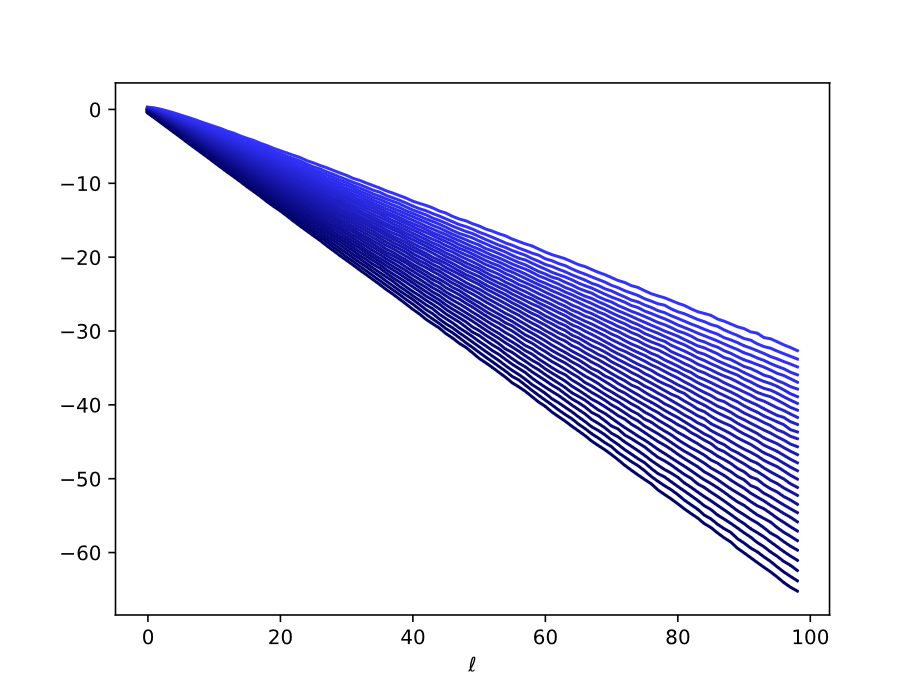}
    \includegraphics[width=0.4\linewidth]{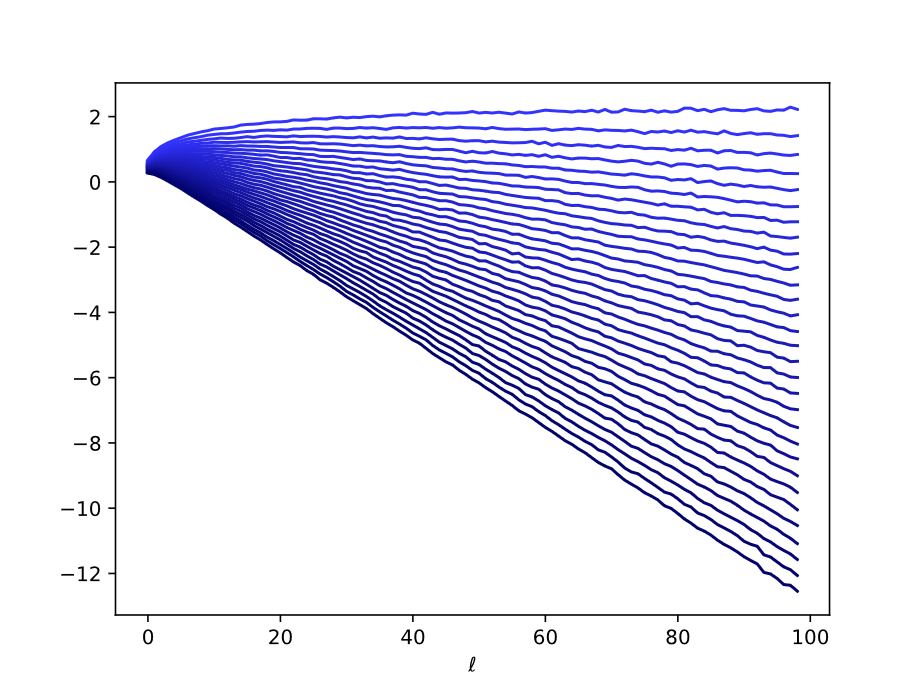}
    \caption{Top 30 singular value in an initialized FGLN with parameters $n=128$, $p=0.5$ (left) and $p=1$ (right)}
\end{figure}

We investigate the diagonal correlation of the product $U_{J_L}^{\top}A_{\l}A_{\l}^{\top}U_{J_L}$, which is found in the proof of Proposition \ref{prop:singular_propto_fixed_gates}. The clustering of $U_{J_L}^{\top}U_{A_{\l}}$ around the diagonal when $\l \ll L$ induces a similar effect for $U_{J_L}^{\top}A_{\l}A_{\l}^{\top}U_{J_L}$ by virtue of the SVD. However we see in Figure \ref{fig:diagonal_correlation_uaau_init} that diagonal clustering for this larger product also occurs to a lesser extent for $\l$ close to $L$.

\begin{figure}[h]
    \centering
    \includegraphics[width=0.34\linewidth]{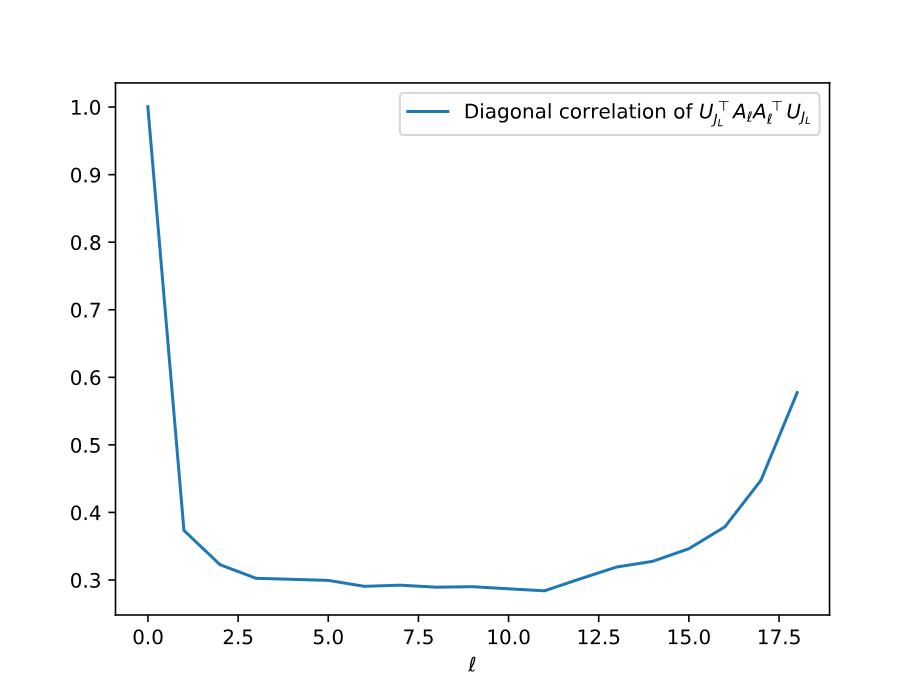}
    \includegraphics[width=0.34\linewidth]{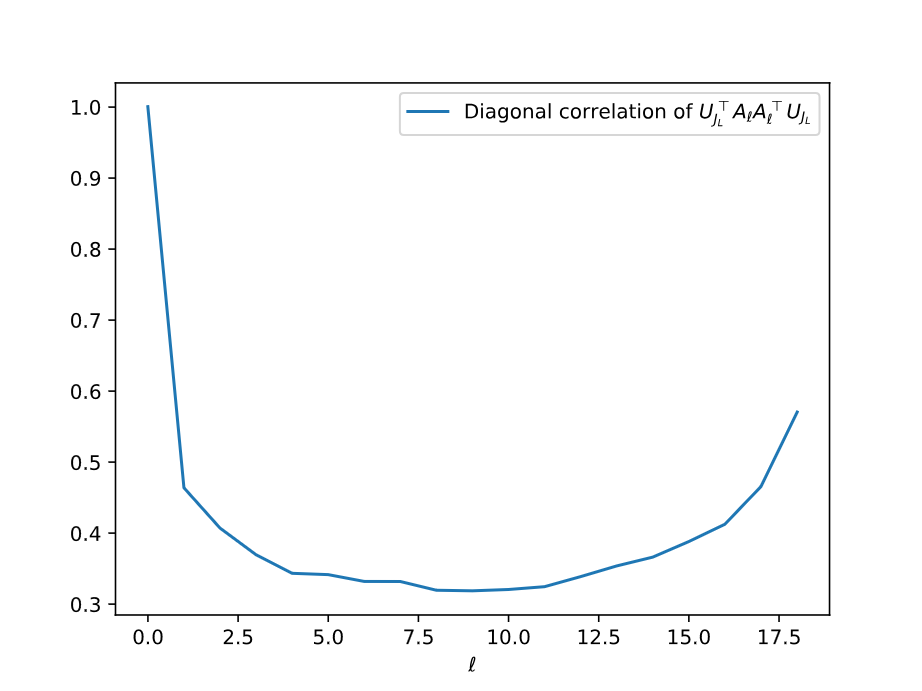}
    \includegraphics[width=0.34\linewidth]{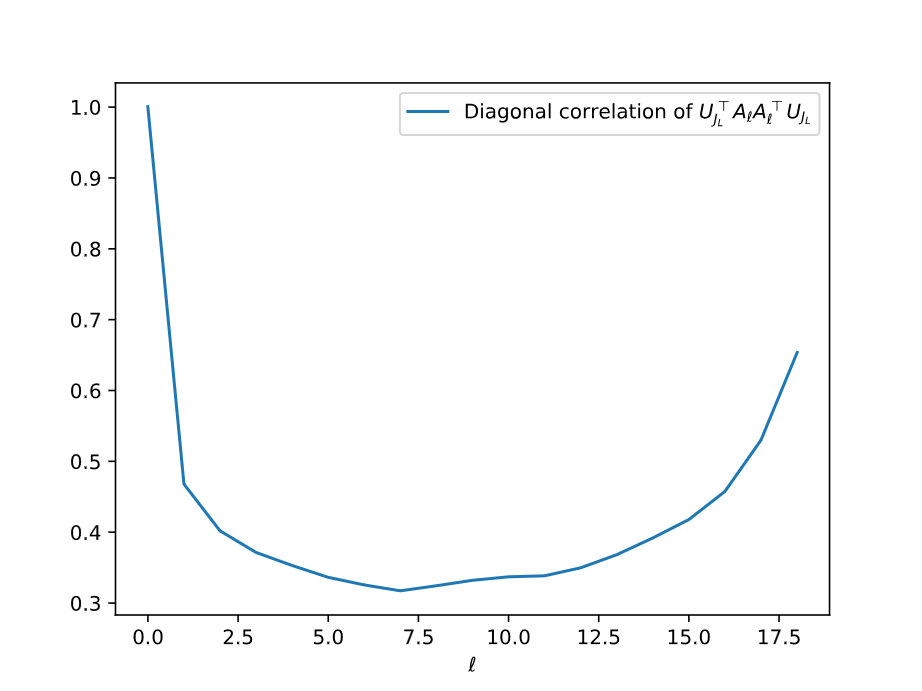}
    \includegraphics[width=0.34\linewidth]{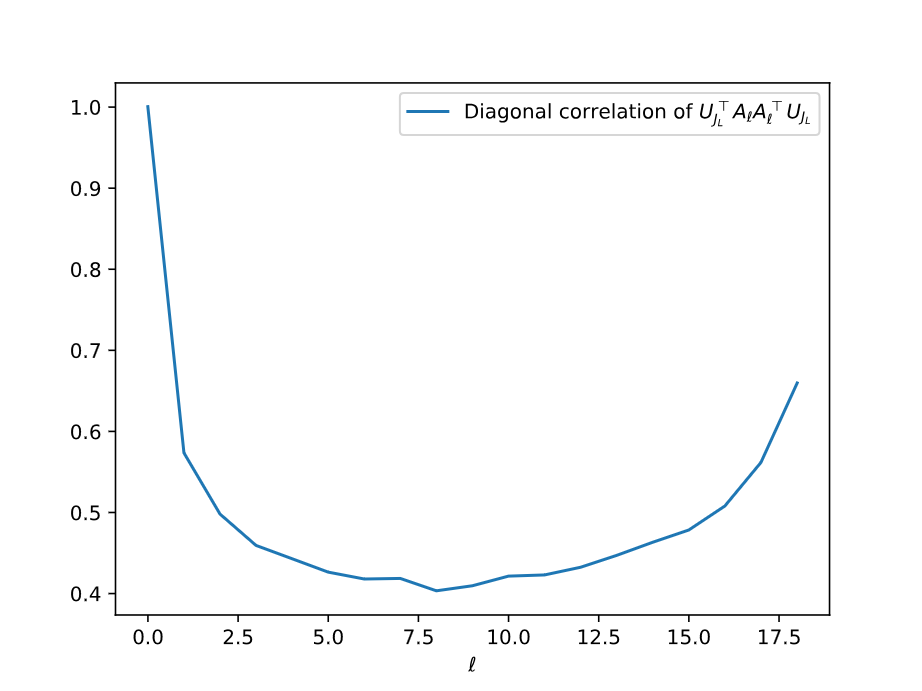}
    \caption{Diagonal correlations for the top-left $10 \times 10$ sub-matrix of $U_{J_L}^{\top} A_\l A_\l^{\top} U_{J_L}$ for parameter $p=0.5$ (first column), $p=1$ (second column), $n=64$ (first row) and $n=128$ (second row)}
    \label{fig:diagonal_correlation_uaau_init}
\end{figure}

This U-shaped profile is explained by two different mechanisms: the diagonal clustering of $U_{J_L}^{\top}U_{A_{\l}}$ when $\l \ll L$ (Figure \ref{fig:diagonal_correlation_uua_init}), as well as a diagonal clustering of $A_{\l}A_{\l}^{\top}$ for $\l \simeq L$ (Figure \ref{fig:diagonal_correlation_aa_init}). 

We interpret this second phenomenon as follows. We say that a matrix $M$ has \emph{isotropic spectrum} if its singular values $s_k$ are approximately equal. In the extreme case where $s_k = s$ for all $s$, we find $M M^{\top} = s^2 I$ to be perfectly diagonal; continuously altering the spectrum away from isotropy decreases the diagonal correlation of $MM^{\top}$. In our setting, as $\l$ decreases from $L$ to lower values the spectrum of $A_{\l} A_{\l}^{\top}$ exhibits stronger spectral separation, which explains Figure \ref{fig:diagonal_correlation_aa_init}.

\begin{figure}[h]
    \centering
    \includegraphics[width=0.34\linewidth]{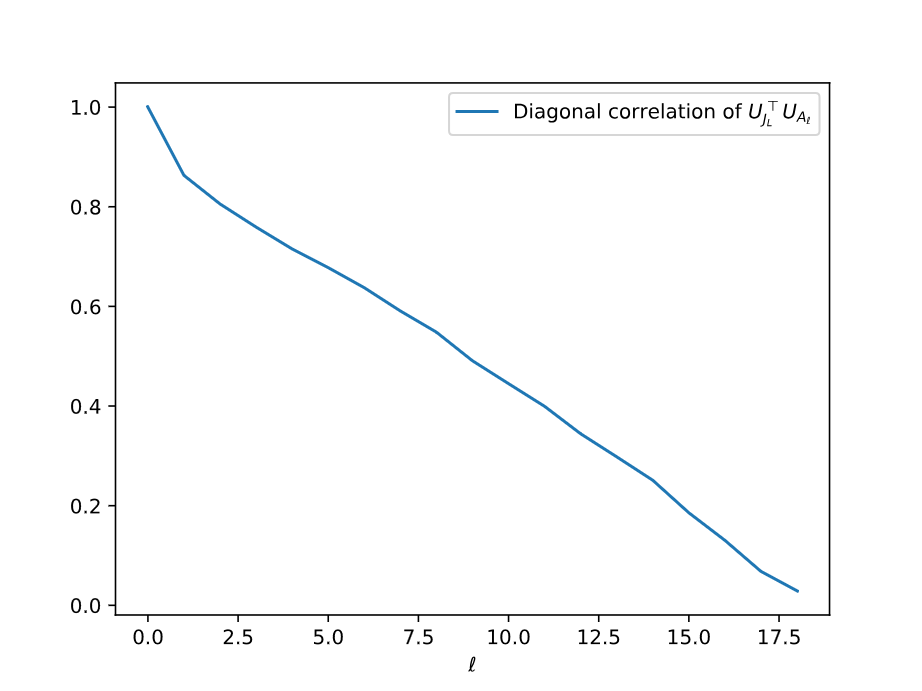}
    \includegraphics[width=0.34\linewidth]{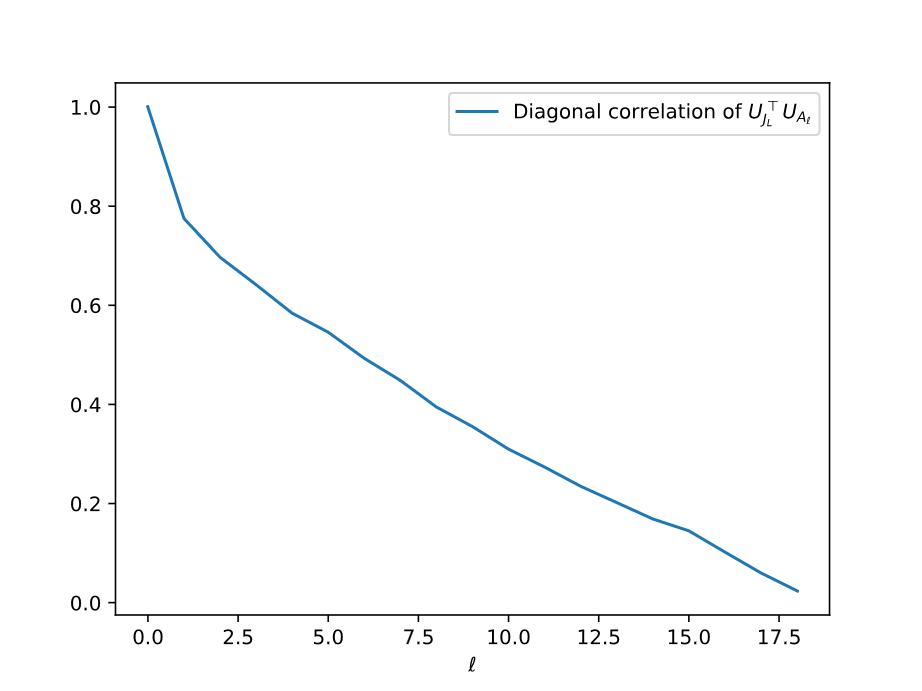}
    \includegraphics[width=0.34\linewidth]{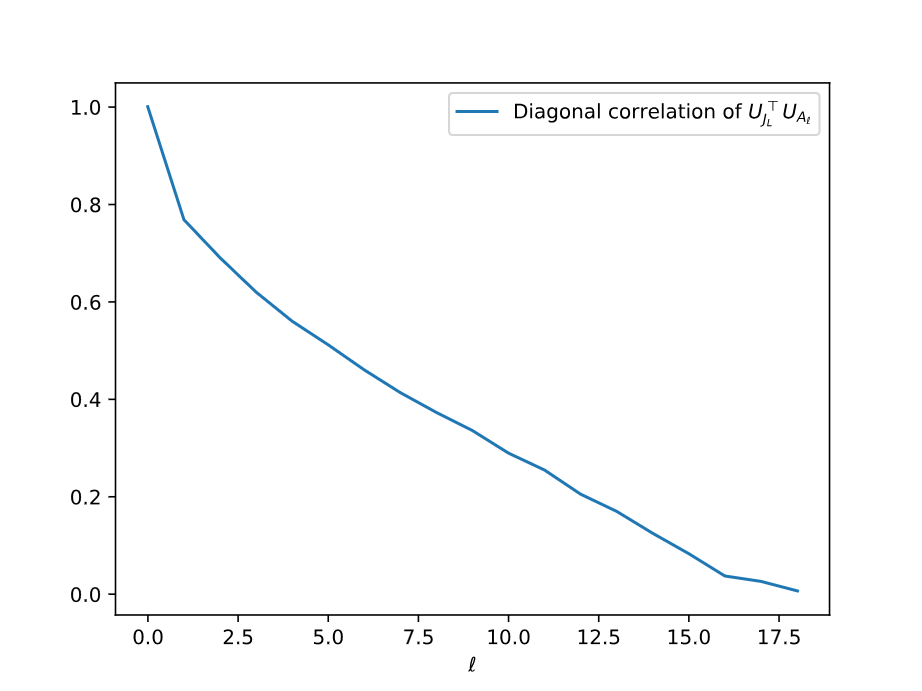}
    \includegraphics[width=0.34\linewidth]{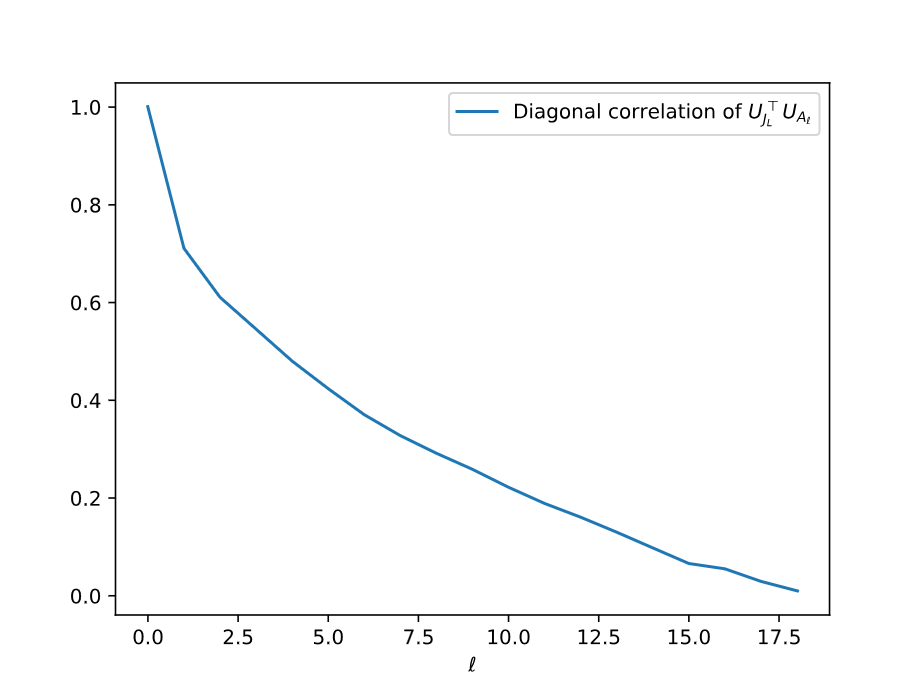}
    \caption{Diagonal correlations for the top-left $10 \times 10$ sub-matrix of $U_{J_L}^{\top} U_{A_\l}$ for parameter $p=0.5$ (first column), $p=1$ (second column), $n=64$(first row) and $n=128$(second row)}
    \label{fig:diagonal_correlation_uua_init}
\end{figure}

\begin{figure}[h]
    \centering
    \includegraphics[width=0.34\linewidth]{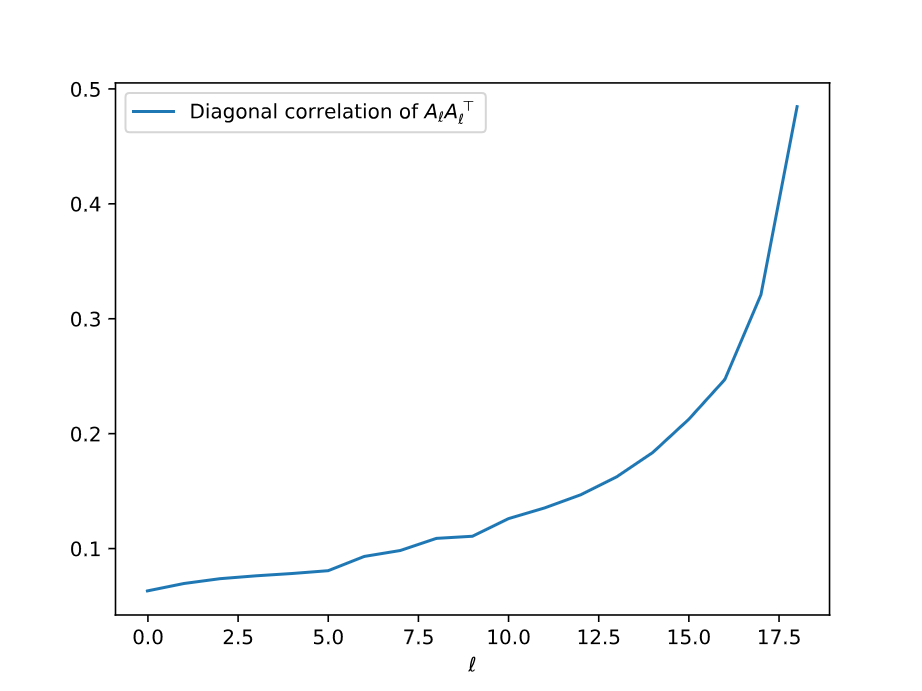}
    \includegraphics[width=0.34\linewidth]{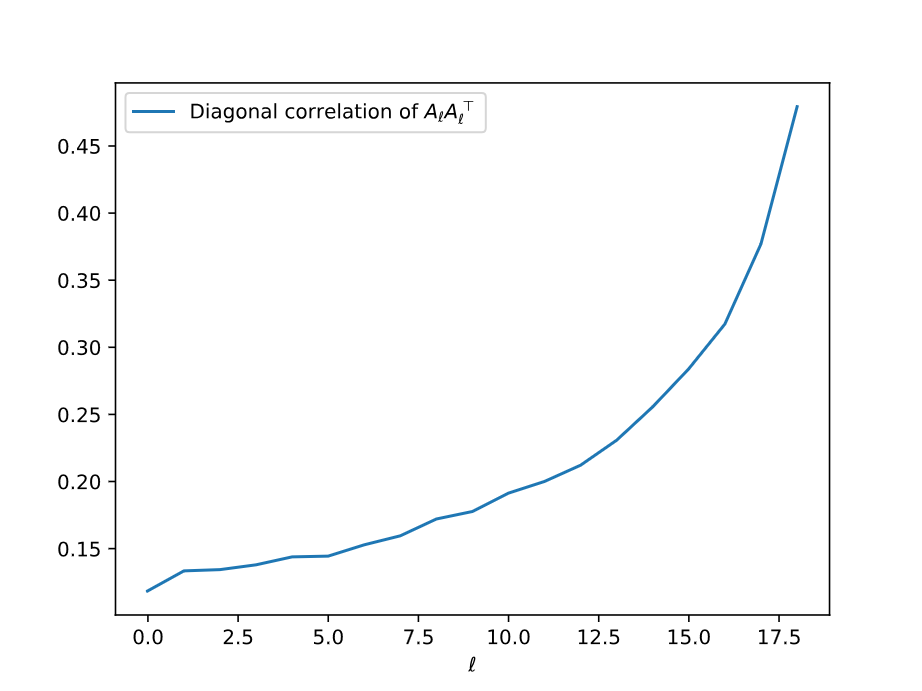}
    \includegraphics[width=0.34\linewidth]{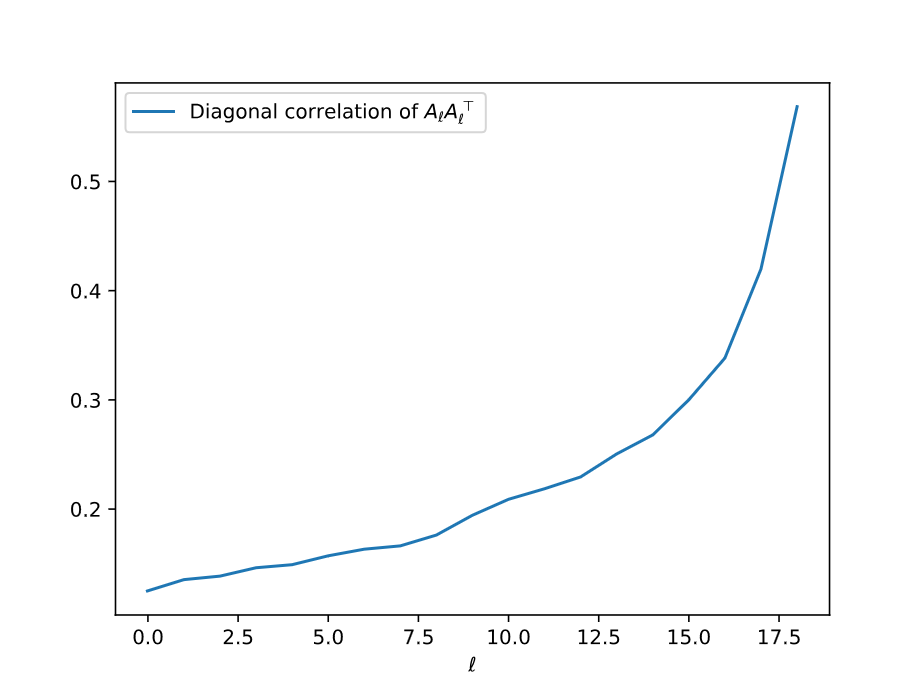}
    \includegraphics[width=0.34\linewidth]{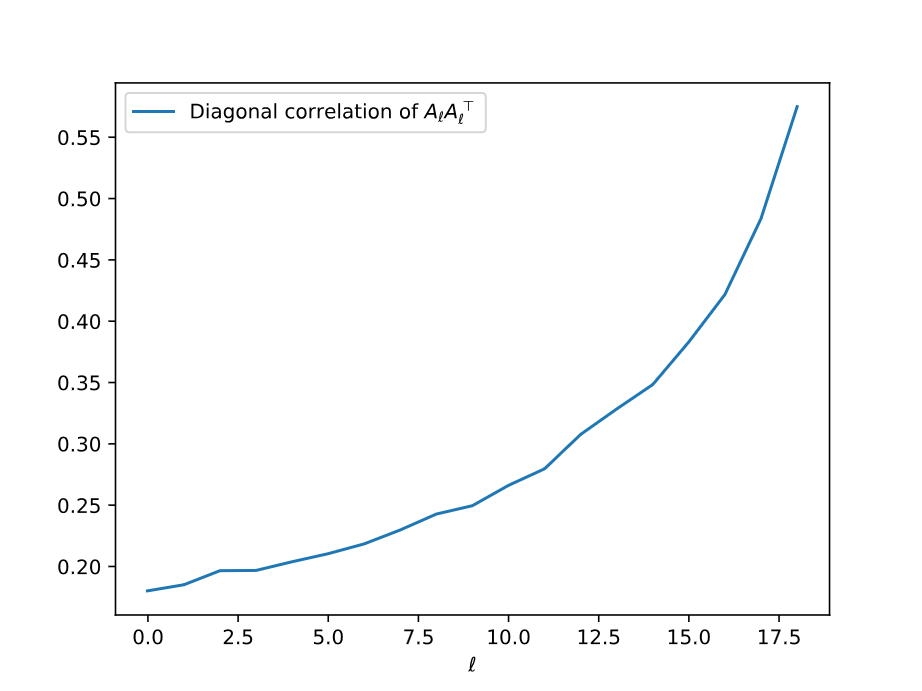}
    \caption{Diagonal correlations for the top-left $10 \times 10$ sub-matrix of $A_\l A_\l^{\top}$ for parameter $p=0.5$ (first column), $p=1$ (second column), $n=64$(first row) and $n=128$(second row)}
    \label{fig:diagonal_correlation_aa_init}
\end{figure}

%%%%%%%%%%%%%%%%%%%%%%%%%%%%%%%%%%%%%%%%%%%%%%%%%%%%%%%%%%%%%%%%%%%%%%%%%%%%%%%
%%%%%%%%%%%%%%%%%%%%%%%%%%%%%%%%%%%%%%%%%%%%%%%%%%%%%%%%%%%%%%%%%%%%%%%%%%%%%%%

\end{document}